\newtheorem{theorem}{Theorem}
\newtheorem{prop}{Proposition}
\newtheorem{corollary}{Corollary}
\newtheorem{lemma}[theorem]{Lemma}
\newtheorem{definition}{Definition}
\DeclareMathOperator{\distance}{distance}
\DeclareMathOperator{\spn}{span}
\DeclareMathOperator{\arctanh}{arctanh}
\icmltitlerunning{Explicit Gradient Learning}
\begin{document}

\twocolumn[
\icmltitle{Explicit Gradient Learning for Black-Box Optimization}




\begin{icmlauthorlist}
\icmlauthor{Mor Sinay*}{ed}
\icmlauthor{Elad Sarafian*}{ed}
\icmlauthor{Yoram Louzoun}{ed}
\icmlauthor{Noa Agmon}{ed}
\icmlauthor{Sarit Kraus}{ed}
\end{icmlauthorlist}

\icmlaffiliation{ed}{Department of Computer Science, Bar-Ilan University, Israel}

\icmlcorrespondingauthor{Mor Sinay}{ mor.sinay@gmail.com}
\icmlkeywords{Machine Learning, ICML}

\vskip 0.3in
]



\printAffiliationsAndNotice{\icmlEqualContribution}

\begin{abstract}

Black-Box Optimization (BBO) methods can find optimal policies for systems that interact with complex environments with no analytical representation. As such, they are of interest in many Artificial Intelligence (AI) domains. Yet classical BBO methods fall short in high-dimensional non-convex problems. They are thus often overlooked in real-world AI tasks. Here we present a BBO method, termed Explicit Gradient Learning (EGL), that is designed to optimize high-dimensional ill-behaved functions. We derive EGL by finding weak-spots in methods that fit the objective function with a parametric Neural Network (NN) model and obtain the gradient signal by calculating the parametric gradient. Instead of fitting the function, EGL trains a NN to estimate the objective gradient directly.
We prove the convergence of EGL in convex optimization and its robustness in the optimization of integrable functions. We evaluate EGL and achieve state-of-the-art results in two challenging problems: (1) the COCO test suite against an assortment of standard BBO methods; and (2) in a high-dimensional non-convex image generation task.

\end{abstract}

\section{Introduction}
Optimization problems are prevalent in many artificial intelligence applications, from search-and-rescue optimal deployment \cite{zhen2014simulation}, 
to triage policy in emergency rooms \cite{rosemarin2019emergency} to hyperparameter tuning in machine learning \cite{bardenet2013collaborative}. In these tasks, the objective is to find a policy that minimizes a cost or alternatively maximizes a reward.
Evaluating the cost of a single policy is a complex and often costly process that usually has no analytical representation, e.g., due to interaction with real-world physics or numerical simulation. {\em Black-Box Optimization} (BBO) algorithms \cite{audet2017derivative,golovin2017google} are designed to solve such problems, when the analytical formulation is missing, by repeatedly querying the Black-Box function and searching for an optimal solution while minimizing the number of queries (budget).

\par{\bf Related Work} BBO algorithms have been studied in multiple fields with diverse approaches. For example, {\em derivative–free} methods \cite{rios2013derivative}, from the classic Nelder–Mead algorithm \cite{nelder1965simplex} and Powell's method \cite{powell1964efficient} to more recent evolutionary algorithms such as CMA-ES \cite{hansen2006cma}. Another line of research is {\em derivative-based} algorithms, which first approximate the gradient and then apply line-search methods such as the Conjugate Gradient (CG) Method \cite{shewchuk1994introduction} and Quasi-Newton Methods, e.g. BFGS \cite{nocedal2006numerical}. Other model-based methods such as SLSQP \cite{bonnans2006numerical} and COBYLA \cite{powell2007view} iteratively solve quadratic or linear approximations of the objective function. Some variants apply {\em trust-region} methods \cite{conn2009introduction} and iteratively find an optimum within a trusted subset of the domain \cite{audet2017derivative}. Another line of research is more focused on stochastic discrete problems, e.g. Bayesian approaches \cite{snoek2015scalable}, and multi-armed bandit problems \cite{li2016hyperband}.

\par{\bf Our contribution} In this paper, we suggest a new derivative-based algorithm, {\em Explicit Gradient Learning} (EGL), that learns a surrogate function for the gradient by averaging the numerical directional derivatives over small volumes bounded by $\varepsilon$ radius. We control the accuracy of our model by controlling the $\varepsilon$ radius parameter. We then use trust-regions and dynamic scaling of the objective function to fine-tune the convergence process to the optimal solution. This results in a theoretically guaranteed convergence of EGL to a local minimum for convex problems. We compared the performance of EGL to eight different BBO algorithms in rigorous simulations in the COCO test suite \cite{nikolaus_hansen_2019_2594848} and show that EGL outperforms all others in terms of final accuracy. EGL was further evaluated in a high-dimensional non-convex domain, involving searching for an image with specific characteristics on a pre-trained Generative Adversarial Network (GAN) \cite{pan2019recent}. EGL again outperformed existing algorithms in terms of accuracy and appearance, where other BBO methods failed to converge to a solution in reasonable time.

The paper is organized as follows: BBO background and motivation for the EGL algorithm are presented in Sections \ref{sec:Background} and \ref{sec:Motivation}. The detailed description of the EGL algorithm and its theoretical analysis are shown in Section \ref{sec:Design_Analysis}. The empirical evaluation of EGL's performance and comparison to state-of-the-art BBO algorithms are described in Section \ref{sec:Experiment}, and we conclude in Section \ref{sec:Conclusions}. The proofs for all of the theoretical statements are found in the Appendix.

\section{Background}
\label{sec:Background}

A function $f:\Omega\to\mathbb{R}$, $\Omega \subseteq \mathbb{R}^n$ is considered a Black-Box if one can evaluate $y = f(x)$ at $x\in\Omega$, but has no prior knowledge of its analytical form. A Black-Box Optimization seeks to find $x^{*}=\arg\min_{x\in\Omega}f(x)$, typically, with as few evaluations as possible \cite{audet2017derivative}. Since, in general, it is not possible to converge to the optimal value with a finite number of evaluations, we define a budget $C$ and seek to find as good a solution as possible $x^{\star}$ with less than $C$ evaluations \cite{hansen2010comparing}.

Many BBO methods operate with a two-phase iterative algorithm: (1) search or collect data with some heuristic; and (2) update a model to obtain a new candidate solution and improve the heuristic. Traditionally, BBO methods are divided into derivative-free and derivative-based methods. The first group relies on statistical models \cite{balandat2019botorch}, physical models \cite{van1987simulated}, or Evolutionary Strategies \cite{back1996evolutionary} to define the search pattern and are not restricted to continuous domains, so they can also be applied to discrete variables. Our algorithm, EGL, falls under the category of the latter group, which relies on a gradient estimation to determine the search direction \cite{bertsekas2015convex}. Formally, derivative-based methods are only restricted to differentiable functions, but here we show that EGL can be applied successfully whenever the objective function is merely locally integrable.

Recently, with the Reinforcement Learning renaissance \cite{silver2017mastering,schulman2015trust}, new algorithms have been suggested for the problem of continuous action control (e.g., robotics control). Many of these can be viewed in the context of BBO as derivative-based methods applied with NN parametric models. One of the most prominent algorithms is DDPG \cite{lillicrap2015continuous}. DDPG iteratively collects data with some (often naive) exploration strategy and then fits a local NN parametric model $f_{\theta}$ around a candidate solution $x_k$. To update the candidate, it approximates the gradient $\nabla f$ at $x_k$ with the parametric gradient $\nabla f_{\theta}$ and then applies a gradient descent step \cite{ruder2016overview}.\footnote{Since $f_{\theta}$ is a differentiable analytical function, it can be differentiated with respect to its input $x$, much like what is done in adversarial examples \cite{yuan2019adversarial}.} Algorithm \ref{alg:igo} outlines the DDPG steps in the BBO formulation. We denote it as Indirect Gradient Learning (IGL) since it does not directly learn the gradient $\nabla f$. In the next section, we will develop arguments as to why one should learn the gradient explicitly instead of using the parametric gradient.

\begin{algorithm}[]
\small{
\DontPrintSemicolon
\KwIn{$x_0$, $\alpha$, $C$}
$k=0$ \\
\While{budget $C > 0$}{
\SetKwProg{Fn}{Build Local Model:}{}{end}
\Fn{}{
\ \ Collect data $\mathcal{D}_k = \{(x_k + \varepsilon n_i, y_i)\}_{i=1}^m, \ n_i \sim \mathcal{N}(0,\mathbf{I})$ \\
Fit a model $f_{\theta_k}$ with\\
$\theta_k = \arg\min_{\theta}\sum_{i=1}^m |f_{\theta}(x_k + \varepsilon_{i}) - y_i|^2$\\
}
\SetKwProg{Fn}{Gradient Descent:}{}{end}
\Fn{}{
\ \ $x_{k+1} \leftarrow x_k - \alpha \nabla f_{\theta_k}(x_k)$ \\
$k \leftarrow k + 1$ \\
}
}
}
\KwRet{$x_k$}
\caption{Indirect Gradient Learning}
\label{alg:igo}
\end{algorithm}

\section{Motivation}
\label{sec:Motivation}

In Algorithm \ref{alg:igo} only the gradient information $\nabla f(x_k)$ is required to update the next $x_k$ candidate. However, the gradient function is never learned directly and it is only inferred from the parametric model, without any clear guarantee of its veracity. Hence we seek a method that learns the gradient function $\nabla f$ explicitly. Clearly, directly learning the gradient is infeasible since the Black-Box only outputs the $f(x)$ values. Instead, our approach would be to learn a surrogate function for $\nabla f$, termed the \textit{mean-gradient}, by sampling pairs of observations $\{(x_i, y_i), (x_j, y_j)\}_{i,j}$ and averaging the numerical directional derivatives over small volumes. This section formally defines the mean-gradient and then formulates two arguments that motivate its use, instead of $\nabla f_{\theta}$. We then illustrate these arguments using 1D and 2D examples taken from the COCO test suite. 

\begin{figure*}[ht!]
  \centering
  \includegraphics[width=1\linewidth]{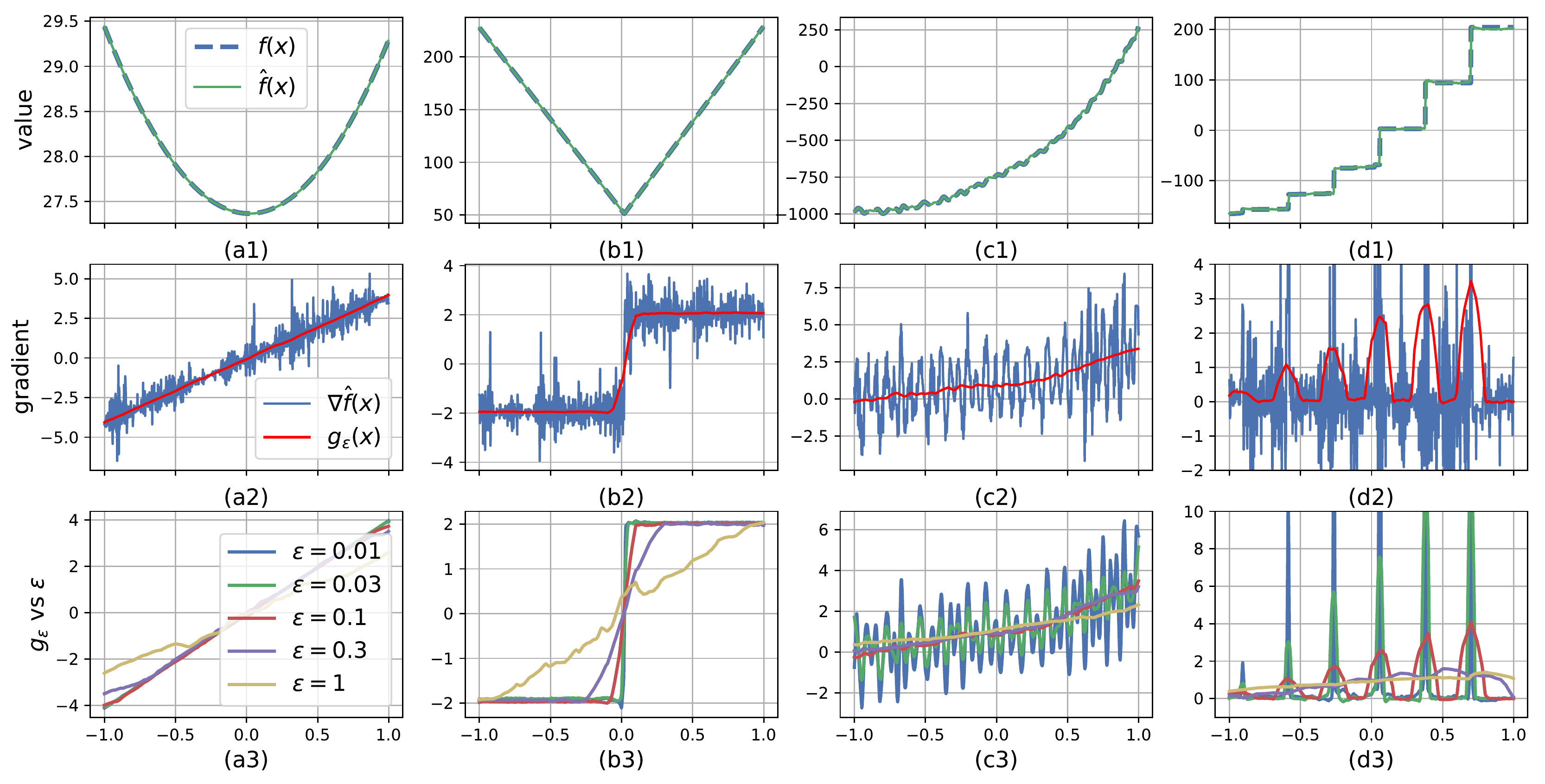}
  \caption{Comparing indirect gradient learning and explicit gradient learning for 4 typical functions: (a) parabolic; (b) piecewise linear; (c) multiple local minima; (d) step function.}
\label{fig:egl_compare}
\end{figure*}

\begin{figure*}[ht!]
  \centering
  \includegraphics[width=1\linewidth]{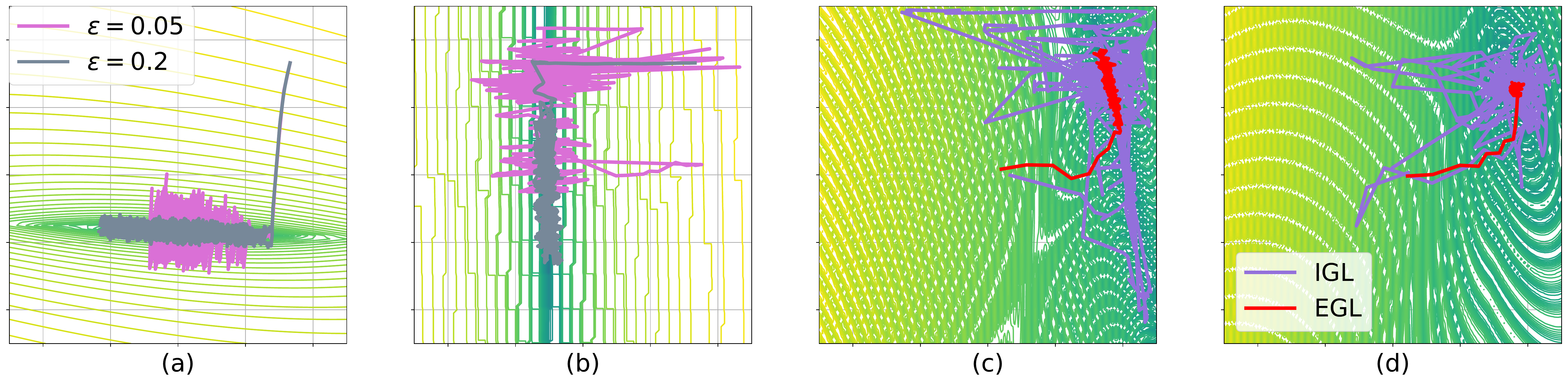}
  \caption{Visualizing explicit gradient learning with different $\varepsilon$ for various 2D problems from COCO test suite: (a) sharp-ridge problem 194; (b) step-ellipsoid problem 97. Comparing EGL and IGL: (c) Schaffer F7 C1000 problem 255; (d) Schaffer F7 C10 problem 240.}
  
\label{fig:egl_2d_compare}
\end{figure*}

\subsection{The Mean-Gradient}
\label{mean_gradient_estimator}

For any differentiable function $f$ with a continuous gradient, the first order Taylor expression is
\begin{equation}
    f(x+\tau) = f(x) + \nabla f(x) \cdot \tau + O(\|\tau\|^2).
\end{equation}
Thus, locally around $x$, the directional derivative satisfies $\nabla f(x) \cdot \tau \approx f(x+\tau) - f(x)$. We define the mean-gradient as the function that minimizes the Mean-Square-Error (MSE) of such approximations in a vicinity of $x$.
\begin{definition}
\label{mg_definition}
The mean-gradient at $x$ with $\varepsilon > 0$ averaging radius is
\begin{equation}
\label{mean_gradient}
    g_{\varepsilon}(x) = \arg\min_{g\in\mathbb{R}^n} \int\displaylimits_{V_{\varepsilon}(x)}|g\cdot \tau - f(x+\tau) + f(x)|^2 d\tau
\end{equation}
where $V_{\varepsilon}(x)\subset\mathbb{R}^n$ is a convex subset s.t. $\|x' - x\|\leq \varepsilon$ for all $x'\in V_{\varepsilon}(x)$ and the integral domain is over $\tau$ s.t. $x+\tau\in V_{\varepsilon}(x)$.
\end{definition}

\begin{prop}[controllable accuracy]
For any differentiable function $f$ with a continuous gradient, there is $\kappa_g > 0$, so that for any $\varepsilon > 0$ the mean-gradient satisfies $\|g_{\varepsilon}(x) -\nabla f(x)\| \leq \kappa_g \varepsilon$ for all $x\in\Omega$.   
\end{prop}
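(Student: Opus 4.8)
The plan is to exploit the fact that the mean-gradient solves an unconstrained strongly convex least-squares problem, so it is pinned down by a single linear normal equation, and then to compare that equation with the one $\nabla f(x)$ itself satisfies. Write $W = W_\varepsilon(x) = \{\tau : x+\tau\in V_\varepsilon(x)\}$, which is convex with $\|\tau\|\le\varepsilon$ for all $\tau\in W$, and set $\delta(\tau)=f(x+\tau)-f(x)$. Differentiating the quadratic objective in \eqref{mean_gradient} with respect to $g$ and equating to zero gives
\begin{equation}
M_\varepsilon(x)\,g_\varepsilon(x)=b_\varepsilon(x),\qquad M_\varepsilon(x)=\int_{W}\tau\tau^{\top}\,d\tau,\qquad b_\varepsilon(x)=\int_{W}\delta(\tau)\,\tau\,d\tau,
\end{equation}
where $M_\varepsilon(x)$ is symmetric positive definite whenever $W$ is full-dimensional, which I assume of the averaging regions (otherwise the $\arg\min$ is not even unique).

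Next I would note that $\nabla f(x)$ is precisely the minimizer one would obtain after replacing $\delta$ by its first-order part, i.e.\ $M_\varepsilon(x)\nabla f(x)=\int_{W}(\nabla f(x)\cdot\tau)\,\tau\,d\tau$. Subtracting the two identities, the error is driven entirely by the Taylor remainder $r(\tau)=f(x+\tau)-f(x)-\nabla f(x)\cdot\tau$:
\begin{equation}
M_\varepsilon(x)\bigl(g_\varepsilon(x)-\nabla f(x)\bigr)=\int_{W}r(\tau)\,\tau\,d\tau .
\end{equation}
By the integral form of Taylor's theorem, $r(\tau)=\int_0^1\bigl(\nabla f(x+t\tau)-\nabla f(x)\bigr)\cdot\tau\,dt$; using the continuity of $\nabla f$ — in fact, as the $O(\|\tau\|^2)$ expansion written above presumes, its Lipschitz continuity with some constant $L$ — together with $\|\tau\|\le\varepsilon$ on $W$, we get $|r(\tau)|\le\tfrac{L}{2}\|\tau\|^2\le\tfrac{L}{2}\varepsilon\|\tau\|$, hence
\begin{equation}
\bigl\|M_\varepsilon(x)\bigl(g_\varepsilon(x)-\nabla f(x)\bigr)\bigr\|\le\frac{L\varepsilon}{2}\int_{W}\|\tau\|^2\,d\tau=\frac{L\varepsilon}{2}\operatorname{tr}\bigl(M_\varepsilon(x)\bigr).
\end{equation}

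Finally I would invert $M_\varepsilon(x)$, obtaining $\|g_\varepsilon(x)-\nabla f(x)\|\le\|M_\varepsilon(x)^{-1}\|\cdot\tfrac{L\varepsilon}{2}\operatorname{tr}(M_\varepsilon(x))=\tfrac{L\varepsilon}{2}\operatorname{tr}(M_\varepsilon(x))/\lambda_{\min}(M_\varepsilon(x))$. The last ratio is a scale-invariant, condition-number-type quantity: the change of variables $\tau=\varepsilon\sigma$ shows $M_\varepsilon(x)=\varepsilon^{n+2}M_0$ with $M_0=\int_{W/\varepsilon}\sigma\sigma^{\top}\,d\sigma$ depending only on the normalized shape of the averaging region, so $\operatorname{tr}(M_\varepsilon(x))/\lambda_{\min}(M_\varepsilon(x))=\operatorname{tr}(M_0)/\lambda_{\min}(M_0)$ — independent of $\varepsilon$, and independent of $x$ as long as the regions are taken with a fixed shape (or at least a uniformly non-degenerate one). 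Taking $\kappa_g=\tfrac{L}{2}\operatorname{tr}(M_0)/\lambda_{\min}(M_0)$ then yields $\|g_\varepsilon(x)-\nabla f(x)\|\le\kappa_g\varepsilon$ for all $x\in\Omega$ and all $\varepsilon>0$.

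I expect the main obstacle — and the point where the hypotheses genuinely bite — to be the uniform control of $M_\varepsilon(x)^{-1}$: if $V_\varepsilon(x)$ were allowed to be an arbitrarily thin convex sliver inside the $\varepsilon$-ball, $\lambda_{\min}(M_\varepsilon(x))$ could be an arbitrarily small fraction of $\operatorname{tr}(M_\varepsilon(x))$ and no single $\kappa_g$ would work; it is exactly the self-similar-shape (or uniform non-degeneracy) reading of Definition~\ref{mg_definition} that salvages a uniform constant. A secondary, minor gap is the regularity of $\nabla f$: with $\nabla f$ merely continuous one only gets $\|g_\varepsilon(x)-\nabla f(x)\|\le c\,\omega_{\nabla f}(\varepsilon)$ for the modulus of continuity $\omega_{\nabla f}$, which still tends to $0$; the stated linear rate $\kappa_g\varepsilon$ is the $C^{1,1}$ version consistent with the $O(\|\tau\|^2)$ Taylor expansion recorded above.
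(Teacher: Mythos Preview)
Your proof is correct and actually cleaner than the paper's. Both arguments rely on the same Taylor remainder bound $|r(\tau)|\le\tfrac{L}{2}\|\tau\|^2$ and the same scale-invariance observation that the geometry of $V_\varepsilon$ contributes an $\varepsilon$-free constant; the difference is how the error is extracted. The paper works variationally: it upper-bounds the loss $\mathcal{L}(g_\varepsilon)$ by $\mathcal{L}(\nabla f(x))\le\tfrac14\kappa_f^2\varepsilon^{n+4}|V_1|$, then manufactures a lower bound on $\mathcal{L}(g_\varepsilon)$ in terms of $\|g_\varepsilon-\nabla f\|$ using the auxiliary quantity $M_1(x)=\min_{\hat n}\int_{V_1\setminus V_{1/2}}|\hat n\cdot\tau/\|\tau\||^2\,d\tau$, and finally solves the resulting quadratic inequality for $\|g_\varepsilon-\nabla f\|$. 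You instead go straight to the first-order condition $M_\varepsilon g_\varepsilon=b_\varepsilon$ with the second-moment matrix $M_\varepsilon=\int\tau\tau^\top$, subtract the identity $M_\varepsilon\nabla f=\int(\nabla f\cdot\tau)\tau$, and invert. Your route is shorter, yields an explicit constant $\kappa_g=\tfrac{L}{2}\,\mathrm{tr}(M_0)/\lambda_{\min}(M_0)$, and makes the well-conditioning requirement on $V_\varepsilon$ transparent as the condition number of $M_0$; the paper's route buries the same requirement in the positivity of its $M_1(x)$ and needs the somewhat artificial annulus $V_\varepsilon\setminus V_{\varepsilon/2}$. Your closing remarks about the $C^{1,1}$ hypothesis are also on point: the paper's appendix silently upgrades ``continuous gradient'' to Lipschitz gradient ($f\in\mathcal{C}^{1+}$) exactly as you anticipated.
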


In other words, the mean-gradient has a controllable accuracy parameter $\varepsilon$ s.t. reducing $\varepsilon$ improves the gradient approximation. As explained in Sec. \ref{Convergence}, this property is crucial in obtaining the convergence of EGL in convex problems. Unlike the mean-gradient, the parametric gradient has no such parameter and the gradient accuracy is not directly controlled. Even for zero MSE error s.t. $f_{\theta}(x_i)\equiv y_i$ for all of the samples in the replay buffer \cite{mnih2015human}, there is no guarantee of the parametric gradient accuracy. On the contrary, overfitting may severely hurt this approximation. 

Moreover, the parametric gradient can be discontinuous even when the parametric model has a Lipschitz continuous \cite{hansen1995lipschitz} gradient. For example, a commonly used NN with ReLU activation is only piecewise differentiable. This leads to very erratic gradients even for smooth objective functions. If the objective function is not smooth (e.g. for noise-like functions or in singular points), the gradient noise is exacerbated. On the other hand, the next proposition suggests that, due to the integral over $V_{\varepsilon}(x)$ that smooths the gradient, the mean gradient is smooth whenever the objective function is continuous.

\begin{prop}[continuity]
If $f(x)$ is continuous in $V$ s.t. $V_{\varepsilon}(x)\subset V$ then the mean-gradient is a continuous function at $x$.
\end{prop}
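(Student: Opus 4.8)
The plan is to use that, for each fixed $x$, the functional in Definition~\ref{mg_definition} is a strictly convex quadratic in $g$, so its unique minimizer is given in closed form by the normal equations, and then to reduce continuity of $g_\varepsilon$ to continuity of a vector-valued integral of $f$. Writing the $\tau$-domain as $D(x) = \{\tau : x+\tau\in V_\varepsilon(x)\}$ and $c_x(\tau) = f(x+\tau)-f(x)$, the functional is $J_x(g) = \int_{D(x)} |g\cdot\tau - c_x(\tau)|^2\,d\tau$; differentiating in $g$ gives the stationarity condition $A(x)\,g = b(x)$ with $A(x) = \int_{D(x)} \tau\tau^{\top}\,d\tau$ and $b(x) = \int_{D(x)} \tau\,c_x(\tau)\,d\tau$. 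Because $V_\varepsilon(x)$ is convex with nonempty interior (implicit in the phrase ``averaging radius'', and needed for the $\arg\min$ to be a well-defined function), $A(x)$ is positive definite: for $v\neq 0$, $v^{\top}A(x)v = \int_{D(x)}(v\cdot\tau)^2\,d\tau > 0$ since the hyperplane $\{v\cdot\tau = 0\}$ is Lebesgue-null. Hence $g_\varepsilon(x) = A(x)^{-1}b(x)$, and it suffices to show that $A(\cdot)$ and $b(\cdot)$ are continuous at $x$ (continuity of matrix inversion on the invertible matrices then finishes the argument).

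The core estimate I would carry out is for $b$ in the simplest geometry, where $V_\varepsilon(x) = x + D$ translates rigidly with $x$, so that $D(x)\equiv D$ is fixed, $A$ is constant, and $b(x) = \int_D \tau\,f(x+\tau)\,d\tau - f(x)\int_D\tau\,d\tau$. The second term is continuous at $x$ by the hypothesis on $f$. For the first, fix the base point, let $x'\to x$, and choose a ball around $x$ small enough that all points $x'+\tau$ with $\tau\in D$ lie in a fixed compact set $K\subset V$; on $K$, $f$ is uniformly continuous, so $\sup_{\tau\in D}|f(x'+\tau)-f(x+\tau)|\to 0$, and therefore
\[
\left\| \int_D \tau\bigl(f(x'+\tau)-f(x+\tau)\bigr)\,d\tau \right\|
\le \varepsilon\,|D|\,\sup_{\tau\in D}|f(x'+\tau)-f(x+\tau)| \longrightarrow 0
\]
as $x'\to x$ (alternatively, dominated convergence with dominating constant $2\varepsilon\sup_K|f|$). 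This yields continuity of $b$, hence of $g_\varepsilon$, at $x$.

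The main obstacle is the general case in which $V_\varepsilon(x)$ is not a translate of a fixed set, so that the integration domain $D(x)$ itself moves with $x$ and both $A(x)$ and $b(x)$ pick up boundary contributions. I would handle this by invoking the natural (and implicit) regularity of the family $\{V_\varepsilon(x)\}$ — namely that $x'\mapsto D(x')$ varies continuously in the Lebesgue measure of the symmetric difference and that these sets contain a common ball $B(0,\rho)$. Then the domain-variation terms are bounded by (measure of a thin shell)$\times\sup_K|f|$ and vanish as $x'\to x$, giving continuity of $A(\cdot)$ and $b(\cdot)$; the common ball makes $A(\cdot)$ uniformly positive definite near $x$, so $A(x)^{-1}$ varies continuously and $g_\varepsilon(x)=A(x)^{-1}b(x)$ is continuous at $x$. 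The one genuinely non-routine point is precisely this positive-definiteness of $A$, i.e.\ that the averaging volume probes directional derivatives in every direction; everything else is a uniform-continuity-on-compacts / dominated-convergence argument.
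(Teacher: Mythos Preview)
Your proposal is correct and shares the paper's core idea: expand the loss as a quadratic $g^\top A(x)\,g - 2g^\top b(x) + c(x)$, observe that $A(x)$ is positive definite whenever $V_\varepsilon(x)$ has nonempty interior, and then reduce continuity of $g_\varepsilon$ to continuity of the coefficient $b(x)$. The route differs in two places. First, you write the minimizer in closed form as $g_\varepsilon(x)=A(x)^{-1}b(x)$ and invoke continuity of matrix inversion, whereas the paper passes through the implicit function theorem applied to $F(x,g)=\nabla_g\mathcal{L}(x,g)$; your shortcut is legitimate and strictly simpler, since the stationarity equation is linear in $g$. Second, for the translate case you bound $\|b(x')-b(x)\|$ directly via uniform continuity of $f$ on a compact neighborhood, while the paper changes variables back to absolute coordinates $\tilde\tau$, splits the difference into five pieces $C_1,\ldots,C_5$ over $V_\varepsilon(x)\cap V_\varepsilon(x')$ and the two set differences, and proves a separate lemma that $|V_\varepsilon(x)\setminus V_\varepsilon(x')|\le |A_\varepsilon(x)|\cdot\|x-x'\|$. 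Your treatment of the general (non-translate) case --- controlling the symmetric-difference contribution and noting a common inscribed ball keeps $A$ uniformly positive definite --- matches the paper's Lemma~\ref{ap_a_eps} in spirit but is stated more abstractly. Net: same skeleton, but your argument is more economical.
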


The mean-gradient is not necessarily continuous in discontinuity points of $f$. One can obtain even a smoother surrogate for $\nabla f$ by slightly modifying the mean-gradient definition
\begin{equation*}
\label{perturbed_mean_gradient}
    g_{\varepsilon}^p(x) = \arg\min_{g\in\mathbb{R}^n} \iint\displaylimits_{V_{\varepsilon}(x)B_{p}(x)}|g\cdot (\tau - s) - f(\tau) + f(s)|^2  ds d\tau,
\end{equation*}

\vspace{-0.7em}

where the integral domains are $s\in B_{p}(x)$ and $\tau\in V_{\varepsilon}(x)$. Here, $B_{p}(x)\subset V_{\varepsilon}(x)$ is an $n$-ball perturbation set with $p < \varepsilon$ radius that dithers the reference point (which is fixed at $x$ in Definition \ref{mg_definition}). We term this modified version as the {\em perturbed mean-gradient}. Remarkably, while $g_{\varepsilon}^p$ is still a controllably accurate model for Lipschitz continuous gradients, as the integrand is $x$ independent, $g_{\varepsilon}^p$ is continuous whenever $f$ is merely integrable. In practice, as $g_{\varepsilon}$ is learnt with a Lipschitz continuous model, we find that both forms are continuous for integrable functions. Nevertheless, Seq. \ref{sec:Experiment} shows that $g_{\varepsilon}^p$ adds a small gain to the EGL performance. Next, we demonstrate how the EGL smoothness (as opposed to $\nabla f_{\theta}$) leads to more stable and efficient trajectories in both continuous and discontinuous objective functions.

\subsection{Illustrative Examples}

To demonstrate these properties of the mean-gradient, we consider 1D \footnote{Since COCO does not have built-in 1D problems; we generated 1D problems based on 2D problems with $f_{1D}(x):f_{2D}(x,x)$} and 2D problems from the COCO test suite. We start by examining 4 typical 1D functions: (a) parabolic; (b) piecewise linear; (c) multiple local minima; and (d) step function. We fit $f$ with a NN and compare its parametric gradient to the mean-gradient, learnt with another NN. The NN model is identical for both functions and is based on our Spline Embedding architecture (see description in Appendix Sec. \ref{sec:SplineEmb}). The results are presented in Fig. \ref{fig:egl_compare}. The \nth{1} row shows that the fit $\hat{f}$ is very strong s.t. the error is almost indistinguishable to the naked eye. Nevertheless, the calculated parametric gradient (\nth{2} row) is very noisy, even for smooth functions, as the NN architecture is piecewise linear. Occasionally, there are even spikes that change the gradient sign. Traversing the function manifold with such a function is very unstable and inefficient. On the other hand, due to the smoothing parameter $\varepsilon=0.1$ and since the NN is Lipschitz continuous, the mean-gradient is always smooth, even for singularity points and discontinuous gradients. 

In the \nth{3} row we evaluate $g_{\varepsilon}$ for different size $\varepsilon$ parameters. We see that by setting $\varepsilon$ sufficiently high, the gradient becomes smooth enough so there is no problem descending over steps and multiple local minima functions. In practice, we may use this property and start the descent trajectory with a high $\varepsilon$. This way, the optimization process does not commit too early to a local minimum and searches for regions with lower valleys. After refining $\varepsilon$ the process will settle in a local minimum, which in practice would be much lower than minima found around the initial point.

The next experiment  (Fig. \ref{fig:egl_2d_compare}) is executed on 2D problems from the COCO test suite. In Fig. \ref{fig:egl_2d_compare}(a-b) we present the gradient-descent steps with the mean-gradient and a constant $\varepsilon$. In \ref{fig:egl_2d_compare}(a) the objective has a singular minimum, similar to the $|x|$ function's minimum (Fig.\ref{fig:egl_compare}(a)). As $\varepsilon$ gets smaller, the gradients near the minimum get larger and there is a need to reduce the learning-rate ($\alpha$) in order to converge.  \ref{fig:egl_2d_compare}(b) presents a step function. Again, we observe that for a smaller $\varepsilon$ the gradient has high spikes in the discontinuity points and this leads to a noisier trajectory. For that purpose, the EGL algorithm decays both $\alpha$ and $\varepsilon$ during the learning process (see Seq. \ref{sec:Design_Analysis} for details). This lends much smoother trajectories, as can be seen in Fig. \ref{fig:egl_2d_compare}(c-d). Here we executed both EGL and IGL with the same $\alpha$, $\varepsilon$ decay pattern. We observe that the IGL trajectories are very noisy and inefficient since the parametric gradient always contains some noise. On the other hand, EGL smoothly travels through a ravine (Fig. \ref{fig:egl_2d_compare}(c)) and converges to a global minimum (Fig. \ref{fig:egl_2d_compare}(d)).

\section{Design \& Analysis}\label{sec:Design_Analysis}

In this section, we lay out the practical EGL algorithm and analyze its convergence properties.

\subsection{Monte-Carlo Approximation}
\label{mc_approx}

To learn the mean-gradient, one may evaluate the integral in Eq. (\ref{mean_gradient}) with  Monte-Carlo samples and learn a model that minimizes this term. Formally, for a model $g_{\theta}:\Omega\to\mathbb{R}^n$ and a dataset $\mathcal{D}_k=\{(x_i, y_i)\}_{i=1}^m$, define the loss function
\begin{equation}
\label{mc_objective}
    \mathcal{L}_{k,\varepsilon}(\theta) = \sum_{i=1}^m\sum_{x_j\in V_{\varepsilon}(x_i)} |(x_j - x_i) \cdot g_{\theta}(x_i) - y_j + y_i |^2
\end{equation}
and learn $\theta^{*}_k=\arg\min_{\theta}\mathcal{L}_{k,\varepsilon}(\theta)$, e.g. with gradient descent. This formulation can be used to estimate the mean-gradient for any $x$. Yet, practically, in each optimization iteration, we only care about estimating it in a close proximity to the current candidate solution $x_k$. Therefore, we assume that the dataset $\mathcal{D}_k$ holds samples only from $V_{\varepsilon}(x_k)$.

The accuracy of the learnt model $g_{\theta^*_k}$  heavily depends on the number and locations of the evaluation points and the specific parameterization for $g_{\theta}$. Still, for $f$ with a Lipschitz continuous gradient, i.e. $f\in\mathcal{C}^{+1}$, we can set bounds for the model accuracy in $V_{\varepsilon}(x_k)$ with respect to $\varepsilon$. For that purpose,  we require  a set of at least $m \geq n+1$ evaluation points in $\mathcal{D}_k$ which satisfy the following poised set definition.

\begin{definition}[poised set for regression]
Let $\mathcal{D}_k = \{(x_i, y_i)\}_{1}^{m}$, $m\geq n+1$ s.t. $x_i \in V_{\varepsilon}(x_k)$ for all $i$. Define the matrix $\tilde{X}_{i}\in\mathbb{M}^{m\times n}$ s.t. the $j$-th row is $x_i-x_j$. Now define $\tilde{X}=\begin{psmallmatrix}\tilde{X}_1^T & \cdots & \tilde{X}_m^T\end{psmallmatrix}^T$. The set $\mathcal{D}_k$ is a poised set for regression in $x_k$ if the matrix $\tilde{X}$ has rank $n$.
\end{definition}
Intuitively, a set is poised if its difference vectors $x_i-x_j$ span $\mathbb{R}^n$. For the poised set, and a constant parameterization, the solution of Eq. (\ref{mc_objective}) is unique and it is equal to the Least- Squares (LS) minimizer. If $f$ has a Lipschitz continuous gradient, then, with an admissible parametric model, the error between $g_{\theta}(x)$ and $\nabla f(x)$ can be proportional to $\varepsilon$. We formalize this argument in the following theorem.

\begin{theorem}
\label{th:regression_problem}
Let $\mathcal{D}_k$ be a poised set in $V_{\varepsilon}(x_k)$. The regression problem
\begin{equation}
    g^{MSE} = \arg\min_{g} \sum_{i,j \in \mathcal{D}_k} |(x_j - x_i) \cdot g - y_j + y_i |^2
\end{equation}
has the unique solution $g^{MSE}=(\tilde{X}^T \tilde{X})^{-1} \tilde{X}^T \delta$, where $\delta\in\mathbb{R}^{m^2}$ s.t. $\delta_{i \cdot (m-1) + j} = y_j - y_i$. Further, if $f\in\mathcal{C}^{1+}$ and $g_{\theta}\in\mathcal{C}^0$ is a parameterization with (equal or) lower regression loss than $g^{MSE}$, the following holds for all $x\in V_{\varepsilon}(x_k)$:
\begin{equation}
    \|\nabla f(x) - g_{\theta}(x)\| \leq \kappa_g \varepsilon
\end{equation}
\end{theorem}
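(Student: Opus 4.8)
The plan is to handle the two assertions separately: the closed form and uniqueness are pure finite‑dimensional least squares, while the accuracy bound rests on Taylor's theorem together with a uniform conditioning (poisedness) property of the sampling grid. Assembling $\tilde X$ and $\delta$ with matching sign conventions, the objective is $\mathcal{L}(g)=\lVert\tilde X g-\delta\rVert^2$, a convex quadratic in $g\in\mathbb{R}^n$. The poised‑set hypothesis says precisely that $\tilde X$ has full column rank $n$, hence $\tilde X^T\tilde X$ is symmetric positive definite, $\mathcal{L}$ is strictly convex, and its unique critical point solves the normal equations $\tilde X^T\tilde X g=\tilde X^T\delta$, giving $g^{MSE}=(\tilde X^T\tilde X)^{-1}\tilde X^T\delta$.

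For the accuracy bound, step one is to show the optimal loss is $O(\varepsilon^4)$. Plug the constant choice $g=\nabla f(x_k)$ into $\mathcal{L}$: since $\nabla f$ is $L$‑Lipschitz ($f\in\mathcal{C}^{1+}$), Taylor's theorem gives $\lvert f(x_\ell)-f(x_k)-\nabla f(x_k)\cdot(x_\ell-x_k)\rvert\le\tfrac L2\lVert x_\ell-x_k\rVert^2\le\tfrac L2\varepsilon^2$ for every sample $x_\ell\in V_\varepsilon(x_k)$, so each pairwise residual $(x_j-x_i)\cdot\nabla f(x_k)-(y_j-y_i)$ has magnitude $\le L\varepsilon^2$, whence $\mathcal{L}(g^{MSE})\le\mathcal{L}(\nabla f(x_k))\le m^2L^2\varepsilon^4$. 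By the hypothesis on $g_\theta$, $\mathcal{L}_{k,\varepsilon}(\theta)\le\mathcal{L}(g^{MSE})\le m^2L^2\varepsilon^4$; and since $\mathcal{L}_{k,\varepsilon}(\theta)=\sum_i\ell_i\big(g_\theta(x_i)\big)$ decouples over the anchors $x_i$ into nonnegative pieces $\ell_i(g)=\lVert\tilde X_i g-\delta_i\rVert^2$, each piece also satisfies $\ell_i(g_\theta(x_i))\le m^2L^2\varepsilon^4$.

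Step two transfers this to the parameter and inverts the conditioning. Because the samples affinely span $\mathbb{R}^n$, each $\tilde X_i$ (rows $x_i-x_j$, all of norm $\le2\varepsilon$) has rank $n$; let $\hat g_i$ be its least‑squares minimizer. The Pythagorean identity for the projection onto $\mathrm{col}(\tilde X_i)$ gives $\lVert\tilde X_i(g_\theta(x_i)-\hat g_i)\rVert^2=\ell_i(g_\theta(x_i))-\ell_i(\hat g_i)\le m^2L^2\varepsilon^4$ and, with the Taylor bound recentred at $x_i$, $\lVert\tilde X_i(\nabla f(x_i)-\hat g_i)\rVert^2\le\ell_i(\nabla f(x_i))=O(\varepsilon^4)$; the triangle inequality then yields $\lVert\tilde X_i\big(g_\theta(x_i)-\nabla f(x_i)\big)\rVert=O(\varepsilon^2)$. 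Writing $\tilde X_i=\varepsilon\bar X_i$ with $\bar X_i$ built from the normalised offsets $(x_i-x_j)/\varepsilon$, the decisive ingredient is that the sampling scheme be \emph{uniformly} well poised, i.e.\ $\sigma_{\min}(\bar X_i)\ge\lambda>0$ for a constant $\lambda$ independent of $\varepsilon$ (the $\Lambda$‑poisedness notion from derivative‑free optimization). Then $\sigma_{\min}(\tilde X_i)\ge\lambda\varepsilon$ and $\lVert g_\theta(x_i)-\nabla f(x_i)\rVert\le\sigma_{\min}(\tilde X_i)^{-1}\lVert\tilde X_i(g_\theta(x_i)-\nabla f(x_i))\rVert=O(\varepsilon)$. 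For an arbitrary $x\in V_\varepsilon(x_k)$ I would then combine this with $\lVert\nabla f(x)-\nabla f(x_i)\rVert\le2L\varepsilon$ and the fact that a continuous (in practice Lipschitz) $g_\theta$ moves by $O(\varepsilon)$ across the $\varepsilon$‑ball, and collect every constant into $\kappa_g$, which depends on $m$, $L$, $\lambda$ and the modulus of $g_\theta$.

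The delicate point is step two: one must fix the correct uniform poisedness/conditioning hypothesis so that $\sigma_{\min}(\tilde X_i)$ degrades only linearly in $\varepsilon$ rather than faster, and one must decide what regularity of $g_\theta$ is really needed to pass from the finite evaluation set to every $x\in V_\varepsilon(x_k)$ — bare continuity delivers the estimate only at the samples, so either an equicontinuity/Lipschitz bound on $g_\theta$ or a reading of the claim ``for all $x\in V_\varepsilon(x_k)$'' as ``at the evaluation points'' is needed. The remaining ingredients — the least‑squares algebra and the Taylor estimates — are routine.
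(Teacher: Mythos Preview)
Your proposal is correct and covers all the essential ingredients: the least-squares algebra, the Taylor residual bound, the rescaling $\tilde X_i=\varepsilon\bar X_i$ to isolate the $\varepsilon$-dependence of the conditioning, and the Lipschitz extension from sample points to all of $V_\varepsilon(x_k)$. You also correctly flag the two hidden hypotheses (uniform poisedness so that $\sigma_{\min}(\bar X_i)$ is bounded below independently of $\varepsilon$, and Lipschitz rather than bare continuity of $g_\theta$); the paper uses both of these as well, though somewhat implicitly.

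The route, however, is genuinely different from the paper's in the second half. The paper first controls the constant minimiser itself, showing $\lVert\nabla f(x)-g^{MSE}\rVert\le\lVert\tilde X^{\dagger}\rVert\cdot\lVert\tilde X\nabla f(x)-\delta\rVert=O(\varepsilon)$ via the global pseudoinverse, and only then passes to $g_\theta$ by arguing that if $g_\theta$ has loss no larger than $g^{MSE}$ then there must exist at least one anchor $x_i$ at which $\lVert\nabla f(x_i)-g_\theta(x_i)\rVert\le\lVert\nabla f(x_i)-g^{MSE}\rVert$, and finally extends from that one point by Lipschitz continuity of both $\nabla f$ and $g_\theta$. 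Your decomposition bypasses $g^{MSE}$ altogether for the parametric bound: you split the loss as $\sum_i\ell_i(g_\theta(x_i))$, bound each summand by the total, and use the Pythagorean identity $\ell_i(g)=\lVert\tilde X_i(g-\hat g_i)\rVert^2+\ell_i(\hat g_i)$ directly on each per-anchor block. This gives the bound at \emph{every} sample $x_i$ rather than just one, and avoids the paper's ``there exists one point'' step, which is the least transparent part of its argument. The paper's approach, on the other hand, needs only the conditioning of the single global matrix $\tilde X$ rather than of every block $\tilde X_i$; but since the affine span of the samples equals the span of the differences from any fixed anchor, these rank conditions are in fact equivalent, so nothing is lost.
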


\begin{corollary}
\label{lipschitz_nn}
For the $\mathcal{D}_k$ poised set, any Lipschitz continuous parameterization of the form $g_{\theta}(x)=F(Wx)+b$ is a controllably accurate model \cite{audet2017derivative} in $V_{\varepsilon}(x_k)$ for the optimal set of parameters $\theta^*_k$.  
\end{corollary}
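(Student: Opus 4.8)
The plan is to derive the corollary directly from Theorem~\ref{th:regression_problem} by checking that the hypotheses of that theorem are satisfied for the member of the family $g_{\theta}(x)=F(Wx)+b$ selected by the optimal parameters $\theta^*_k$. Recall that, by the definition of a controllably accurate model \cite{audet2017derivative}, it suffices to exhibit a constant $\kappa_g>0$ independent of $\varepsilon$ with $\|\nabla f(x)-g_{\theta^*_k}(x)\|\leq\kappa_g\varepsilon$ for all $x\in V_{\varepsilon}(x_k)$; this is exactly the estimate Theorem~\ref{th:regression_problem} produces, provided we verify its two standing assumptions on $g_{\theta^*_k}$, namely that it lies in $\mathcal{C}^0$ and that its regression loss is no larger than that of $g^{MSE}$.

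First I would observe that the family $g_{\theta}(x)=F(Wx)+b$ is closed under constant maps: choosing $W=0$ collapses $g_{\theta}$ to the constant function $x\mapsto F(0)+b$, and since the bias $b$ ranges freely over $\mathbb{R}^n$ we may set $b=g^{MSE}-F(0)$ so that $g_{\theta}\equiv g^{MSE}$. Hence the minimal regression loss attainable within the family is at most the loss of the constant parameterization $g^{MSE}$. As $\theta^*_k$ is by assumption a loss-minimizing choice of parameters, $g_{\theta^*_k}$ has regression loss less than or equal to that of $g^{MSE}$, which is precisely the ``(equal or) lower regression loss than $g^{MSE}$'' hypothesis of Theorem~\ref{th:regression_problem}. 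Second, a Lipschitz continuous parameterization is in particular continuous, so $g_{\theta^*_k}\in\mathcal{C}^0$, supplying the remaining hypothesis. Together with $f\in\mathcal{C}^{1+}$ (the assumption under which the corollary is stated) and the poisedness of $\mathcal{D}_k$, Theorem~\ref{th:regression_problem} then applies verbatim and yields $\|\nabla f(x)-g_{\theta^*_k}(x)\|\leq\kappa_g\varepsilon$ on all of $V_{\varepsilon}(x_k)$, with the same $\varepsilon$-independent $\kappa_g$; this is the definition of a controllably accurate model.

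I do not anticipate a serious obstacle, since the statement is essentially a specialization of Theorem~\ref{th:regression_problem}. The single point that needs care is the representability claim in the first step — that the affine/nonlinear/affine form $F(Wx)+b$ can reproduce the finite-dimensional least-squares optimum $g^{MSE}$ as a constant function — which relies on the presence of an unconstrained additive bias $b\in\mathbb{R}^n$ and on $F$ being defined at the origin; both are implicit in the stated form. A minor secondary point is that Theorem~\ref{th:regression_problem} presupposes a poised set, so that $g^{MSE}$ is the unique least-squares minimizer and the comparison of regression losses is meaningful; this is exactly the ``$\mathcal{D}_k$ poised set'' hypothesis carried over into the corollary, so no extra work is required.
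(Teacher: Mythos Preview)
Your proposal is correct and follows essentially the same argument as the paper: set $W=0$ so that $g_\theta$ is constant, choose the bias to match $g^{MSE}$, conclude that the optimal parameters achieve loss no worse than $g^{MSE}$, and invoke Theorem~\ref{th:regression_problem}. Your treatment is in fact slightly more careful than the paper's one-line justification, since you write $b=g^{MSE}-F(0)$ rather than $b=g^{MSE}$ and you explicitly note that Lipschitz continuity supplies the $\mathcal{C}^0$ hypothesis.
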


This is obvious as we can simply set $W=0$ and $b=g^{MSE}$. In this work, we are interested in using NNs which are much stronger parameterizations than constant models. If the NN satisfies the Lipschitz continuity property (e.g. with spectral normalization) and has at least a biased output layer, then its optimal set of parameters $\theta^*_k$ has lower regression loss than $g^{MSE}$ and it is therefore a controllably accurate model.\footnote{Provided that the learning process recovered $\theta^*_k$, which is not necessarily true in practice.}

Finally, to incorporate learning of the perturbed mean gradient, we slightly modify Eq. (\ref{mc_objective}). Note that we cannot directly dither the reference point $x_i$ as this means that we need to collect more samples. Instead, we may dither the $g_{\theta}$ argument by evaluating it in $\bar{x}_i=x_i+n_i$ where $n_i$ is uniformly sampled in an $n$-ball with radius $p$. Thus, the perturbed loss function for small $p$ s.t. $p \ll \varepsilon$ is
\begin{equation}
    \mathcal{L}_{k,\varepsilon,p}(\theta) = \sum_{i,j \in \mathcal{D}_k} |(x_j - x_i) \cdot g_{\theta}(\bar{x}_i) - y_j + y_i |^2
\end{equation}

\subsection{Convergence of EGL}
\label{Convergence}

In this part we assume that the function $f$ is convex with a Lipschitz continuous gradient s.t. $\|\nabla f(x) - \nabla f(x')\| \leq \kappa_f \|x-x'\|$. The classical gradient descent theorem states that the update rule $x_{k+1}=x_{k} - \alpha \nabla f(x_k)$ with learning parameter $\alpha \leq \frac{1}{\kappa_f}$ yields a solution $x_k\to x^*$ for $k\to\infty$.  

In EGL, we descend over a surrogate function of the gradient that contains some amount of error. Far from $x^*$, where $\|\nabla f\|$ is large, the error in $g_{\varepsilon}$ is small enough s.t. every new candidate improves the solution, i.e. $f(x_{k+1}) \leq f(x_k)$. As $x_k$ gets closer to $x^*$, the gradient $\|\nabla f\|$ decreases (as $f$ is convex), and eventually the error, i.e. $g_{\varepsilon}(x_k) - \nabla f(x_k)$, becomes so significant that improvement is  no longer guaranteed. Nevertheless, our next theorem shows that for a fixed $\varepsilon$ after a finite number of descent steps, the magnitude of the gradient is in the order of $\|\nabla f (x_k)\| \propto \frac{\varepsilon}{\alpha}$. 

\begin{theorem}
Let $f:\Omega\to\mathbb{R}$ be a convex function with a Lipschitz continuous gradient and a Lipschitz constant $\kappa_f$. Suppose a controllable mean-gradient model $g_{\varepsilon}$ with error constant $\kappa_g$, the gradient descent iteration $x_{k+1} = x_{k} - \alpha g_{\varepsilon}(x_k)$ with a sufficiently small $\alpha$ s.t. $\alpha \leq \min(\frac{1}{\kappa_g}, \frac{1}{\kappa_f})$ guarantees:
\begin{enumerate}
    \item For $\varepsilon \leq \frac{\|\nabla f(x)\|}{5\alpha}$, monotonically decreasing steps s.t. $f(x_{k+1}) \leq f(x_k) -  2.25 \frac{\varepsilon^2}{\alpha}$.
    \item After a finite number of iterations, the descent process yields $x^{\star}$ s.t. $\|\nabla f(x^{\star})\| \leq \frac{5\varepsilon}{\alpha}$.
\end{enumerate}
\end{theorem}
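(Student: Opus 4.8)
The plan is to run a standard ``descent lemma'' argument, but with the true gradient $\nabla f(x_k)$ replaced by the surrogate $g_\varepsilon(x_k)$ and the error $e_k := g_\varepsilon(x_k)-\nabla f(x_k)$, which satisfies $\|e_k\|\le \kappa_g\varepsilon\le \varepsilon/\alpha$ by the controllable-accuracy hypothesis and the bound $\alpha\le 1/\kappa_g$. Starting from the $\kappa_f$-Lipschitz-gradient inequality $f(x_{k+1})\le f(x_k)+\nabla f(x_k)\cdot(x_{k+1}-x_k)+\tfrac{\kappa_f}{2}\|x_{k+1}-x_k\|^2$ and substituting $x_{k+1}-x_k=-\alpha g_\varepsilon(x_k)=-\alpha(\nabla f(x_k)+e_k)$, I would expand to get
\begin{equation*}
f(x_{k+1})\le f(x_k)-\alpha\Bigl(1-\tfrac{\alpha\kappa_f}{2}\Bigr)\|\nabla f(x_k)\|^2 -\alpha(1-\alpha\kappa_f)\,\nabla f(x_k)\cdot e_k+\tfrac{\alpha^2\kappa_f}{2}\|e_k\|^2 .
\end{equation*}
Using $\alpha\le 1/\kappa_f$ we have $1-\tfrac{\alpha\kappa_f}{2}\ge \tfrac12$ and $0\le 1-\alpha\kappa_f\le 1$, and with Cauchy--Schwarz $|\nabla f(x_k)\cdot e_k|\le \|\nabla f(x_k)\|\,\|e_k\|$, so the whole right-hand side is bounded above by $f(x_k)-\tfrac{\alpha}{2}\|\nabla f(x_k)\|^2+\alpha\|\nabla f(x_k)\|\,\|e_k\|+\tfrac{\alpha}{2}\|e_k\|^2$, where I also used $\alpha\kappa_f\le 1$ to absorb the last term. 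Now feed in $\|e_k\|\le \varepsilon/\alpha$: this gives a quadratic-in-$\|\nabla f(x_k)\|$ upper bound that drives the per-step decrease.

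For part 1, assume $\varepsilon\le \|\nabla f(x_k)\|/(5\alpha)$, equivalently $\|e_k\|\le \varepsilon/\alpha\le \|\nabla f(x_k)\|/5$. Then $\alpha\|\nabla f(x_k)\|\,\|e_k\|\le \tfrac{\alpha}{5}\|\nabla f(x_k)\|^2$ and $\tfrac{\alpha}{2}\|e_k\|^2\le \tfrac{\alpha}{50}\|\nabla f(x_k)\|^2$, so
\begin{equation*}
f(x_{k+1})\le f(x_k)-\alpha\Bigl(\tfrac12-\tfrac15-\tfrac1{50}\Bigr)\|\nabla f(x_k)\|^2 = f(x_k)-\tfrac{14}{50}\alpha\|\nabla f(x_k)\|^2 .
\end{equation*}
Finally, using the hypothesis in the other direction, $\|\nabla f(x_k)\|\ge 5\varepsilon/\alpha$, hence $\alpha\|\nabla f(x_k)\|^2\ge 5\varepsilon\|\nabla f(x_k)\|\ge 25\varepsilon^2/\alpha$, which yields $f(x_{k+1})\le f(x_k)-\tfrac{14}{50}\cdot\tfrac{25\varepsilon^2}{\alpha}=f(x_k)-7\varepsilon^2/\alpha$; since $7>2.25$ this proves the stated monotone-decrease bound (the constant $2.25$ leaves room for a looser handling of the cross term, so the inequality is comfortably satisfied).

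For part 2, argue by contradiction: if $\|\nabla f(x_k)\|>5\varepsilon/\alpha$ for every $k=0,1,\dots,N$, then part 1 applies at every such step and $f(x_{N+1})\le f(x_0)-(N+1)\cdot 2.25\,\varepsilon^2/\alpha$. Since $f$ is convex and (by assumption on a BBO problem) bounded below by $f(x^*)$, the right-hand side eventually drops below $f(x^*)$, a contradiction once $N+1 > \tfrac{\alpha\,(f(x_0)-f(x^*))}{2.25\,\varepsilon^2}$. Hence some iterate $x^\star$ with $k\le N$ already satisfies $\|\nabla f(x^\star)\|\le 5\varepsilon/\alpha$, as claimed. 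The main obstacle is purely bookkeeping: getting the numerical constants in the cross-term/quadratic-term trade-off to line up so the threshold is exactly $5\varepsilon/\alpha$ and the decrement is at least $2.25\varepsilon^2/\alpha$; the choice of splitting $\tfrac12=\tfrac15+\tfrac1{50}+\tfrac{14}{50}$ above shows there is slack, so any reasonable grouping works, but one should double-check that $\alpha\le\min(1/\kappa_g,1/\kappa_f)$ is exactly what is needed to kill the $\kappa_f$-dependent coefficients and convert $\kappa_g\varepsilon$ into $\varepsilon/\alpha$.
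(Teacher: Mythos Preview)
Your proof is correct and follows the same route as the paper's: apply the $\kappa_f$-smooth descent lemma to the step $-\alpha g_\varepsilon(x_k)=-\alpha(\nabla f(x_k)+e_k)$, bound the cross and quadratic error terms via Cauchy--Schwarz together with $\alpha\kappa_f\le1$ and $\alpha\kappa_g\le1$, and then invoke the threshold $\|\nabla f(x_k)\|\ge 5\varepsilon/\alpha$ to extract a fixed per-step decrement (your grouping via the factor $1-\alpha\kappa_f$ is slightly tighter and yields $7\varepsilon^2/\alpha$ versus the paper's $2.25\varepsilon^2/\alpha$); part 2 is the identical boundedness-below argument. One caveat: your ``equivalently'' between $\varepsilon\le\|\nabla f(x_k)\|/(5\alpha)$ and $\varepsilon/\alpha\le\|\nabla f(x_k)\|/5$ is not literally an equivalence --- this reflects a typo in the theorem statement, and what both you and the paper's own proof actually use is the latter condition $\|\nabla f(x_k)\|\ge 5\varepsilon/\alpha$, consistent with part~2.
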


Utilizing Theorem \ref{Convergence} we can design an algorithm that converges to a local minimum. For that purpose we must make sure that the learning rate abides the requirement $\alpha \leq \min(\frac{1}{\kappa_g}, \frac{1}{\kappa_f})$. Since this factor cannot be easily estimated, a practical solution is to decay $\alpha$ during the optimization process. However, to converge, the ratio $\frac{\varepsilon}{\alpha}$ must also decay to zero. This means that $\varepsilon$ must decay to zero faster than $\alpha$. Algorithm \ref{alg:convergent_egl} provides both of these conditions, so it is guaranteed to converge to a local minimum for $\bar{\varepsilon}\to 0$. 

\begin{algorithm}[]
\small{
\DontPrintSemicolon
\KwIn{$x_0$, $\alpha$, $\varepsilon$, $\gamma_{\alpha} < 1$, $\gamma_{\varepsilon} < 1$, $\bar{\varepsilon}$}
$k=0$ \\
\While{$\varepsilon \leq \bar{\varepsilon}$}{

\SetKwProg{Fn}{Build Model:}{}{end}
\Fn{}{
\ \ Collect data $\{(x_i, y_i)\}_{1}^m$, $x_i\in V_{\varepsilon}(x_k)$ \\
Learn a local model $g_{\varepsilon}(x_k)$\\
}

\SetKwProg{Fn}{Gradient Descent:}{}{end}
\Fn{}{
\ \ $x_{k+1} \leftarrow x_k - \alpha g_{\varepsilon}(x_k)$ \\
\If{$f(x_{k+1}) > f(x_k) - 2.25 \frac{\varepsilon^2}{\alpha}$}{
\ \ $\alpha \leftarrow \gamma_{\alpha}\alpha$ \\
$\varepsilon \leftarrow \gamma_{\alpha}\gamma_{\varepsilon}\varepsilon$ \\
}
}
$k \leftarrow k + 1$ \\
}
}
\KwRet{$x_k$}
\caption{Convergent EGL}
\label{alg:convergent_egl}
\end{algorithm}

This analysis assumes a convex problem, but since EGL is also designed for non-convex optimization, our practical algorithm alleviates the requirement for strictly monotonically decreasing steps. Instead, it calculates a running mean over the last candidates in order to determine when to decrease $\alpha$ and $\varepsilon$. The complete practical EGL algorithm is found in the Appendix Sec. \ref{sec:egl_alg}. It also includes input and output mapping and scaling as described in the next section.

\subsection{Dynamic Mappings}\label{subsec:DynamicScaling}

Unlike supervised learning with a constant dataset, Black-Box optimization is a dynamic problem. The input and output statistics change over time as the optimization progresses. There are several sources for this drift. First, traversing via $g_{\varepsilon}$ changes the input's first moment by updating the center of the samples $x_i$ and the output's first moment by collecting smaller costs $y_i$. At the same time, squeezing $\varepsilon$ and $\alpha$ over time decreases the second moment statistics by reducing the variety in $\{(x_i, y_i)\}$. NNs are sensitive to such distribution changes \cite{brownlee2018better} and require tweaking hyperparameters such as learning rate and initial weight distribution to maintain high performance. Default numbers (e.g. learning rate of $10^{-3}$) usually work best when the input data is normalized. To regulate the statistics over the entire optimization process, we apply a method of double dynamic mappings for both input and output values. 

From the input perspective, our dynamic mapping resembles {\it Trust-Region} methods \cite{nocedal2006numerical}. Instead of searching for $x^*$ in the entire $\Omega$ domain by reducing $\varepsilon$ and $\alpha$ over time, we fix $\varepsilon$ and $\alpha$ and search for $x^{\star}_j$ in a sub-region $\Omega_j$. After finding the best candidate solution in this sub-region we shrink $\Omega_j$ by a factor of $\gamma_{\alpha}>0$ and $\varepsilon$ by a factor of $\gamma_{\varepsilon} > 0$. To keep the input statistics regulated, we maintain a bijective mapping $h_j:\Omega_j\to\mathbb{R}^n:$ that scales the $x$ values to an unconstrained domain $\tilde{x}$ with approximately constant first and second moments. 

In this work, $\Omega$ is assumed to be a rectangular box that denotes the upper and lower bounds for each entry in $x$. Thus we consider element-wise bijective mappings of the form $h_j(x) = (h_j^1(x^1),...,h_j^n(x^n))$ and each new sub-region, $\Omega_{j+1} \subset \Omega_j$, is a smaller rectangular box centered at $x^{\star}_j$. In the unconstrained domain $\tilde{x}$, we can use the original learning rate, yet, due to the squeezing factor, the effective learning rate is decayed by the $\gamma_{\alpha}$ factor. Equivalently, the effective accuracy parameter $\varepsilon$ is reduced by a factor of $\gamma_{\alpha}\times \gamma_{\varepsilon}$. In other words, we use the same NN parameters to learn a zoomed-in problem of the original objective function.

In order to regulate the output statistics, we define a scalar, monotonically increasing, invertible mapping $r_k(y)$ which maps the $y_i$ samples in the dataset $\mathcal{D}_k$ to approximately constant statistics. Since traversing with $g_{\varepsilon}$ can significantly change the statistics even in the same sub-region, $r_k$ must be dynamic and cannot be held fixed for the entire $j$ sub-problem. Combining both input and output mappings, we obtain a modified set of samples $\{(\tilde{x}_i, \tilde{y}_i)\}_i = \{(h_j(x_i), r_k(y_i))\}_i$, so effectively we learn a modified mean-gradient, denoted as $\tilde{g}_{\varepsilon_{jk}}$, of a compressed and scaled function $\tilde{y} = \tilde{f}_{jk}(\tilde{x})=r_k \circ f \circ h_j^{-1} (\tilde{x})$.

If both input and output mappings are linear, then the true mean-gradient is proportional to the modified mean-gradient

\begin{equation}
\label{lin_map}
    g_{\varepsilon}(x) = \left(\frac{\partial r_k}{\partial y}\right)^{ -1} \nabla h_j(x) \odot \tilde{g}_{\varepsilon_{jk}}(\tilde{x})
\end{equation}
Even when the mappings are approximately linear inside $V_{\varepsilon_{jk}}(\tilde{x}_k)$, $g_{\varepsilon}(x_k)$ can be estimated according to Eq. (\ref{lin_map}). Hence, to maintain a controllably accurate model, we seek mappings that preserve the linearity as much as possible. On the other hand, strictly linear mappings may be insufficient since they are very sensitive to outliers. After experimenting with several functions, we found a sweet spot: a composition of a linear mapping followed by a squash function that compresses only the outliers (see details in Appendix Sec. \ref{sec:Mappings}).  With such mappings, we make sure that both the model is controllably accurate and the learning hyperparameters are adequate for the entire optimization process.

\section{Empirical Evaluation}\label{sec:Experiment}
In this section we describe the rigorous empirical analysis of EGL against various BBO methods in the COCO test suite and in a search task over the latent space of a GAN model. {\em The code is available in the supplementary material.} 

\begin{figure*}[ht!]
  \centering
  \includegraphics[width=1\linewidth]{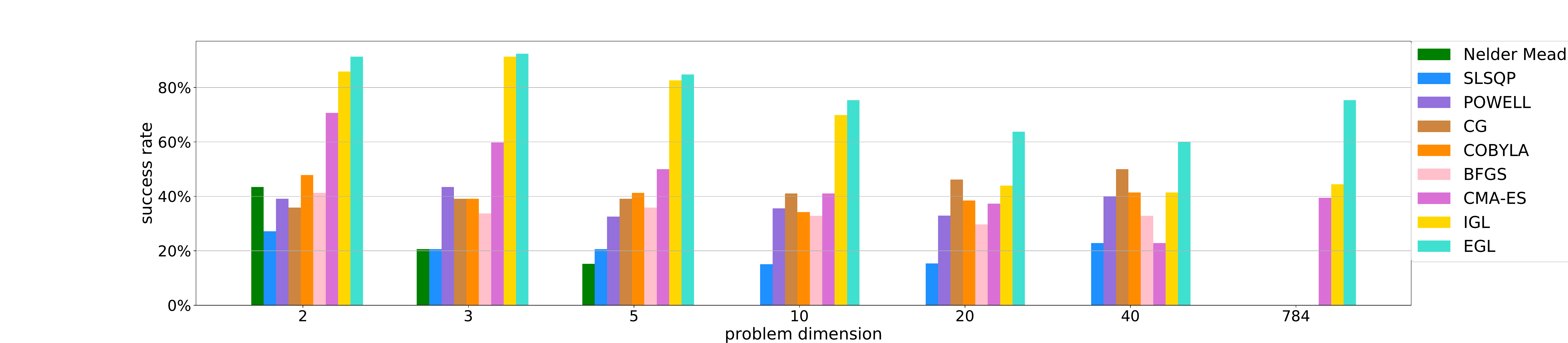}
  \caption{Comparing the success rate of EGL and other BBO algorithms for a budget $C=150\cdot10^3$.}
\label{fig:egl_baseline_cmp_success}
\vspace{-0.5em}
\end{figure*}

\begin{figure*}[ht!]
  \centering
  \includegraphics[width=1\linewidth]{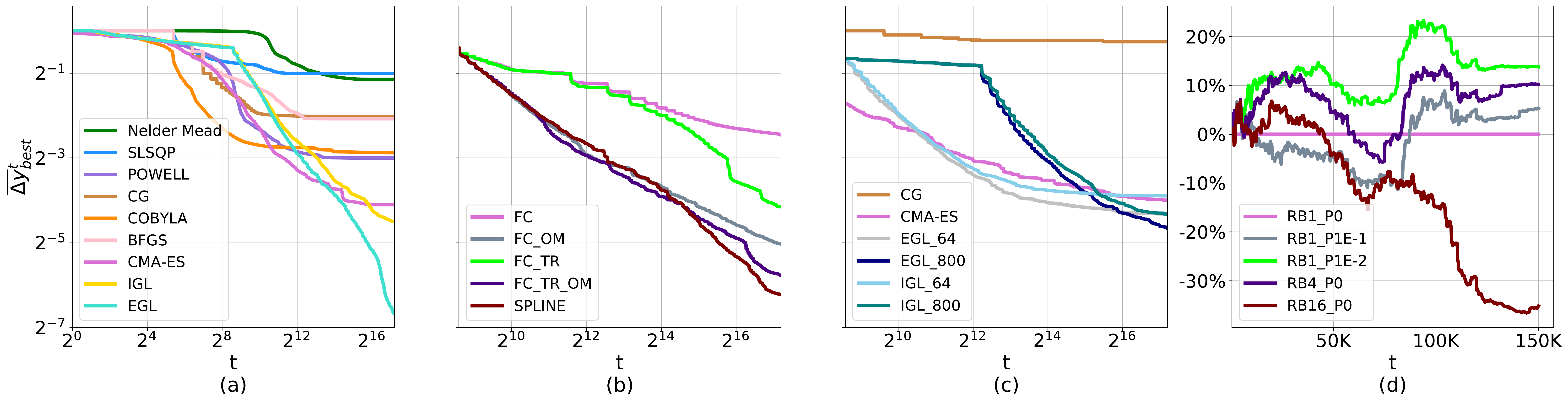}
  \caption{The scaled distance $\Delta y_{best}^t$ as a function of $t\in[1,..,C]$ for: (a) EGL and baselines on 40D, (b) trust-region and output mapping ablation test, (c) EGL with different $m$ samples and baselines on 784D, (d) the perturbed mean-gradient and long replay buffer on 40D.}
\label{fig:egl_dim_avg}
\vspace{-0.5em}
\end{figure*}

\subsection{The COCO test suite}

We tested EGL on the COCO test suite, a platform for systematic comparison of real-parameter global optimizers. COCO provides Black-Box functions in several dimensions (2,3,5,10,20,40), where each dimension comprises 360 distinct problems. To test higher dimensions, we created an extra set of 784D problems by composing a pre-trained encoder \cite{kingma2013auto} of FashionMnist images, each with 784 pixels \cite{xiao2017/online}, with a 10D COCO problem as follows: $f_{784D}(x):f_{10D}(Encoder(x))$.

We compared EGL with seven baselines, implemented on \href{https://docs.scipy.org/doc/}{Scipy} and \href{http://cma.gforge.inria.fr/apidocs-pycma/cma.evolution_strategy.html}{Cma} Python packages: Nedler Mead, SLSQP, POWELL, CG,  COBYLA, BFGS, CMA-ES and with our IGL implementation based on the DDPG approach. For the 784D problems we evaluated only CG, CMA-ES and IGL, which yielded results in reasonable time. We examined two network models. To compare against other baselines (Fig. \ref{fig:egl_baseline_cmp_success} and Fig. \ref{fig:egl_dim_avg}(a)), we used our Spline Embedding model (details in Appendix Sec. \ref{sec:SplineEmb}). Since Spline net is more time consuming, for the ablation tests in Fig. \ref{fig:egl_dim_avg}(b-d), we used a lighter Fully Connected (FC) net. For additional information including hyperparameters list, refer to Appendix Sec. \ref{sec:coco_exp}.

Fig. \ref{fig:egl_baseline_cmp_success} presents the success rate of each algorithm with respect to the dimension number for a budget $C=150\cdot10^3$. A single test-run is considered successful if: (1) $y_{best} - y^* \leq 1$; and (2) $\frac{y_{best}-y^*}{y_0 - y^*} \leq 10^{-2}$. Here, $y_0$ is the initial score $f(x_0)$,  $y_{best}=\min y_k$ is the best observed value for that run, and $y^*$ is the minimal value obtained from all the baselines' test-runs. This definition guarantees that a run is successful only if its best observed value is near $y^*$, both in terms of absolute distance and in terms of relative improvement with respect to the initial score. The results show that EGL outperforms all other baselines for any dimension, and most importantly, as the dimension number increases, the performance gap between EGL and all other baselines grows larger.

To visualize the average convergence rate of each method, we first calculate a scaled distance between the best value at time $t$ and the optimal value $\Delta y_{best}^t = \frac{ \min_{k\leq t} y_k - y^*}{y_0 - y^*}$. We then average this number, for each $t$, over all runs in the same dimension problem set. This distance is now scaled from zero to one and the results are presented on a log-log scale. In Fig. \ref{fig:egl_dim_avg}(a) we present $\overline{\Delta y}_{best}^t$ on the 40D problem set. Both EGL and IGL have a cold start behavior which is a result of a warm-up phase where we evaluate $384$ points around $x_0$ and train the networks before executing the first gradient descent step. The elbow pattern in step $384$ marks the switch from $x_0$ to $x_1$. Unlike other baselines that settle on a local minimum, EGL monotonically decreases for the entire optimization process. It finally overtakes the best baseline (CMA-ES) after roughly $10^4$ steps.

\ifx\hidepics\undefined

\begin{figure*}[ht!]
  \centering
  \includegraphics[width=1\linewidth]{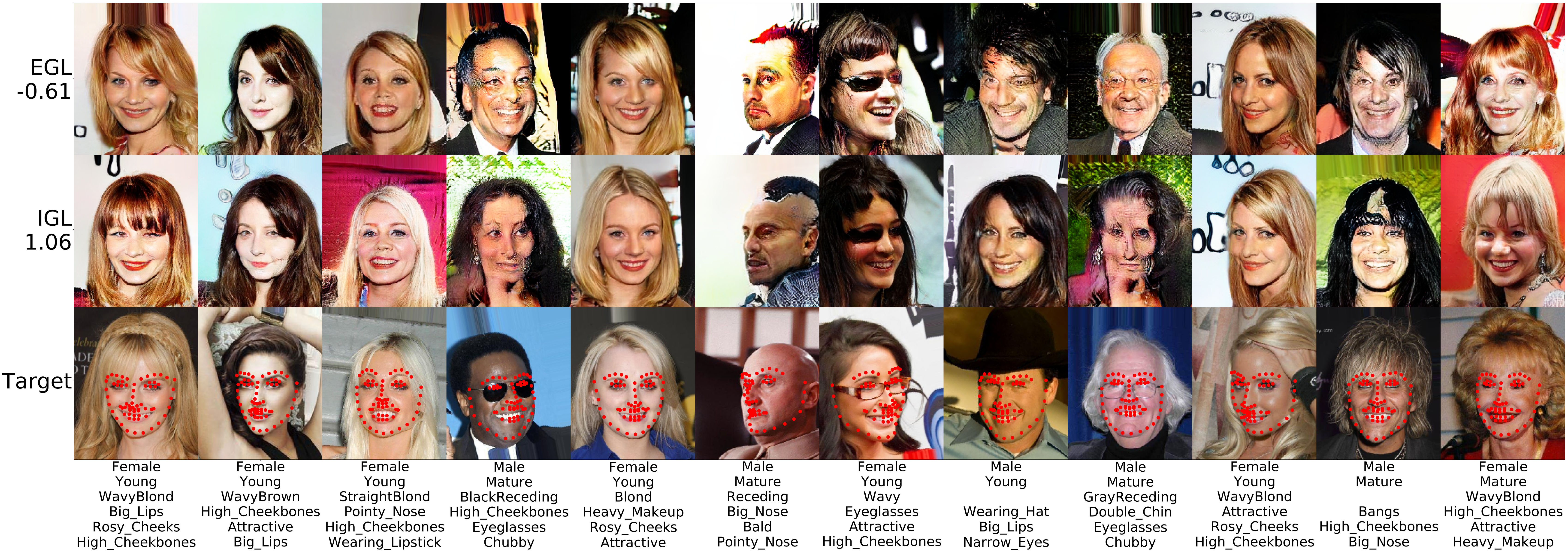}
  \caption{Searching latent space of generative models with EGL and IGL. Note that {\em the target image is not revealed to the optimizer}, only the face attributes and landmark points. The left-hand number is the average minimal value for each algorithm over 64 different problems.}
\label{fig:latent}
\end{figure*}

\else

\begin{figure*}[ht!]
  \centering
  \includegraphics[width=1\linewidth]{example-image-a}
\end{figure*}

\fi

Fig. \ref{fig:egl_dim_avg}(b) demonstrates the advantage of output-mapping (OM) and trust-region (TR). Here, we compared an FC net, trained with OM and TR (\texttt{FC\_TR\_OM}) against the variations \texttt{FC}, \texttt{FC\_TR} and \texttt{FC\_OM}. The results show a clear advantage of using both OM and TR, yet, while OM is crucial for the entire optimization process, the TR advantage materializes only near the minimal value. In addition, Fig. \ref{fig:egl_dim_avg}(b) manifests the gain of the Spline net (\texttt{SPLINE}) on top of \texttt{FC\_TR\_OM}.

The next experiment is executed on the high dimensional 784D problem set. In Fig. \ref{fig:egl_dim_avg}(c) we compare the performance of different numbers of exploration points $m = \{64,800\}$ against CMA-ES, CG and the IGL baselines. While, Theorem \ref{th:regression_problem} guarantees a controllably accurate model when sampling $m\geq n+1$ points, remarkably, EGL generalized to an outstanding performance and outperformed all other baselines even for $m =64 \ll n$. Nevertheless, as expected, more exploration points converged to a better final value. In practice, the choice of $m$ should correspond to the allocated budget size. More exploration around each candidate comes with the cost of fewer gradient descent steps, thus, for low budgets, one should typically choose a small $m$.

Next, we tested the gain of two possible modifications to EGL: (1) perturbations with $g_{\varepsilon}^p$; and (2) training $g_{\theta}$ with a larger replay buffer (RB) with exploration points from the last $L$ candidates. These tests were all executed with an identical random seed. Fig. \ref{fig:egl_dim_avg}(d) presents the performance gain $(1 - \overline{\Delta y}_{best}^t/\overline{\Delta y}_{RB1\_P0}^t)$ as a function of $t$ for several different runs. Small perturbations $p=0.01\varepsilon$ improved the performance ($14\%$), possibly since they also regulate the NN training, yet, too large perturbations of $p=0.1\varepsilon$ yielded inconsistent results. We also observed that a too long RB of $L=16$ largely hurt the performance, probably since it adds high values to the RB which leads to a more compressed output mapping. However, moderate RB of $L=4$ had a positive impact ($10\%$), probably since more exploration points near the current candidate reduce the controllable accuracy factor $\kappa_g$ and thus the accuracy of $g_{\varepsilon}$ improves.

\subsection{Searching the latent space of generative models}

To examine EGL in a high-dimensional, complex, non-convex and noisy domain, we experimented with the task of searching the latent space of an image generative model \cite{volz2018evolving}. Generative models learn to map between a latent predefined distribution $z$ to a complex real world distribution $x$ (e.g. images, audio, etc.). Given a trained Black-Box generator, while it is easy to sample from the distribution of $x$ by sampling from $z$, it is not straightforward to generate an image with some desired characteristics. For that purpose, one may apply a BBO to search the latent space for a hidden representation $z^*$ that generates an image with the desired traits $x^*$. Here, we used a face generative model and optimized $z^*$ to generate an image with a required set of face attributes, landmark points and quality.

We trained generator \& discriminator for the CelebA dataset \cite{liu2015faceattributes} based on the BigGAN \cite{brock2018large} architecture and a classifier for the CelebA attributes. For the face landmark points, we used a pre-trained model \cite{kazemi2014one}. The BBO was trained to minimize the following objective:
\begin{equation*}
    f_{al}(z) = \lambda_a \mathcal{L}_a(G(z)) + \lambda_l \mathcal{L}_l(G(z)) + \lambda_g  \tanh(D(G(z)))
\end{equation*}
Where: (1) $\mathcal{L}_a$ is the Cross-Entropy loss between the generated face attributes as measured by the classifier and the desired set of attributes $a$; (2) $\mathcal{L}_l$ is the MSE between the generated landmark points and the desired set of landmarks $l$ ; and (3) $D(G(z))$ is the discriminator output, positive for low-quality images and negative for high-equality images.

Since each evaluation of $z$ is costly, we limited the budget $C$ to only $10^4$ evaluations and the number of exploration points in each step was only $m=32 \ll 512$. Due to the high-dimensional problem, classic methods such as CG and even CMA-ES completely fail to generate satisfying faces (see Appendix Sec. \ref{sec:latent_space_search} for sample images generated by CG and CMA-ES, and additional results). In Fig. \ref{fig:latent} we compare the images generated by EGL and IGL. Generally, the quality of the results depends on the image target style. Some face traits which are more frequent in the CelebA dataset lead to a better face quality. Some faces, specifically with attributes such as a beard or a hat, are much harder to find, probably due to the suboptimality of the generator which usually reduces the variety of images found in the dataset. Nevertheless, the results show that EGL produces better images both visually and according to the final cost value.
We observed that, for hard targets, IGL more frequently fails to find any plausible candidates while EGL finds images that at least resemble some of the required characteristics.

\section{Conclusions}
\label{sec:Conclusions}

We presented EGL, a derivative-based BBO algorithm that achieves state-of-the-art results on a wide range of optimization problems. The essence of its success is a learnable function that estimates the mean gradient with a controllable smoothness factor. Starting with a high smoothness factor, let EGL find global areas in the function manifold with low valleys. Gradually decreasing it, lets EGL converge to a local minimum. The concept of EGL can be generalized to other related fields, such as sequential decision-making problems (i.e. Reinforcement Learning), by directly learning the gradient of the $Q$-function. We also demonstrated success in an interesting applicative high-dimensional Black-Box problem, searching the latent space of generative models.

\clearpage
\newpage

\bibliography{mybib}
\bibliographystyle{icml2020}

\onecolumn
\appendix

\appendix

\section{Theoretical Analysis}
\label{sec:Theoretical_Analysis}

\subsection{The Mean-Gradient}

\begin{definition}
The mean-gradient in a region around $x$ of radius $\varepsilon > 0$ is
\begin{equation}
\label{appendix_mean_gradient}
    g_{\varepsilon}(x) = \arg\min_{g\in\mathbb{R}^n} \int\displaylimits_{V_{\varepsilon}(x)}|g\cdot \tau - f(x+\tau) + f(x)|^2 d\tau
\end{equation}
where $V_{\varepsilon}(x)\subset\mathbb{R}^n$ is a convex subset s.t. $\|x' - x\|\leq \varepsilon$ for all $x'\in V_{\varepsilon}(x)$ and the integral domain is over $\tau$ s.t. $x+\tau\in V_{\varepsilon}(x)$.
\end{definition}

\begin{prop}[controllable accuracy]
\label{appendix_controllable_accuracy}
For any twice differentiable function $f\in\mathcal{C}^1$, there is $\kappa_g(x) > 0$, so that for any $\varepsilon > 0$ the mean-gradient satisfies $\|g_{\varepsilon}(x) -\nabla f(x)\| \leq \kappa_g(x) \varepsilon$ for all $x\in\Omega$.   
\end{prop}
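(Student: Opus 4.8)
The plan is to exploit the fact that the minimization in Eq.~(\ref{appendix_mean_gradient}) is a quadratic least-squares problem in the finite-dimensional variable $g$, so its minimizer is characterized by linear normal equations available in closed form, and then to Taylor-expand $f$ to control the error. Writing $c_x(\tau) = f(x+\tau) - f(x)$ and setting the derivative of the integral with respect to $g$ to zero, the minimizer $g_\varepsilon(x)$ satisfies
\begin{equation}
  M_\varepsilon\, g_\varepsilon(x) = \int_{V_\varepsilon(x)} \tau\, c_x(\tau)\, d\tau,
  \qquad
  M_\varepsilon := \int_{V_\varepsilon(x)} \tau \tau^T\, d\tau .
\end{equation}
Since $V_\varepsilon(x)$ is convex with nonempty interior, the displacement vectors $\tau$ appearing in the integral span $\mathbb{R}^n$, hence $M_\varepsilon\succ 0$; this simultaneously shows the arg min is unique and gives $g_\varepsilon(x) = M_\varepsilon^{-1}\int \tau\, c_x(\tau)\,d\tau$.

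Next I would insert the first-order Taylor expansion with remainder. For $f$ twice differentiable with Hessian bounded by $H(x)$ on $V_\varepsilon(x)$ (equivalently, $\nabla f$ Lipschitz with constant $\kappa_f$, taking $H(x)\equiv\kappa_f$), we have $c_x(\tau) = \nabla f(x)\cdot\tau + R_x(\tau)$ with $|R_x(\tau)| \le \tfrac12 H(x)\|\tau\|^2$. Substituting and using $\int_{V_\varepsilon(x)} \tau\,(\nabla f(x)\cdot\tau)\,d\tau = M_\varepsilon \nabla f(x)$ yields the exact identity
\begin{equation}
  g_\varepsilon(x) - \nabla f(x) = M_\varepsilon^{-1}\!\!\int_{V_\varepsilon(x)} \tau\, R_x(\tau)\, d\tau ,
\end{equation}
so it remains to bound the right-hand side by $O(\varepsilon)$.

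For the final bound I would expose the $\varepsilon$-dependence by the change of variables $\tau = \varepsilon u$, writing $V_\varepsilon(x) = x + \varepsilon V$ for a fixed convex body $V\subseteq B_1(0)$ (the natural reading of the definition's scaling of $V_\varepsilon$ with $\varepsilon$). Then $M_\varepsilon = \varepsilon^{n+2} M$ with $M := \int_V uu^T\,du \succ 0$, so $\|M_\varepsilon^{-1}\| = \varepsilon^{-(n+2)}\|M^{-1}\|$, while the remainder integral scales as $\varepsilon^{n+1}\!\int_V u\, R_x(\varepsilon u)\,du$ with $|R_x(\varepsilon u)| \le \tfrac12 H(x)\varepsilon^2\|u\|^2$. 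Collecting the powers of $\varepsilon$,
\begin{equation}
  \|g_\varepsilon(x) - \nabla f(x)\|
  \le \|M_\varepsilon^{-1}\|\int_{V_\varepsilon(x)}\|\tau\|\,|R_x(\tau)|\,d\tau
  \le \tfrac12 H(x)\,\|M^{-1}\|\Big(\int_V\|u\|^3\,du\Big)\,\varepsilon
  =: \kappa_g(x)\,\varepsilon ,
\end{equation}
which is the claimed estimate; taking $H(x)\equiv\kappa_f$ gives the $x$-independent constant of the main-text version.

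I expect the main obstacle to be the dependence on the \emph{shape} of $V_\varepsilon(x)$: the argument needs $M_\varepsilon$ to be not merely invertible but well-conditioned relative to $\varepsilon^{n+2}$ (i.e.\ a uniform lower bound on $\lambda_{\min}(M_\varepsilon)/\varepsilon^{n+2}$), which is automatic under the scaling $V_\varepsilon(x) = x+\varepsilon V$ but would otherwise require an extra non-degeneracy hypothesis — the continuous analogue of the poised-set condition used later for the Monte-Carlo estimator. Once that is fixed, the Taylor-remainder estimate and the rescaling are routine, and the $C^1$-only case (continuous but non-Lipschitz gradient) is exactly where the $O(\varepsilon)$ rate can fail, consistent with the appendix hypothesis requiring twice differentiability.
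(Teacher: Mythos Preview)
Your argument is correct but takes a genuinely different route from the paper's. The paper never writes down the normal equations; instead it sandwiches the value of the loss $\mathcal{L}(g_\varepsilon(x))$ between an upper bound $\mathcal{L}(\nabla f(x)) \le \tfrac14\kappa_f^2\varepsilon^{n+4}|V_1(x)|$ (obtained by plugging $g=\nabla f(x)$ and using the Taylor remainder) and a lower bound of the form $\tfrac14 M_1(x)\,\varepsilon^{n+2}\|g_\varepsilon-\nabla f\|^2 - C\kappa_f\varepsilon^{n+3}\|g_\varepsilon-\nabla f\|$, where $M_1(x):=\min_{\hat n}\int_{V_1\setminus V_{1/2}}|\tfrac{\tau}{\|\tau\|}\cdot\hat n|^2\,d\tau$ plays the role of your smallest eigenvalue of $M$. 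Combining the two yields a quadratic inequality in $\|g_\varepsilon-\nabla f\|$ whose positive root is $O(\varepsilon)$. Your approach is more direct and gives a cleaner constant $\kappa_g=\tfrac12 H(x)\|M^{-1}\|\int_V\|u\|^3du$, since you exploit the exact identity $g_\varepsilon-\nabla f = M_\varepsilon^{-1}\!\int\tau R_x(\tau)\,d\tau$ rather than working only at the level of the objective value. The paper's sandwich argument, on the other hand, is closer in spirit to the later Monte-Carlo analysis (Theorem~\ref{th:regression_problem}), where one wants to transfer the bound to \emph{any} parameterization $g_\theta$ achieving loss no worse than the least-squares solution, and for that the loss-level inequalities are the natural currency. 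Both proofs rely on the same implicit hypothesis you flagged: that $V_\varepsilon(x)$ scales as $x+\varepsilon V$ for a fixed full-dimensional $V$, so that $\lambda_{\min}(M_\varepsilon)\asymp\varepsilon^{n+2}$ (equivalently, the paper's $M_1(x)>0$ independent of~$\varepsilon$).
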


\begin{proof}

Recall the Taylor theorem for a twice differentiable function $f(x+\tau)=f(x) + \nabla f(x) \cdot \tau + R_{x}(\tau)$, where $R_{x}(\tau)$ is the remainder. Since the gradient is continuous,by the fundamental theorem for line integrals
\begin{equation}
    f(x+\tau) = f(x) + \nabla f(x) \cdot \tau + \int_0^1 (\nabla f(x + t\tau) - \nabla f(x)) \cdot \tau dt
\end{equation}
Since $f\in\mathcal{C}^{1+}$, we also have $|\nabla f(x) - \nabla  f(x + \tau)|\leq \kappa_{f}\|\tau\|$. We can use this property to bound the remainder in the Taylor expression.
\begin{equation}
\begin{aligned}
    R_{x}(\tau) &= \int_0^1 (\nabla f(x + t\tau) - \nabla f(x)) \cdot \tau dt \\ 
    &\leq \kappa_f \int_0^1 \|x + t\tau - x\| \cdot \|\tau\| dt = \kappa_f  \|\tau\|^2 \int_0^1 t dt = \frac{1}{2}\kappa_f \|\tau\|^2
\end{aligned}
\end{equation}

Now, by the definition of $g_{\varepsilon}$, an upper bound for $\mathcal{L}(g_{\varepsilon}(x))$ is

\begin{align*}
    \mathcal{L}(g_{\varepsilon}(x)) \leq \mathcal{L}(\nabla f(x)) &= \int\displaylimits_{V_{\varepsilon}(x)}|\nabla f(x) \cdot \tau  - f(\tau) + f(s)|^2 d\tau = \int\displaylimits_{V_{\varepsilon}(x)}|R_{x}(\tau)|^2 d\tau\\ 
    & \leq \frac{1}{4}\kappa_f^2 \int\displaylimits_{V_{\varepsilon}(x)}|\|\tau\|^2|^2  d\tau \leq \kappa_f \varepsilon^4 |V_{\varepsilon}(x)| = \frac{1}{4}\kappa_f^2 \varepsilon^{n+4} |V_{1}(x)|
\end{align*}

To develop the lower bound we will assume that $\dim(\spn(V_{\varepsilon}(x)))=n$ and we will use the following definition
\begin{equation}
    M_{\varepsilon}(x) = \min_{\hat{\mathbf{n}}} \int_{V_{\varepsilon}(x)\setminus V_{\frac{\varepsilon}{2}}(x)}\left|\frac{\tau}{\|\tau\|}\cdot \hat{\mathbf{n}}\right|^2 d\tau
\end{equation}
where $\hat{\mathbf{n}}\in \mathbb{R}^n$ s.t. $\|\hat{\mathbf{n}}\|=1$ and we assumed that $V_{\frac{\varepsilon}{2}}\subset V_{\varepsilon}$. As the dimension of $V_{\varepsilon}(x)$ is $n$, it is obvious that $M_{\varepsilon}(x) > 0$.

The lower bound is

\begin{align*}
    \mathcal{L}(g_{\varepsilon}(x)) &= \int_{V_{\varepsilon}(x)}\left|g_{\varepsilon}(x)\cdot\tau-\nabla f(x)\cdot\tau+\nabla f(x)\cdot\tau-f(x+\tau)+f(x)\right|^2d\tau \\
&\geq \int_{V_{\varepsilon}(x)}\left(\left|g_{\varepsilon}(x)\cdot\tau-\nabla f(x)\cdot\tau\right|-\left|\nabla f(x)\cdot\tau-f(x+\tau)+f(x)\right|\right)^2d\tau \\
&= \int_{V_{\varepsilon}(x)}\left|g_{\varepsilon}(x)\cdot\tau-\nabla f(x)\cdot\tau\right|^2d\tau + \int_{V_{\varepsilon}(x)}\left|\nabla f(x)\cdot\tau-f(x+\tau)+f(x)\right|^2d\tau \\&-2\int_{V_{\varepsilon}(x)}\left|g_{\varepsilon}(x)\cdot\tau-\nabla f(x)\cdot\tau\right|\cdot\left|\nabla f(x)\cdot\tau-f(x+\tau)+f(x)\right|\tau \\
& \geq \int_{V_{\varepsilon}(x)}\left|g_{\varepsilon}(x)\cdot\tau-\nabla f(x)\cdot\tau\right|^2d\tau -2\int_{V_{\varepsilon}(x)}\left|g_{\varepsilon}(x)\cdot\tau-\nabla f(x)\cdot\tau\right|\cdot\left|\nabla f(x)\cdot\tau-f(x+\tau)+f(x)\right|\tau  \\
& \geq \int_{V_{\varepsilon}(x)}\left|g_{\varepsilon}(x)\cdot\tau-\nabla f(x)\cdot\tau\right|^2d\tau -\kappa_f \int_{V_{\varepsilon}(x)}\left|g_{\varepsilon}(x)\cdot\tau-\nabla f(x)\cdot\tau\right|\cdot\|\tau\|^2\tau \\
& \geq \left\|g_{\varepsilon}(x)-\nabla f(x)\right\|^2 \int_{V_{\varepsilon}(x)}\left|\hat{\mathbf{n}}(x)\cdot\tau\right|^2d\tau -\kappa_f \left\|g_{\varepsilon}(x)-\nabla f(x)\right\| \int_{V_{\varepsilon}(x)}\cdot\|\tau\|^3\tau \\
& \geq \left\|g_{\varepsilon}(x)-\nabla f(x)\right\|^2 \int_{V_{\varepsilon}(x)\setminus V_{\frac{\varepsilon}{2}}(x)}\left|\hat{\mathbf{n}}(x)\cdot\tau\right|^2d\tau -\kappa_f \left\|g_{\varepsilon}(x)-\nabla f(x)\right\| \varepsilon^{n+3}|V_{1}(x)| \\
& \geq \left\|g_{\varepsilon}(x)-\nabla f(x)\right\|^2 \left(\frac{\varepsilon}{2}\right)^2\int_{V_{\varepsilon}(x)\setminus V_{\frac{\varepsilon}{2}}(x)}\left|\hat{\mathbf{n}}(x)\cdot\frac{\tau}{\|\tau\|}\right|^2d\tau -\kappa_f \left\|g_{\varepsilon}(x)-\nabla f(x)\right\| \varepsilon^{n+3}|V_{1}(x)| \\
& \geq \left\|g_{\varepsilon}(x)-\nabla f(x)\right\|^2 \left(\frac{\varepsilon}{2}\right)^2\varepsilon^n \int_{V_{1}(x)\setminus V_{\frac{1}{2}}(x)}\left|\hat{\mathbf{n}}(x)\cdot\frac{\tau}{\|\tau\|}\right|^2d\tau -\kappa_f \left\|g_{\varepsilon}(x)-\nabla f(x)\right\| \varepsilon^{n+3}|V_{1}(x)| \\
& \geq \frac{1}{4}\left\|g_{\varepsilon}(x)-\nabla f(x)\right\|^2 \varepsilon^{n+2} M_1(x) - \kappa_f \left\|g_{\varepsilon}(x)-\nabla f(x)\right\| \varepsilon^{n+3}|V_{1}(x)| \\
\end{align*}

Combining the upper and lower bound we obtain

\begin{align*}
    &\frac{1}{4}\left\|g_{\varepsilon}(x)-\nabla f(x)\right\|^2 \varepsilon^{n+2} M_1(x) - \kappa_f \left\|g_{\varepsilon}(x)-\nabla f(x)\right\| \varepsilon^{n+3}|V_{1}(x)| \leq  \frac{1}{4}\kappa_f^2 \varepsilon^{n+4} |V_{1}(x)|\\
    \Rightarrow \quad & M_1(x) \left\|g_{\varepsilon}(x)-\nabla f(x)\right\|^2 - 4 \kappa_f \varepsilon |V_{1}(x)| \left\|g_{\varepsilon}(x)-\nabla f(x)\right\| - \kappa_f^2 \varepsilon^2 |V_{1}(x)| \leq 0 \\
    \Rightarrow \quad & \left\|g_{\varepsilon}(x)-\nabla f(x)\right\| \leq \varepsilon\kappa_f\frac{2|V_{1}(x)| + \sqrt{4|V_{1}(x)|^2 + |V_{1}(x)|M_1(x)}}{M_1(x)} \\
\end{align*}
\end{proof}

\begin{prop}[continuity]
\label{prop_continuity}
If $f(x)$ is continuous in $V$ s.t. $V_{\varepsilon}(x)\subset V$, then the mean-gradient is a continuous function at $x$.
\end{prop}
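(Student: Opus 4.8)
The plan is to reduce the variational definition of $g_{\varepsilon}$ to a closed-form expression and then read off continuity from continuity of its ingredients. First I would observe that, writing $D(x):=V_{\varepsilon}(x)-x$ and $\phi_x(\tau):=f(x+\tau)-f(x)$, the functional $\mathcal{L}(g)=\int_{D(x)}|g\cdot\tau-\phi_x(\tau)|^2\,d\tau$ is a nonnegative quadratic form in $g\in\mathbb{R}^n$, so its minimizer is characterized by the normal equations $A(x)\,g=b(x)$, where $A(x):=\int_{D(x)}\tau\tau^{\top}\,d\tau$ and $b(x):=\int_{D(x)}\tau\,\phi_x(\tau)\,d\tau$. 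Since $V_{\varepsilon}(x)$ is a full-dimensional convex body, $A(x)$ is positive definite: $v^{\top}A(x)v=\int_{D(x)}|\tau\cdot v|^2\,d\tau>0$ for every $v\neq 0$. Hence $g_{\varepsilon}(x)=A(x)^{-1}b(x)$, and it suffices to prove that $x\mapsto A(x)$ and $x\mapsto b(x)$ are continuous, with $A(x)$ locally uniformly invertible (which follows from continuity of $\det A(x)$ together with positive definiteness).

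Under the natural convention that the sampling region is attached to its center, i.e. $V_{\varepsilon}(x)=x+D$ for a fixed convex body $D\subset B_{\varepsilon}(0)$, the set $D(x)\equiv D$ and the matrix $A(x)\equiv A=\int_{D}\tau\tau^{\top}\,d\tau$ are independent of $x$, so only the continuity of $b$ needs to be shown. If instead one allows $V_{\varepsilon}(\cdot)$ to deform with its center, one additionally needs $x\mapsto D(x)$ to vary continuously in the Lebesgue measure of the symmetric difference; this then yields continuity of $A(x)$ and, by positive definiteness on each fiber, keeps $A(x)^{-1}$ locally bounded. I would record this as a mild regularity assumption rather than dwell on it.

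For the continuity of $b$, split $b(x)=\int_{D}\tau\,f(x+\tau)\,d\tau-f(x)\int_{D}\tau\,d\tau$. The second summand is continuous at $x$ because $f$ is continuous there and $\int_{D}\tau\,d\tau$ is a constant vector. For the first summand, fix $x_0$ and pick $\rho>0$ small enough that the compact set $K:=\{x'+\tau:\|x'-x_0\|\le\rho,\ \tau\in\overline{D}\}$ lies in $V$ (possible once $\overline{V_{\varepsilon}(x_0)}$ sits in the interior of the region on which $f$ is continuous). Then $f$ is uniformly continuous on $K$ with some modulus $\omega$, so for $\|x-x_0\|\le\rho$,
\[
\left\|\int_{D}\tau\,f(x+\tau)\,d\tau-\int_{D}\tau\,f(x_0+\tau)\,d\tau\right\|\le\omega(\|x-x_0\|)\int_{D}\|\tau\|\,d\tau,
\]
which tends to $0$ as $x\to x_0$ since $\omega(t)\to 0$ as $t\to 0^{+}$. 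Therefore $b$ is continuous at $x_0$, and $g_{\varepsilon}=A^{-1}b$ is continuous at $x_0$ as a fixed matrix applied to a continuous vector field, which is the claim.

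The main obstacle I anticipate is not the analytic estimate above but pinning down the precise behavior of the family $\{V_{\varepsilon}(x)\}_x$: the statement as written only constrains each $V_{\varepsilon}(x)$ to be convex and contained in $B_{\varepsilon}(x)$, and continuity of $g_{\varepsilon}$ genuinely fails if the \emph{shape} of $V_{\varepsilon}(x)$ is allowed to jump with $x$. So the real content of a clean proof is to record the (evidently intended) assumption that $V_{\varepsilon}(x)$ depends on $x$ by translation — or at least continuously in the symmetric-difference metric — which simultaneously makes $A(x)$ continuous and nondegenerate and lets the uniform-continuity argument for $b(x)$ go through. A secondary, purely technical point is that to speak of continuity \emph{at} $x$ one needs $V_{\varepsilon}(x')\subset V$ for all $x'$ near $x$, so the hypothesis should really read that $\overline{V_{\varepsilon}(x)}$ lies in the interior of a set on which $f$ is continuous; this is exactly what licenses invoking uniform continuity on the compact neighborhood $K$.
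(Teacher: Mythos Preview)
Your proposal is correct and follows essentially the same skeleton as the paper --- both reduce the minimization to a quadratic in $g$, observe that the second-moment matrix $A$ is constant under the translation convention $V_{\varepsilon}(x)=x+D$, and then argue continuity of the linear term $b(x)$ --- but your execution is more direct in two respects. First, having identified the closed form $g_{\varepsilon}(x)=A^{-1}b(x)$, you simply conclude continuity from continuity of $b$; the paper instead frames the same conclusion via the implicit function theorem applied to $F(x,g)=\nabla_g\mathcal{L}(x,g)$, which is unnecessary machinery once one knows the minimizer is an explicit linear map of $b(x)$. Second, your continuity argument for $b$ (uniform continuity of $f$ on a compact neighborhood $K$, giving $\|b(x)-b(x_0)\|\le\omega(\|x-x_0\|)\int_D\|\tau\|\,d\tau$) is shorter than the paper's route, which changes variables back to absolute coordinates, splits $|\mathcal{L}(x,g)-\mathcal{L}(x',g)|$ into five pieces $C_1,\dots,C_5$, and invokes a separate geometric lemma bounding $|V_{\varepsilon}(x)\setminus V_{\varepsilon}(x')|$ by the surface area times $\|x-x'\|$. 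Your explicit flagging of the translation assumption on $V_{\varepsilon}(\cdot)$ and of the need for $\overline{V_{\varepsilon}(x)}$ to sit inside the continuity region is also sharper than the paper, which uses the translation-invariance of $A(x)$ without stating it as a hypothesis.
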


\begin{proof}

Let us define the two-variable function $\mathcal{L}(x,g)$:
\begin{equation}
    \mathcal{L}(x,g) = \int_{V_{\varepsilon}(x)}| g \cdot \tau - f(x+\tau) + f(x)|^2  d\tau
\end{equation}
The mean-gradient is the global minimum of this function for each $x$. Notice that $\mathcal{L}(x,g)$ is a polynomial function in $g$ since if $f$ is an integrable function, then we can write
\begin{equation}
\begin{aligned}
    \mathcal{L}(x,g) &= \int_{V_{\varepsilon}(x)}| g \cdot \tau - f(x+\tau) + f(x)|^2  d\tau \\
    &= \int_{V_{\varepsilon}(x)}|g \cdot \tau|^2 d\tau - 2 \int_{V_{\varepsilon}(x)}g \cdot \tau (f(x+\tau) - f(x)) d\tau + \int_{V_{\varepsilon}(x)}|f(x+\tau) - f(x)|^2 d\tau \\
    &= g\cdot \mathbf{A}(x) g + g\cdot b(x) + c(x)
\end{aligned}
\end{equation}
where $\mathbf{A}(x)=\int_{V_{\varepsilon}(x)}\tau\tau^T d\tau$, $b(x) = -2 \int_{V_{\varepsilon}(x)}\tau (f(x+\tau) - f(x)) d\tau$ and $c(x)=\int_{V_{\varepsilon}(x)}|f(x+\tau) - f(x)|^2 d\tau$. Without loss of generality for this proof, we can ignore the constant $c(x)$ as it does not change the minimum point. In addition, note that $A(x)$ is constant for all $x$ since the domain $V_{\varepsilon}(x)$ is invariant for $x$ and the integrand does not depend on $f$. 

Since $\mathcal{L}(x,g)$ is bounded from below, $\mathcal{L}(x,g) \geq 0$, it must have a minimum s.t. $\mathbf{A} \geq 0$. Assume that $\mathbf{A} > 0$ (e.g. when $V_{\varepsilon}$ is an $n$-ball of radius $\varepsilon$), then for each $x$ there is a unique minimum for $\mathcal{L}(x,g)$ and this minimum is the mean-gradient $g_{\varepsilon}(x)$.

To show the continuity of $g_{\varepsilon}(x)$, define $F(x,g)=\nabla_g \mathcal{L}(x,g)$, $F$ maps $\mathbb{R}^{2n}\to\mathbb{R}^n$. Assume that for $x_0$, $g_{\varepsilon}(x_0)$ is the mean-gradient. Therefore, $F(x_0,g_{\varepsilon}(x_0))=0$. Since $A>0$, this means that the derivative $\nabla_g F(x_0, g_{\varepsilon}(x_0))$ is invertible. We will apply a version of the implicit function theorem \cite{loomis1968advanced} (Theorem 9.3 pp. 230-231) to show that there exists a unique and continuous mapping $h(x)$ s.t. $F(x, h(x))=0$. 

To apply the implicit function theorem, we need to show that $F(x,g)$ is continuous and the derivative $\nabla_g F$ is continuous and invertible. The latter is obvious since $\nabla_g F = \mathbf{A} > 0$ is a constant positive definite matrix. $F(x,g)$ is also continuous with respect to $g$, therefore it is left to verify that $F(x,g)$ is continuous with respect to $x$.

\begin{lemma}
If $f(x)$ is continuous in $V$ s.t. $V_{\varepsilon}(x)\subset V$, then $\mathcal{L}(x,g)$ is continuous in $x$.
\end{lemma}

\begin{proof}

\begin{equation}
\begin{aligned}
        \left|\mathcal{L}(x,g) - \mathcal{L}(x',g)\right| &= \left|g\cdot \mathbf{A}(x) g + g\cdot b(x) - g\cdot \mathbf{A}(x') g - g\cdot b(x)\right| \\
        &= \left|g\cdot b(x) -  g\cdot b(x)\right| \leq \|g\| \left\| \int_{V_{\varepsilon}(x)} \tau (f(x + \tau) - f(x)) d\tau - \int_{V_{\varepsilon}(x')} \tau (f(x' + \tau) - f(x')) d\tau \right\|
\end{aligned}
\end{equation}
To write both integrals with the same variable, we change variables to $\tau = \tilde{\tau} - x$ in the first integrand and $\tau = \tilde{\tau} - x'$ in the second integrand.
\begin{equation}
\begin{aligned}
        \left|\mathcal{L}(x,g) - \mathcal{L}(x',g)\right|
        & \leq \|g\| \left\| \int_{V_{\varepsilon}(x)} (\tilde{\tau} - x) (f(\tilde{\tau}) - f(x)) d\tilde{\tau} - \int_{V_{\varepsilon}(x')} (\tilde{\tau} - x') (f(\tilde{\tau}) - f(x')) d\tilde{\tau} \right\| \\
        & \leq \|g\| C_1 + \|g\| C_2 + \|g\| C_3 + \|g\| C_4 + \|g\| C_5
\end{aligned}
\end{equation}
Where
\begin{align}
    & C_1 = |f(x') - f(x)| \int_{V_{\varepsilon}(x)\cap V_{\varepsilon}(x')} \|\tilde{\tau}\|  d\tilde{\tau} \\
    & C_2 = \|x - x'\| \int_{V_{\varepsilon}(x)\cap V_{\varepsilon}(x')} |f(\tilde{\tau})| d\tilde{\tau} \\
    & C_3 = \|x f(x) - x' f(x') \| \int_{V_{\varepsilon}(x)\cap V_{\varepsilon}(x')} d\tilde{\tau} \\
    & C_4 = \int_{V_{\varepsilon}(x)\setminus V_{\varepsilon}(x')} \|\tilde{\tau} - x\|\cdot |f(\tilde{\tau}) - f(x)| d\tilde{\tau} \\
    & C_5 = \int_{V_{\varepsilon}(x')\setminus V_{\varepsilon}(x)} \|\tilde{\tau} - x'\|\cdot |f(\tilde{\tau}) - f(x')| d\tilde{\tau} \\
\end{align}

Taking $x'\to x$, $C_1$, $C_2$, $C_3$ all go to zero as the integral is finite but $x'\to x$ and $f(x') \to f(x)$. For $C_4$ and $C_5$, note that the integrand is bounded but the domain size goes to zero as $x' \to x$. To  see that we will show that $|V_{\varepsilon}(x)\setminus V_{\varepsilon}(x')| \leq |A_{\varepsilon}(x)| \cdot \|x-x'\|$, where $|A_{\varepsilon}(x)|$ is the surface area of $V_{\varepsilon}$.
\begin{lemma}
\label{ap_a_eps}
$|V_{\varepsilon}(x)\setminus V_{\varepsilon}(x')| \leq |A_{\varepsilon}(x)| \cdot \|x-x'\|$
\end{lemma}
\begin{proof}
First, note that if $u\in V_{\varepsilon}(x)$, then $u + x' - x \in V_{\varepsilon}(x')$.
Take $P\subset V_{\varepsilon}$ s.t. $p\in P$ if and only if $\distance(A_{\varepsilon}(x),p) \geq \|x-x'\|$ and $p\in V_{\varepsilon}(x)$. For any $p\in P$, $p - x + x' \in V_{\varepsilon}(x)$, thus, following our first argument $p\in V_{\varepsilon}(x')$. 

We obtain that $P \cap V_{\varepsilon}(x)\setminus V_{\varepsilon}(x') = \Phi$, thus $|V_{\varepsilon}(x)\setminus V_{\varepsilon}(x')|\leq |V_{\varepsilon}(x)\setminus P|$. However, all points $q\in V_{\varepsilon}(x)\setminus P$ satisfy $\distance(A_{\varepsilon}(x),q) \leq \|x-x'\|$, therefore $|V_{\varepsilon}(x)\setminus P| \leq |A_{\varepsilon}(x)|\cdot \|x-x'\|$.
\end{proof}

Following Lemma \ref{ap_a_eps} we obtain that the integral in $C_4$ and $C_5$ goes to zero and therefore the distance $\left|\mathcal{L}(x,g) - \mathcal{L}(x',g)\right|\to 0$ as $x\to x'$.
\end{proof}

$\mathcal{L}(x,g)$ continuous in $x$ and $g$ with a continuous derivative in $g$ implies that $\nabla_g \mathcal{L}(x,g)$ is continuous in $x$. We can now apply Theorem 9.3 pp. 230-231 in \cite{loomis1968advanced} and conclude that there is a unique continuous mapping $h(x)$ s.t. $F(x, h(x))=0$. Since $A>0$, this means that such a mapping defines a local minimum for $\mathcal{L}(x,g)$ in $g$. Further, since $\mathcal{L}(x,g)$ is a second degree polynomial in $g$, this is a unique global mapping.  Therefore, it must be equal to $g_{\varepsilon}(x)$ and hence $g_{\varepsilon}$ is continuous in $x$.

\end{proof}

\subsection{Parametric approximation of the mean-gradient} 

In this section we analyze the Monte-Carlo learning of the mean-gradient with a parametric model. Generally, we define a parametric model $g_{\theta}$ and learn $\theta^*$ by minimizing the term
\begin{equation}
\label{eq:mc_appendix}
    \mathcal{L}(g_{\theta}, \varepsilon) = \sum_{i=1}^N\sum_{x_j\in V_{\varepsilon}(x_i)} |(x_j - x_i) \cdot g_{\theta}(x_i) - y_j + y_i |^2
\end{equation}

We start by analyzing constant parameterization of the mean-gradient around a candidate $x_k$. We consider two cases: (1) interpolation, where there are exactly $n+1$ evaluation points; and (2) regression where there are $m > n+1$ evaluation points. This line of arguments follows the same approach taken in \cite{audet2017derivative}, Chapter 9.

\subsubsection{Constant parameterization with $n+1$ interpolation points}
\label{linear_interpolation}

\begin{definition}
A set of $n+1$ points $\{x_i\}_{0}^{n}$, s.t. every subset of $n$ points spans $\mathbb{R}^n$, is a poised set for constant interpolation.
\end{definition}
\begin{prop}
\label{poised_uniqueness}
For a constant paramterization $g(x) = g$,  a poised set has a unique solution with zero regression error.
\begin{equation}
\label{min_g_const}
    \min_{g}\sum_{i,j \in \mathcal{D}}|(x_j-x_i)\cdot g - y_j + y_i|^2 = 0
\end{equation}
\end{prop}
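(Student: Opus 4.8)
The plan is to analyze the quadratic form in Eq.~(\ref{min_g_const}) directly. First I would rewrite the objective compactly: stacking the difference vectors $x_i-x_j$ as rows gives the matrix $\tilde X\in\mathbb{M}^{m^2\times n}$ from Definition~2 (poised set for regression), and stacking the scalars $y_j-y_i$ gives the vector $\delta$. Then $\sum_{i,j}|(x_j-x_i)\cdot g - y_j + y_i|^2 = \|{-\tilde X g} - (-\delta)\|^2 = \|\tilde X g - \delta\|^2$ (being careful about signs, but a global sign is immaterial inside the squared norm). This is a standard linear least-squares problem, so the minimizer set is $\{g : \tilde X^T\tilde X g = \tilde X^T\delta\}$, and it is a singleton exactly when $\tilde X^T\tilde X$ is nonsingular, i.e.\ when $\mathrm{rank}(\tilde X)=n$.

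The key step is to verify that the poised-set hypothesis gives $\mathrm{rank}(\tilde X)=n$. For constant interpolation the hypothesis is that every subset of $n$ of the $n+1$ points spans $\mathbb{R}^n$; equivalently, fixing $x_0$ as a base point, the $n$ vectors $x_1-x_0,\dots,x_n-x_0$ are linearly independent. Since these $n$ vectors all appear among the rows of $\tilde X$ (they are a subset of the difference vectors $x_i-x_j$), $\tilde X$ has $n$ linearly independent rows and hence rank $n$. Therefore $\tilde X^T\tilde X\succ 0$ and the least-squares minimizer is unique.

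It remains to show the minimum value is $0$, i.e.\ that the system $\tilde X g = \delta$ is consistent. This is where the poised structure matters beyond mere rank: I would exhibit an explicit $g^*$ achieving zero. Define $g^*$ to be the unique solution of the $n\times n$ system $(x_i-x_0)\cdot g^* = y_i - y_0$ for $i=1,\dots,n$ (solvable uniquely by linear independence of the $x_i-x_0$). Then for \emph{any} pair $(i,j)$ one has $(x_j-x_i)\cdot g^* = (x_j-x_0)\cdot g^* - (x_i-x_0)\cdot g^* = (y_j-y_0) - (y_i-y_0) = y_j - y_i$, using the telescoping identity $x_j-x_i = (x_j-x_0)-(x_i-x_0)$ and the definition of $g^*$ on the base-point differences (with the convention that the $i=0$ equation $(x_0-x_0)\cdot g^*=0=y_0-y_0$ holds trivially). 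Hence every term in the sum vanishes, the minimum is $0$, and by the uniqueness established above this $g^*$ is \emph{the} minimizer.

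The main obstacle is conceptual bookkeeping rather than depth: one must be careful that the indexing over all ordered pairs $(i,j)$ in $\mathcal{D}$ (as opposed to just the $n$ base-point differences) does not introduce extra constraints that could make the system inconsistent, and the telescoping argument above is exactly what rules that out — consistency on the $n$ spanning differences $x_i-x_0$ automatically propagates to all pairs. A secondary point to state cleanly is the equivalence between ``every $n$-subset spans $\mathbb{R}^n$'' and ``the differences from one fixed point are independent,'' which follows because if $x_1-x_0,\dots,x_n-x_0$ were dependent then $\{x_0,\dots,x_n\}$ would lie in an affine hyperplane, contradicting that, say, $\{x_1,\dots,x_n\}$ spans $\mathbb{R}^n$ while also forcing a linear dependence — so I would just note this and move on.
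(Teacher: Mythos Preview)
Your argument is correct and arguably cleaner than the paper's. The paper takes a different route: rather than casting the problem as a single least-squares system in the stacked matrix $\tilde X$, it introduces an augmented $(n{+}1)\times(n{+}1)$ system with a slack variable (rows $(x_i\mid 1)$, right-hand side $y_i$) and then shows by elementary row operations that subtracting the $i$-th row from all others reduces to the $n\times n$ system $\tilde X_i g = \delta_i$; since this works for every choice of $i$, all $n{+}1$ such systems share the same solution. Your telescoping identity $(x_j-x_i)\cdot g^* = (x_j-x_0)\cdot g^* - (x_i-x_0)\cdot g^*$ accomplishes the same thing more directly: it shows that consistency on the $n$ base-point differences automatically propagates to all pairs, without the detour through an augmented matrix and row reductions. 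The paper's row-operation argument is essentially your telescoping in disguise.

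One caution on your closing remark: the claimed equivalence between ``every $n$-subset spans $\mathbb{R}^n$'' and ``the differences $x_i-x_0$ are independent'' is false as stated. Points on an affine hyperplane not through the origin---e.g.\ $(1,0),(0,1),(\tfrac12,\tfrac12)$ in $\mathbb{R}^2$---have every $2$-subset linearly spanning while all pairwise differences are parallel. What your proof (and the paper's) actually needs is affine independence of the $x_i$, which is the standard meaning of ``poised for linear interpolation'' in the derivative-free literature; the paper's own proof makes the identical leap, so the issue lies in the definition as written rather than in your reasoning. Just drop the equivalence claim and take the independence of the differences as the working hypothesis.
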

\begin{proof}
Define the matrix $\tilde{X}_{i}\in\mathbb{M}^{n\times n}$ s.t. the $j$-th row is $x_i-x_j$ and $\delta_i\in\mathbb{R}^{n}$ s.t. $\delta_{i,j} = y_j - y_i$. We may transform Eq. (\ref{min_g_const}) into $n+1$ sets of linear equations:
\begin{equation}
    \forall \ i \ \ \tilde{X}_i g = \delta_i
\end{equation}
While there are $n+1$ different linear systems of equations, they all have the same solution $g_{\min}$. To see that, define the system of equation $\tilde{X}\tilde{g}=r$ where
\begin{equation}
    \tilde{X} = \begin{pmatrix} x_0 & 1 \\ x_1 & 1 \\ \vdotswithin{x_1} & \vdotswithin{1} \\ x_n & 1 \end{pmatrix}, \ \ \ \ \ g = \begin{pmatrix} g_0 \\ g_1 \\ \vdotswithin{r_1} \\ g_{n-1} \\ s \end{pmatrix}, \ \ \ \ r = \begin{pmatrix} y_0 \\ y_1 \\ \vdotswithin{y_1} \\ y_n \end{pmatrix}
\end{equation}
and $s$ is an additional slack variable. For all $i$ we can apply an elementary row operation of subtracting the $i$-th row s.t. the updated system is
\begin{equation}
    \left(\begin{matrix} x_0 - x_i & 0 \\ \vdotswithin{x_1 - x_i} & \vdotswithin{0} \\ x_{i-1}  - x_i & 0 \\ 0 & 0 \\ x_{i+1}  - x_i & 0 \\ \vdotswithin{x_1  - x_i} & \vdotswithin{0} \\ x_n  - x_i & 0 \end{matrix} \ \ \middle| \ \ \begin{matrix} y_0 - y_i \\ \vdotswithin{y_1 - y_i} \\ y_{i-1} - y_i \\ 0 \\ y_{i+1} - y_i \\ \vdotswithin{y_1 - y_i} \\ y_n - y_i \end{matrix}\right)
\end{equation}
Reducing the zeroed $i$-th row we get the system of equation $\tilde{X}_i g = \delta_i$ which has a unique solution since the set $\{x_j\} \setminus x_i$ spans $\mathbb{R}^n$. 
\end{proof}
\begin{corollary}
For any parameterization of the form $g_{\theta} = f(Wx) + b$ and a poised set $\{x_j\}_{0}^n$ we have an optimal solution where $W^*=0$ and $b^*=g_{\min}$. Specifically it also holds for a Neural Network with a biased output layer.
\end{corollary}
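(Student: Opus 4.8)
The plan is to exploit Proposition \ref{poised_uniqueness}: for a poised set the constant function $g\equiv g_{\min}$ drives the sum-of-squares loss $\sum_{i,j}|(x_j-x_i)\cdot g - y_j + y_i|^2$ to exactly $0$. Since this loss is a sum of squares it is nonnegative, so $0$ is its global minimum over \emph{any} family of candidate functions. Hence it suffices to observe that the family $\{\,x\mapsto f(Wx)+b\,\}$ contains a function that is identically equal to $g_{\min}$; any such member is then automatically a global minimizer of the regression loss, i.e. an optimal solution, and the corollary only asserts existence of one such optimum.

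First I would set $W=0$. Then $f(Wx)=f(0)$ for every $x$, so $g_{\theta}(x)=f(0)+b$ is a constant function of $x$. Choosing $b=g_{\min}-f(0)$ makes $g_{\theta}\equiv g_{\min}$; when the activation satisfies $f(0)=0$ (as for ReLU or $\tanh$) this is simply $b^{*}=g_{\min}$, matching the statement, and in general the constant $f(0)$ is harmlessly absorbed into the bias. By the remark above, $(W^{*},b^{*})=(0,\,g_{\min}-f(0))$ attains loss $0$ and is therefore optimal.

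For a neural network with a biased output layer, write the last layer as $z\mapsto W_{L}z+b_{L}$, where $z$ is the penultimate activation. Setting $W_{L}=0$ collapses the whole network to the constant map $x\mapsto b_{L}$ independently of all remaining parameters; taking $b_{L}=g_{\min}$ again yields $g_{\theta}\equiv g_{\min}$ and hence loss $0$, so this is an optimal parameter configuration. There is no real obstacle here: the only points needing care are that the squared loss is genuinely nonnegative (so $0$ is the true global minimum rather than merely a stationary value) and the trivial bookkeeping of the activation's value at the origin; I would also flag that this argument only guarantees \emph{existence} of such an optimum in parameter space, not that a particular training procedure recovers it.
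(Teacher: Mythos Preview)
Your proposal is correct and follows essentially the same approach as the paper, which treats the corollary as immediate from Proposition~\ref{poised_uniqueness} (the paper's only remark is ``we can simply set $W=0$ and $b=g^{MSE}$''). Your extra bookkeeping about absorbing $f(0)$ into the bias and the caveat that this only establishes \emph{existence} of an optimum are careful additions the paper omits.
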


\begin{lemma}
For a poised set $\{x_j\}_{0}^n$ s.t. $\|x_i-x_j\|\leq\varepsilon$ and a mean-gradient estimator $g_{\theta}\in\mathcal{C}^{0}$ with zero interpolation error, the following holds
\begin{equation}
    \|\nabla f(x) - g_{\theta}(x)\| \leq \kappa_g \varepsilon
\end{equation}
\end{lemma}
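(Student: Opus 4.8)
\emph{Proof plan.} This lemma is the interpolation--regime analogue of Theorem~\ref{th:regression_problem}: it upgrades ``zero interpolation error'' into a pointwise $O(\varepsilon)$ bound between the learnt model and the true gradient. The plan has three stages. First, I would pin down the value of $g_{\theta}$ on the sample set. By Proposition~\ref{poised_uniqueness}, for a poised set the equation $\sum_{i,j}|(x_j-x_i)\cdot g - y_j + y_i|^2 = 0$ has a unique constant solution $g_{\min}$, and moreover the per-reference subsystems $\tilde X_i\, g = \delta_i$ all share this very solution. Since the loss vanishes only when $\tilde X_i\, g_{\theta}(x_i) = \delta_i$ for every $i$, and each $\tilde X_i$ has rank $n$ (poisedness), I conclude $g_{\theta}(x_i) = g_{\min}$ for all $i = 0,\dots,n$. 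So on the sample set the (possibly nonconstant) model coincides with the constant interpolant, and it suffices to bound $\|\nabla f(x) - g_{\min}\|$.

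Second, I would bound $\|g_{\min} - \nabla f(x_0)\|$ at a reference sample point $x_0$ (or directly at the candidate $x_k$) by re-running the Taylor estimate from the proof of Proposition~\ref{appendix_controllable_accuracy}. Writing $y_j - y_i = \nabla f(x_i)\cdot(x_j - x_i) + R_{ij}$ with $|R_{ij}| \le \tfrac12\kappa_f\|x_j - x_i\|^2 \le \tfrac12\kappa_f\varepsilon^2$ (using $f\in\mathcal{C}^{1+}$ and $\|x_i-x_j\|\le\varepsilon$), and subtracting the interpolation identity $(x_j - x_i)\cdot g_{\min} = y_j - y_i$, gives $(x_j - x_i)\cdot(g_{\min} - \nabla f(x_i)) = R_{ij}$. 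Fixing $i=0$ and stacking $j=1,\dots,n$ yields a linear system $X_0\,(g_{\min} - \nabla f(x_0)) = R_0$ with $\|R_0\|_\infty \le \tfrac12\kappa_f\varepsilon^2$, where $X_0$ is invertible by poisedness. Normalizing, $X_0 = \varepsilon\,\hat X_0$ with unit-length rows, so $\|X_0^{-1}\| = \varepsilon^{-1}\Lambda$ with $\Lambda := \|\hat X_0^{-1}\|$ the ($\varepsilon$-independent) poisedness constant of the scaled set, and hence $\|g_{\min} - \nabla f(x_0)\| \le \tfrac{\sqrt n}{2}\,\Lambda\,\kappa_f\,\varepsilon$.

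Third, I would propagate this to an arbitrary $x$ in the region via Lipschitz continuity of $\nabla f$: $\|\nabla f(x) - g_{\min}\| \le \|\nabla f(x) - \nabla f(x_0)\| + \|\nabla f(x_0) - g_{\min}\| \le \kappa_f\,\|x - x_0\| + \tfrac{\sqrt n}{2}\Lambda\kappa_f\varepsilon$, with $\|x - x_0\| \le 2\varepsilon$ since both points lie in $V_{\varepsilon}(x_k)$ (or $\le\varepsilon$ if the reference is taken to be $x_k$). Setting $\kappa_g := \bigl(2 + \tfrac{\sqrt n}{2}\Lambda\bigr)\kappa_f$ closes the argument wherever $g_{\theta}$ equals $g_{\min}$ --- in particular at every sample point and at $x_k$, the only place EGL actually queries the model.

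The main obstacle is the gap between ``$g_{\theta}\in\mathcal C^0$'' and ``for all $x\in V_{\varepsilon}(x_k)$'': a generic continuous interpolant is unconstrained between the sample points, so the pointwise conclusion really needs $g_{\theta}$ to reduce to the constant interpolant $g_{\min}$ throughout the region --- exactly the situation covered by the preceding Corollary ($W=0$, $b=g_{\min}$), or more generally by a Lipschitz-bounded parameterization whose optimum is realized by that constant. I would therefore either state the bound at the sample points and at $x_k$, or carry the mild Lipschitz hypothesis through, and note that the constant $\kappa_g$ absorbs $\kappa_f$, the dimension factor $\sqrt n$, and the poisedness constant $\Lambda$ of the sampling scheme. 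Everything else is the routine Taylor and linear-algebra bookkeeping already rehearsed in Proposition~\ref{appendix_controllable_accuracy}.
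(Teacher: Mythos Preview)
Your core argument --- Taylor remainder $|f(x_j)-f(x_i)-(x_j-x_i)\cdot\nabla f(x_i)|\le\tfrac12\kappa_f\varepsilon^2$, stack into $\tilde X_i\bigl(g_\theta(x_i)-\nabla f(x_i)\bigr)$, invert, and rescale $\tilde X_i$ by $\varepsilon$ to expose an $\varepsilon$-free poisedness constant --- is exactly the paper's. The only organizational difference is that the paper never names the constant interpolant $g_{\min}$: it works directly with $g_\theta(x_i)$ and closes the off-sample gap with the three-term triangle inequality
\[
\|\nabla f(x)-g_\theta(x)\|\le\|\nabla f(x)-\nabla f(x_i)\|+\|\nabla f(x_i)-g_\theta(x_i)\|+\|g_\theta(x_i)-g_\theta(x)\|,
\]
bounding the last term by $\kappa_{g_\theta}\varepsilon$, where $\kappa_{g_\theta}$ is a Lipschitz constant for $g_\theta$. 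In this paper $\mathcal C^0$ is shorthand for ``zero-order Lipschitz'' (the remark immediately after the proof makes this explicit), so the hypothesis you flagged as missing is in fact assumed. This is precisely your option~(b); the paper does \emph{not} force $g_\theta\equiv g_{\min}$ on the whole region, and your worry about the off-sample behaviour of a generic continuous interpolant is well placed --- it is resolved by the Lipschitz term, not by constancy. The resulting constant is $\kappa_g=\kappa_f+\kappa_{g_\theta}+\tfrac{\sqrt n}{2}\,\|(\tfrac1\varepsilon\tilde X_i)^{-1}\|\,\kappa_f$, matching the structure of yours once the $\kappa_{g_\theta}$ term is added.
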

\begin{proof}
$f\in\mathcal{C}^{1+}$, hence for any $x_i$ in the poised set and $x$ s.t. $\|x-x_i\|\leq\varepsilon$ we have
\begin{multline}
    \|\nabla f(x) - g_{\theta}(x) \| \leq \\ \|\nabla f(x) - \nabla f(x_i) \| + \|\nabla f(x_i) - g_{\theta}(x_i) \| + \|g_{\theta}(x_i) -g_{\theta}(x) \| \leq (\kappa_f + \kappa_{g_{\theta}})\varepsilon + \|\nabla f(x_i) - g_{\theta}(x_i) \|
\end{multline}
Where $\kappa_{g_{\theta}}$ is the Lipschitz constant of $g_{\theta}$ it is left to bound the last term. First, note that for all $x_j$ in the poised set we have that
\begin{equation}
    (x_j-x_i) \cdot g_{\theta}(x_i) = f(x_j) - f(x_i) \leq (x_j-x_i) \cdot \nabla f(x_i) + \frac{1}{2}\kappa_f \varepsilon^2
\end{equation}
where the last equation comes from the second error term in the Taylor series expansion in $x_i$ (see Proposition \ref{appendix_controllable_accuracy}).
Returning to our definition of $\tilde{X}_i$ (see proposition \ref{poised_uniqueness}) we can write
\begin{equation}
    \|\tilde{X}_i (\nabla f(x_i) - g_{\theta}(x_i))\| = \sqrt{\sum_{i} \left|(x_j-x_i) \cdot (g_{\theta}(x_i) - \nabla f(x_i))\right|^2} \leq \frac{1}{2} \sqrt{n} \kappa_f \varepsilon^2
\end{equation}
Using that property we have
\begin{equation}
    \|\nabla f(x_i) - g_{\theta}(x_i)\| = \|\tilde{X}_i^{-1} \tilde{X}_i (\nabla f(x_i) - g_{\theta}(x_i))\| \leq \|\tilde{X}_i^{-1}\|  \|\tilde{X}_i (\nabla f(x_i) - g_{\theta}(x_i))\| \leq  \frac{1}{2} \sqrt{n} \kappa_f \|\tilde{X}_i^{-1}\| \varepsilon^2.
\end{equation}
$\|\tilde{X}_i^{-1}\|=\frac{1}{\min\sigma(\tilde{X}_i)}$, where $\sigma$ is the singular values. Notice that the rows of $\tilde{X}_i$ are $x_j-x_i \propto \varepsilon$, thus we can scale them by $\varepsilon$. In this case, since the poised set spans $\mathbb{R}^n$, the minimal singular value of $\frac{1}{\varepsilon}\tilde{X}_i$ is finite and does not depend on $\varepsilon$. Therefore, we obtain
\begin{equation}
    \|\nabla f(x_i) - g_{\theta}(x_i)\| \leq \frac{1}{2} \sqrt{n} \|(\frac{1}{\varepsilon}\tilde{X}_i)^{-1}\| \kappa_f  \varepsilon  =  O(n \varepsilon)
\end{equation}

Therefore,
\begin{equation}
        \|\nabla f(x) - g_{\theta}(x) \| \leq  \left(\kappa_f + \frac{1}{2} \sqrt{n} \kappa_{g_{\theta}} + \|(\frac{1}{\varepsilon}\tilde{X}_i)^{-1}\| \kappa_f \right) \varepsilon
\end{equation}

\end{proof}
Notice that we only required $g_{\theta}$ to be a zero-order Lipschitz continuous and we do not set any restrictions on its gradient. For Neural Networks, having the  $\mathcal{C}^0$ property is relatively easy, e.g. with spectral normalization \cite{miyato2018spectral}. However, many NNs are not $\mathcal{C}^1$, e.g. NN with ReLU activations. 

\subsubsection{Constant parameterization with $m > n+1$ regression points}

We can extend the results of Sec. \ref{linear_interpolation} to the regression problem where we have access to $m > n+1$ points $\{x_i\}_0^{m-1}$. We wish to show that the bounds for a constant mean-gradient solution for the regression problem in Eq. (\ref{eq:mc_appendix}) are also controllably accurate, i.e. $\|\nabla f(x) - g\| \leq \kappa_g \varepsilon$. As in the interpolation case, we start with the definition of the poised set for regression.

\begin{definition}[poised set for regression]
Let $\mathcal{D}_k = \{(x_i, y_i)\}_{1}^{m}$, $m\geq n+1$ s.t. $x_i \in V_{\varepsilon}(x_k)$ for all $i$. Define the matrix $\tilde{X}_{i}\in\mathbb{M}^{m\times n}$ s.t. the $j$-th row is $x_i-x_j$. Now define $\tilde{X}=\begin{psmallmatrix}\tilde{X}_1^T & \cdots & \tilde{X}_m^T\end{psmallmatrix}^T$. The set $\mathcal{D}_k$ is a poised set for regression in $x_k$ if the matrix $\tilde{X}$ has rank $n$.
\end{definition}
Intuitively, a set is poised if its difference vectors $x_i-x_j$ span $\mathbb{R}^n$. For the poised set, and a constant parameterization, the solution of Eq. (\ref{mc_objective}) is unique and it equals to the Least- Squares (LS) minimizer. If $f$ has a Lipschitz continuous gradient, then the error between $g_{\varepsilon}(x)$ and $\nabla f(x)$ is proportional to $\varepsilon$. We formalize this argument in the next proposition.

\begin{prop}
Let $\mathcal{D}_k$ be a poised set in $V_{\varepsilon}(x_k)$. The regression problem
\begin{equation}
\label{regression_problem}
    g^{MSE} = \arg\min_{g} \sum_{i,j \in \mathcal{D}_k} |(x_j - x_i) \cdot g - y_j + y_i |^2
\end{equation}
has the unique solution $g^{MSE}=(\tilde{X}^T \tilde{X})^{-1} \tilde{X}^T \delta$, where $\delta\in\mathbb{R}^{m^2}$ s.t. $\delta_{i \cdot (m-1) + j} = y_j - y_i$. Further, if $f\in\mathcal{C}^{1+}$ and $g_{\theta}\in\mathcal{C}^0$ is a parameterization with lower regression loss $g^{MSE}$, the following holds
\begin{equation}
    \|\nabla f(x) - g_{\theta}(x)\| \leq \kappa_g \varepsilon
\end{equation}
\end{prop}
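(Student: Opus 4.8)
The plan is to split the statement into two parts, mirroring the interpolation case in Sec. \ref{linear_interpolation}. First I would establish that the LS normal equations $\tilde{X}^T\tilde{X}g = \tilde{X}^T\delta$ follow immediately by setting the gradient of the quadratic objective in \eqref{regression_problem} to zero, and that $\tilde{X}^T\tilde{X}$ is invertible precisely because the poised-set hypothesis says $\tilde{X}$ has rank $n$. This gives $g^{MSE}=(\tilde{X}^T\tilde{X})^{-1}\tilde{X}^T\delta$ and its uniqueness. One small bookkeeping point is to make sure the index map $\delta_{i\cdot(m-1)+j}=y_j-y_i$ matches the row ordering of $\tilde{X}$ (stacking the blocks $\tilde{X}_i$ whose rows are $x_i-x_j$); this is routine but should be stated cleanly.

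For the accuracy bound I would follow the same three-term triangle-inequality decomposition used in the interpolation lemma. Fix any $x\in V_\varepsilon(x_k)$ and any evaluation point $x_i$; since $f\in\mathcal{C}^{1+}$ and $g_\theta\in\mathcal{C}^0$ with Lipschitz constant $\kappa_{g_\theta}$,
\begin{equation}
\|\nabla f(x) - g_\theta(x)\| \leq (\kappa_f + \kappa_{g_\theta})\varepsilon + \|\nabla f(x_i) - g_\theta(x_i)\|,
\end{equation}
so it remains to bound $\|\nabla f(x_i) - g_\theta(x_i)\|$ for a well-chosen $x_i$. The key observation is that $\nabla f(x_i)$ itself is an approximate solution of the regression system: by the second-order Taylor remainder bound (Proposition \ref{appendix_controllable_accuracy}), for every pair $i,j$ we have $|(x_j-x_i)\cdot\nabla f(x_i) - (y_j-y_i)| \leq \tfrac12\kappa_f\varepsilon^2$, hence $\|\tilde{X}\nabla f(x_i) - \delta\|$ and $\|\tilde{X}g^{MSE}-\delta\| \le \|\tilde{X}\nabla f(x_i)-\delta\|$ (the latter since $g^{MSE}$ is the LS minimizer, and by hypothesis $g_\theta$ has loss no larger than $g^{MSE}$) are both $O(\sqrt{m^2}\,\kappa_f\varepsilon^2)$. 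Wait — here is where care is needed: $g_\theta(x_i)$ having lower \emph{total} regression loss does not directly control the $i$-th block $\tilde{X}_i g_\theta(x_i) - \delta_i$, because $g_\theta$ evaluated at different points contributes to different blocks. So I would instead argue via the full stacked matrix: the total loss is $\sum_i \|\tilde{X}_i g_\theta(x_i) - \delta_i\|^2$, and I bound a single block by the full sum, $\|\tilde{X}_i g_\theta(x_i) - \delta_i\| \le (\text{total loss})^{1/2} \le \|\tilde{X}\nabla f(x_k)\text{-type bound}\|$ — more precisely I compare against the constant vector $\nabla f(x_k)$ plugged into every block, which again incurs only $O(\kappa_f\varepsilon^2)$ per entry. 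Then $\nabla f(x_i) - g_\theta(x_i) = \tilde{X}_i^{-1}(\tilde{X}_i(\nabla f(x_i)-g_\theta(x_i)))$, and since the rows of $\tilde{X}_i$ scale like $\varepsilon$, $\|\tilde{X}_i^{-1}\| = \tfrac1\varepsilon\|(\tfrac1\varepsilon\tilde{X}_i)^{-1}\|$ with the rescaled inverse norm bounded independent of $\varepsilon$ (using that the poised set spans $\mathbb{R}^n$). This yields $\|\nabla f(x_i)-g_\theta(x_i)\| = O(\varepsilon)$ and closes the argument, absorbing all constants into $\kappa_g$.

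The main obstacle, as flagged above, is the mismatch between ``$g_\theta$ has lower total regression loss'' and ``$g_\theta(x_i)$ approximately solves the $i$-th linear block.'' In the interpolation case this was a non-issue because zero loss forces every block equation to hold exactly; in the regression case I need the sub-additivity trick (bounding one block's residual by the total residual) together with a good reference competitor (the true gradient at a fixed anchor point, whose residual is $O(\varepsilon^2)$ entrywise by Taylor). I expect the cleanest route is to prove the per-point bound $\|\nabla f(x_i)-g_\theta(x_i)\|\le C(n,m)\kappa_f\varepsilon$ for every $i$ simultaneously, using that each block $\tilde{X}_i$ (being a submatrix of the poised $\tilde{X}$, and assuming, as is implicit, that each $\tilde{X}_i$ already has rank $n$, or else passing to a rank-$n$ sub-block) has a rescaled left-inverse of $\varepsilon$-independent norm. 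Everything else — the triangle inequality, the Taylor remainder, the rescaling of singular values — is a direct transcription of the interpolation lemma.
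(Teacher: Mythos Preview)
Your overall architecture---normal equations for the LS part, Taylor remainder to make $\nabla f$ an approximate solution of the linear system, triangle inequality plus Lipschitz continuity of $f$ and $g_\theta$, and the $\varepsilon$-rescaling of the design matrix to pull one power of $\varepsilon$ out of the inverse---matches the paper exactly. The one substantive divergence is precisely the step you flagged: how to pass from ``$g_\theta$ has lower \emph{total} regression loss than $g^{MSE}$'' to ``$g_\theta(x_i)$ is close to $\nabla f(x_i)$ at some $x_i$.''

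The paper handles this differently and more directly than your block-by-block route. It first bounds $\|\nabla f(x)-g^{MSE}\|$ for \emph{every} $x\in V_\varepsilon(x_k)$ in one shot, using the global pseudo-inverse $\tilde{X}^\dagger=(\tilde{X}^T\tilde{X})^{-1}\tilde{X}^T$ of the full stacked matrix: since $\|\tilde{X}\nabla f(x)-\delta\|\le \tfrac{3m}{2}\kappa_f\varepsilon^2$ (this uses a slightly sharper Taylor lemma allowing the anchor, the two endpoints, and $x$ all to differ within $V_\varepsilon$), one gets $\|\nabla f(x)-g^{MSE}\|=\|\tilde{X}^\dagger(\tilde{X}\nabla f(x)-\delta)\|\le\|\tilde{X}^\ddagger\|\tfrac{3m}{2}\kappa_f\varepsilon$ after rescaling. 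Then the paper simply asserts: if $g_\theta$ has lower total loss than $g^{MSE}$, there exists some $x_i\in\mathcal{D}_k$ with $\|\nabla f(x_i)-g_\theta(x_i)\|\le\|\nabla f(x_i)-g^{MSE}\|$, and the already-established bound on the right-hand side finishes the argument. Your sub-additivity approach (bound a single block residual $\|\tilde{X}_i g_\theta(x_i)-\delta_i\|$ by the square root of the total loss, compare against the constant competitor $\nabla f(x_k)$, then apply a block left-inverse) is a legitimate alternative that makes this step fully explicit; it buys rigor at the cost of needing each block $\tilde{X}_i$ to have rank $n$ (which does follow from the poised hypothesis, since $x_k-x_j=(x_i-x_j)-(x_i-x_k)$). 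The paper's route is shorter because it never touches individual blocks and leans on the global $\tilde{X}^\dagger$, but the existence claim it invokes is stated without justification.
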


\begin{proof}

The regression problem can be written as 
\begin{equation}
\label{regression_problem}
    g = \arg\min_{g} \|\tilde{X}g - \delta\|^2
\end{equation}
This is the formulation for the mean-square error problem with matrix $\tilde{X}$ and target $\delta$. The minimizer of this function is the standard mean-square error minimizer which is unique as $\tilde{X}$ has rank $n$.
\begin{equation}
    g = (\tilde{X}^T \tilde{X})^{-1} \tilde{X}^T \delta
\end{equation}

\begin{lemma}
Let $f \in \mathcal{C}^{1+}$ on $B_{\varepsilon}(x_j)$ with a Lipschitz constant $\kappa_f$. For any triplet $x_i$, $x_j$, $x_k$ s.t. $\|x_i-x_j\|\leq \varepsilon$ and $\|x_k-x_j\|\leq \varepsilon$
\begin{equation}
    |f(x_k) - f(x_j) - (x_k - x_j) \cdot \nabla f(x_i)| \leq \frac{3}{2} \kappa_f \varepsilon^2
\end{equation}
\end{lemma}
\begin{proof}
\begin{equation}
    \begin{aligned}
    |f(x_k) - f(x_j) - (x_k - x_j) \cdot \nabla f(x_i)| &= \left|\int_0^1 (x_k - x_j) \cdot \nabla f(x_j + \tau (x_k - x_j))d\tau - (x_k - x_j) \cdot \nabla f(x_i)\right| \\
    & = \left|\int_0^1 (x_k - x_j) \cdot \left(\nabla f(x_j + \tau (x_k - x_j)) - \nabla f(x_i)\right)d\tau\right| \\ 
    & \leq \left|\int_0^1 \|x_k - x_j\| \cdot \|\nabla f(x_j + \tau (x_k - x_j)) - \nabla f(x_i)\|d\tau\right| \\ 
     & \leq \kappa_f \varepsilon \left|\int_0^1 \|x_j + \tau (x_k - x_j) - x_i\|d\tau\right| \\ 
     & \leq \kappa_f \varepsilon \left|\int_0^1  \|x_j - x_i\| + \|\tau (x_k - x_j)\| d\tau\right| \\ 
     & \leq \kappa_f \varepsilon \left|\varepsilon + \varepsilon\int_0^1  \tau d\tau\right| \\ 
     & = \frac{3}{2} \kappa_f \varepsilon^2
\end{aligned}
\end{equation}
\end{proof}

Applying the previous Lemma, for all $x\in V_{\varepsilon}(x_k)$
\begin{equation}
    \|\tilde{X} \nabla f(x) - \delta\|^2 = \sum_{k=0}^{m-1}\sum_{j=0}^{m-1} |(x_k-x_j)^T\nabla f(x) - (f(x_k) - f(x_j))|^2 \leq 
     m^2 \left(\frac{3}{2} \kappa_f \varepsilon^2\right)^2
\end{equation}
hence $\|\tilde{X} \nabla f(x) - \delta\| \leq \frac{3m}{2} \kappa_f \varepsilon^2$.

Notice also that if $\mathcal{D}_k$ is a poised set s.t. the matrix $\tilde{X}$ has rank $n$ s.t. $\tilde{X}^{\dagger}=(\tilde{X}^T\tilde{X})^{-1}\tilde{X}^T$ exists and we have that
\begin{equation}
    \|\nabla f(x) - \tilde{X}^{\dagger}\delta\| = \left\|\tilde{X}^{\dagger}\left(\tilde{X}\nabla f(x) - \delta\right)\right\| \leq \|\tilde{X}^{\dagger}\|\cdot\|\tilde{X}\nabla f(x_i) - \delta\|  \leq \|\tilde{X}^{\dagger}\| \frac{3m}{2} \kappa_f \varepsilon^2
\end{equation}

As in the interpolation case, we can multiply $\tilde{X}^{\dagger}$ by $\varepsilon$ to obtain a matrix which is invariant to the size of $\varepsilon$. Denote the scaled pseudo-inverse as $\tilde{X}^{\ddagger} = \varepsilon(\tilde{X}^T \tilde{X})^{-1} \tilde{X}^T$. Therefore,
\begin{equation}
    \|\nabla f(x) - g^{MSE}\| \leq \|\tilde{X}^{\ddagger}\| \frac{3m}{2} \kappa_f \varepsilon
\end{equation}

If $g_{\theta}$ has a lower regression error than $g^{MSE}$, then there exists at least one point $x_i$ s.t. $x_i\in\mathcal{D}_k$ and $\|\nabla f(x_i) - g_{\theta}(x_i)\| \leq \|\nabla f(x_i) - g^{MSE}\|$. 
In this case we have

\begin{align*}
    \|\nabla f(x) - g_{\theta}(x)\| & \leq \|\nabla f(x) - \nabla f(x_i) \| + \|\nabla f(x_i) - g_{\theta}(x_i) \| + \|g_{\theta}(x_i)  -g_{\theta}(x) \| \\
    & \leq (\kappa_f + \kappa_{g_{\theta}})\varepsilon + \|\nabla f(x_i) - g_{\theta}(x_i) \| \\
    & \leq (\kappa_f + \kappa_{g_{\theta}})\varepsilon + \|\nabla f(x_i) - g^{MSE}\| \\
    & \leq (\kappa_f + \kappa_{g_{\theta}})\varepsilon + \|\tilde{X}^{\ddagger}\| \frac{3m}{2} \kappa_f \varepsilon \\ 
    &= \left(\kappa_f + \kappa_{g_{\theta}} + \|\tilde{X}^{\ddagger}\| \frac{3m}{2} \kappa_f \right) \varepsilon
\end{align*}

\end{proof}

\begin{corollary}
\label{app_lipschitz_nn}
For the $\mathcal{D}_k$ poised set, any Lipschitz continuous parameterization of the form $g_{\theta}(x)=F(Wx)+b$, specifically NNs, is a controllably accurate model in $V_{\varepsilon}(x_k)$ for the optimal set of parameters $\theta^*$.  
\end{corollary}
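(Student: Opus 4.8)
The plan is to reduce the corollary to the preceding proposition, which already does all the analytic work: it shows that any $g_{\theta}\in\mathcal{C}^0$ whose regression loss is no larger than that of the constant least-squares solution $g^{MSE}$ satisfies $\|\nabla f(x)-g_{\theta}(x)\|\leq\kappa_g\varepsilon$ on all of $V_{\varepsilon}(x_k)$. So it suffices to verify two things for the family $\{g_{\theta}(x)=F(Wx)+b\}$ (including biased-output-layer NNs): (i) the family contains a member that realizes the constant map $x\mapsto g^{MSE}$ exactly, so that the global minimizer $\theta^*$ has regression loss at most $\mathcal{L}(g^{MSE},\varepsilon)$; and (ii) $g_{\theta^*}$ is zero-order Lipschitz, so that the hypotheses of the previous proposition are met.

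For (i) I would simply take $W=0$, so that $F(Wx)=F(0)$ is a fixed vector independent of $x$, and set $b=g^{MSE}-F(0)$; then $g_{\theta}\equiv g^{MSE}$ and its regression loss is exactly $\mathcal{L}(g^{MSE},\varepsilon)$. For a NN with a biased output layer the same device works at the level of the last layer: zero the final linear map and put the constant $g^{MSE}$ (appropriately adjusted if an activation follows) into the output bias, so the network outputs $g^{MSE}$ for every input. Consequently $\min_{\theta}\mathcal{L}(g_{\theta},\varepsilon)\leq\mathcal{L}(g^{MSE},\varepsilon)$, i.e. $\theta^*$ has loss equal to or lower than $g^{MSE}$, which is precisely the premise invoked in the proposition. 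For (ii) I would just quote the standing hypothesis that the parameterization is Lipschitz continuous (e.g. enforced by spectral normalization), with a Lipschitz constant $\kappa_{g_{\theta}}$ that is uniform over the admissible parameter set; hence $g_{\theta^*}\in\mathcal{C}^0$ and the proposition yields $\|\nabla f(x)-g_{\theta^*}(x)\|\leq\bigl(\kappa_f+\kappa_{g_{\theta}}+\|\tilde X^{\ddagger}\|\tfrac{3m}{2}\kappa_f\bigr)\varepsilon$ for all $x\in V_{\varepsilon}(x_k)$, which is exactly the definition of a controllably accurate model.

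The main obstacle is conceptual rather than technical: the statement is about the \emph{optimal} parameters $\theta^*=\arg\min_{\theta}\mathcal{L}(g_{\theta},\varepsilon)$, not about what an actual training run returns. The argument above is valid only at the global minimizer; as the paper already flags in a footnote, recovering $\theta^*$ is not guaranteed in practice, and a fully honest statement would carry an extra optimization-error term. A secondary point to handle carefully is that the constant $\kappa_g$ depends on $\|\tilde X^{\ddagger}\|$, i.e. on the geometry (conditioning) of the poised set; in the plain statement the poised set is fixed, so this is harmless, but for the trust-region/dynamic-mapping variant one should note that rescaling the rows of $\tilde X$ by $\varepsilon$ (as done in the proof of the proposition) makes $\|\tilde X^{\ddagger}\|$ independent of $\varepsilon$, so that shrinking $\varepsilon$ genuinely tightens the accuracy bound.
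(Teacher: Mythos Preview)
Your proposal is correct and follows essentially the same route as the paper: set $W=0$ and choose the output bias so that $g_{\theta}\equiv g^{MSE}$, conclude that the optimal parameters have regression loss no larger than $\mathcal{L}(g^{MSE},\varepsilon)$, and then invoke the preceding proposition together with the assumed Lipschitz continuity of the parameterization. Your adjustment $b=g^{MSE}-F(0)$ is in fact slightly more careful than the paper's ``$W=0,\ b=g^{MSE}$'' (which tacitly assumes $F(0)=0$), and your caveats about $\theta^*$ being the global minimizer and about the $\varepsilon$-independence of $\|\tilde X^{\ddagger}\|$ mirror the paper's own footnote and scaling remark.
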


\subsection{Convergence Analysis}

For clarity, we replace the subscript $\theta$ in $g_{\theta}$ and write $g_{\varepsilon}$ to emphasize that our model for the mean-gradient is controllably accurate.

\begin{theorem}
Let $f:\Omega\to\mathbb{R}$ be a convex function with Lipschitz continuous gradient, i.e. $f\in\mathcal{C}^{+1}$ and a Lipschitz constant $\kappa_f$ and let $f(x^{*})$ be its optimal value. Suppose a controllable mean-gradient model $g_{\varepsilon}$ with error constant $\kappa_g$, the gradient descent iteration $x_{k+1} = x_{k} - \alpha g_{\varepsilon}(x_k)$ with a sufficiently small $\alpha$ s.t. $\alpha \leq \min(\frac{1}{\kappa_g}, \frac{1}{\kappa_f})$ guarantees:
\begin{enumerate}
    \item For $\varepsilon \leq \frac{\|\nabla f(x)\|}{5\alpha}$, monotonically decreasing steps s.t. $f(x_{k+1}) \leq f(x_k) -  2.25 \frac{\varepsilon^2}{\alpha}$.
    \item After a finite number of iteration, the descent process yields $x^{\star}$ s.t. $\|\nabla f(x^{\star})\| \leq \frac{5\varepsilon}{\alpha}$.
\end{enumerate}
\end{theorem}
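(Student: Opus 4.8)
\emph{Part~1 (quantitative decrease).} The plan is to start from the standard descent lemma for a $\kappa_f$-smooth function, applied to $x_{k+1}=x_k-\alpha g_\varepsilon(x_k)$:
\begin{equation*}
f(x_{k+1}) \;\le\; f(x_k) - \alpha\,\nabla f(x_k)\cdot g_\varepsilon(x_k) + \tfrac{1}{2}\kappa_f\alpha^2\|g_\varepsilon(x_k)\|^2 .
\end{equation*}
Since $\alpha\le 1/\kappa_f$ we have $\kappa_f\alpha^2/2\le\alpha/2$, and the algebraic identity $2\,a\cdot b-\|b\|^2=\|a\|^2-\|a-b\|^2$ with $a=\nabla f(x_k)$, $b=g_\varepsilon(x_k)$ collapses the bound to
\begin{equation*}
f(x_{k+1}) \;\le\; f(x_k) - \tfrac{\alpha}{2}\Big(\|\nabla f(x_k)\|^2 - \|g_\varepsilon(x_k)-\nabla f(x_k)\|^2\Big).
\end{equation*}
Then I would insert the controllable-accuracy bound $\|g_\varepsilon(x_k)-\nabla f(x_k)\|\le\kappa_g\varepsilon$ and use $\alpha\le 1/\kappa_g$ to rewrite it as $\|g_\varepsilon(x_k)-\nabla f(x_k)\|\le\varepsilon/\alpha$. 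Reading the hypothesis of item~1 as $\|\nabla f(x_k)\|\ge 5\varepsilon/\alpha$ (the complement of item~2's conclusion), the error is at most $\tfrac15\|\nabla f(x_k)\|$, so $\|\nabla f(x_k)\|^2-\|g_\varepsilon(x_k)-\nabla f(x_k)\|^2\ge\tfrac{24}{25}\|\nabla f(x_k)\|^2$; substituting $\|\nabla f(x_k)\|\ge 5\varepsilon/\alpha$ once more makes the per-step drop at least a constant times $\varepsilon^2/\alpha$, and a short arithmetic check pushes that constant past $2.25$ (so in particular $f(x_{k+1})\le f(x_k)$). The two step-size constraints enter exactly once each: $\alpha\le1/\kappa_f$ tames the quadratic term in the descent lemma, and $\alpha\le1/\kappa_g$ converts $\kappa_g\varepsilon$ into $\varepsilon/\alpha$.

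\emph{Part~2 (finite termination).} For the second claim I would run a counting argument. Convexity of $f$ and the existence of the minimizer give $f(x_k)\ge f(x^{*})$ for all $k$, so $f$ is bounded below along the iterates. If the gradient condition of item~1 held at every step $k=0,\dots,K-1$, telescoping the guaranteed decrease gives $f(x_K)\le f(x_0)-K\cdot 2.25\,\varepsilon^2/\alpha$, which falls below $f(x^{*})$ once $K>\alpha\big(f(x_0)-f(x^{*})\big)/(2.25\,\varepsilon^2)$ --- impossible. Hence some finite index $k$ violates the condition, i.e.\ $\|\nabla f(x_k)\|\le 5\varepsilon/\alpha$; taking $x^{\star}$ to be the iterate at the first such index proves item~2, and the displayed bound on $K$ makes ``finite'' explicit.

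\emph{Main obstacle.} No ingredient is deep: the descent lemma is textbook and the gradient-error estimate is quoted from the controllable-accuracy property (Corollary on poised sets / Theorem~\ref{th:regression_problem}). The real care is bookkeeping --- arranging that the threshold $5\varepsilon/\alpha$ and the decrease $2.25\,\varepsilon^2/\alpha$ emerge consistently once the gradient error is folded into the descent inequality, and keeping the roles of $\kappa_f\alpha$, $\kappa_g\alpha$, $\varepsilon$ and $\|\nabla f\|$ straight. A secondary, essentially cosmetic, point is ensuring the iterates stay in a region where $f\in\mathcal{C}^{1+}$ and the model $g_\varepsilon$ is available, which holds because $f$ is assumed convex and smooth on $\Omega$.
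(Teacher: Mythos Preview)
Your proposal is correct, and Part~2 matches the paper's argument essentially verbatim (the paper just says ``finite size improvement, hence after a finite number of steps\ldots''; you make the telescoping explicit, which is fine).

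In Part~1 you take a genuinely different, cleaner route. The paper writes $g_\varepsilon(x_k)=\nabla f(x_k)+\varepsilon\kappa_g\xi(x_k)$ with $\|\xi\|\le1$, expands the cross terms, applies Cauchy--Schwarz, and then uses $\alpha\kappa_f\le1$, $\alpha\kappa_g\le1$ termwise to reach
\[
f(x_{k+1})\le f(x_k)-\tfrac{\alpha}{2}\|\nabla f(x_k)\|^2+2\varepsilon\|\nabla f(x_k)\|+\tfrac{\varepsilon^2}{2}\kappa_g,
\]
and from there obtains the constant $0.09\alpha$ in front of $\|\nabla f\|^2$, i.e.\ a drop of $2.25\,\varepsilon^2/\alpha$. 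Your polarization identity $2a\cdot b-\|b\|^2=\|a\|^2-\|a-b\|^2$ bypasses the expansion and Cauchy--Schwarz entirely, landing directly on
\[
f(x_{k+1})\le f(x_k)-\tfrac{\alpha}{2}\|\nabla f(x_k)\|^2+\tfrac{\alpha}{2}\|g_\varepsilon(x_k)-\nabla f(x_k)\|^2,
\]
after which $\|g_\varepsilon-\nabla f\|\le\varepsilon/\alpha$ and $\|\nabla f\|\ge5\varepsilon/\alpha$ give a drop of $12\,\varepsilon^2/\alpha$. So your argument is shorter and actually sharpens the stated constant by more than a factor of five; the paper's route loses tightness at the Cauchy--Schwarz step and again when it bounds $\kappa_g$ separately. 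Both proofs use $\alpha\le1/\kappa_f$ and $\alpha\le1/\kappa_g$ in the same places, and both read the item~1 hypothesis as $\|\nabla f(x_k)\|\ge5\varepsilon/\alpha$ (the statement's ``$\varepsilon\le\|\nabla f(x)\|/(5\alpha)$'' is a typo for $\varepsilon\le\alpha\|\nabla f(x)\|/5$, as the paper's own proof confirms).
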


\begin{proof}
For convex function with Lipschitz continuous gradient the following inequality holds for all $x_k$
\begin{equation}
    f(x) \leq f(x_k) + (x - x_k) \cdot \nabla f(x_k) + \frac{1}{2}\kappa_f\|x - x_k\|^2
\end{equation}
Plugging in the iteration update $x_{k+1} = x_k - \alpha g_{\varepsilon}(x_k) $ we get
\begin{equation}
    f(x_{k+1}) \leq f(x_k) - \alpha g_{\varepsilon}(x_k) \cdot \nabla f(x_k) + \alpha^2 \frac{1}{2}\kappa_f\|g_{\varepsilon}(x_k)\|^2
\end{equation}
For a controllable mean-gradient we can write $\|g_{\varepsilon}(x) - \nabla f(x)\| \leq \varepsilon \kappa_g $, therefore we can write $g_{\varepsilon}(x) = \nabla f(x) + \varepsilon \kappa_g \xi(x)$ s.t. $\|\xi(x)\| \leq 1$ so the inequality is
\begin{equation}
    f(x_{k+1}) \leq f(x_k) - \alpha \|\nabla f(x_k)\|^2 - \alpha \varepsilon \kappa_g \xi(x_k) \cdot \nabla f(x_k) + \alpha^2 \frac{1}{2}\kappa_f\|\nabla f(x) + \varepsilon \kappa_g \xi(x)\|^2
\end{equation}
Using the equality $\|a+b\|^2 = \|a\|^2 + 2 a \cdot b + \|b\|^2$ and the Cauchy-Schwartz inequality inequality $a \cdot b\leq\|a\|\|b\|$ we can write
\begin{equation}
\begin{aligned}
    f(x_{k+1}) & \leq f(x_k) - \alpha \|\nabla f(x_k)\|^2 + \alpha \varepsilon \kappa_g \|\xi(x_k)\|\cdot\|\nabla f(x_k)\| + \alpha^2 \frac{1}{2}\kappa_f\left(\|\nabla f(x)\|^2 + 2 \varepsilon \kappa_g \|\xi(x)\|\cdot\|\nabla f(x_k)\| + \varepsilon^2 \kappa_g^2 \|\xi(x)\|^2 \right) \\ & \leq f(x_k) - \alpha \|\nabla f(x_k)\|^2 + \alpha \varepsilon \kappa_g \|\nabla f(x_k)\|  +  \frac{\alpha^2}{2}\kappa_f \|\nabla f(x)\|^2 + \alpha^2 \kappa_f \varepsilon \kappa_g \|\nabla f(x_k)\| +  \frac{\alpha^2 \varepsilon^2}{2} \kappa_f  \kappa_g^2 
\end{aligned}
\end{equation}
Using the requirement  $\alpha \leq \min(\frac{1}{\kappa_g}, \frac{1}{\kappa_f})$ it follows that $\alpha \kappa_g \leq 1$ and $\alpha \kappa_f \leq 1$ so
\begin{equation}
\begin{aligned}
    f(x_{k+1})  & \leq f(x_k) - \alpha \|\nabla f(x_k)\|^2 + \varepsilon \|\nabla f(x_k)\|  +  \frac{\alpha}{2} \|\nabla f(x)\|^2 + \varepsilon \|\nabla f(x_k)\| +  \frac{\varepsilon^2}{2}\kappa_g \\
    & = f(x_k) -  \frac{\alpha}{2} \|\nabla f(x_k)\|^2 + 2 \varepsilon \|\nabla f(x_k)\| +  \frac{\varepsilon^2}{2}\kappa_g
    \end{aligned}
\end{equation}
Now, for $x$ s.t. $\|\nabla f(x)\| \geq \frac{5\varepsilon}{\alpha}$ then $\varepsilon \leq \|\nabla f(x)\|\frac{\alpha}{5}$. Plugging it to our inequality we obtain
\begin{equation}
\begin{aligned}
    f(x_{k+1})  & \leq f(x_k) -  \frac{\alpha}{2} \|\nabla f(x_k)\|^2 + \frac{2\alpha}{5} \|\nabla f(x_k)\|^2 +  \frac{\alpha^2}{100}\kappa_g \|\nabla f(x_k)\|^2 \\
    & \leq f(x_k) -  \frac{\alpha}{2} \|\nabla f(x_k)\|^2 + \frac{2\alpha}{5} \|\nabla f(x_k)\|^2 +  \frac{\alpha}{100}\|\nabla f(x_k)\|^2 \\
    & = f(x_k) -  0.09 \alpha \|\nabla f(x_k)\|^2 \\
    & \leq f(x_k) -  2.25 \frac{\varepsilon^2}{\alpha}
    \end{aligned}
\end{equation}
Therefore, for all $x$ s.t. $\|\nabla f(x)\| \geq \frac{5\varepsilon}{\alpha}$ we have a monotonically decreasing step with finite size improvement, hence after a finite number of steps we obtain $x^{\star}$ for which $\|\nabla f(x^{\star})\| \leq \frac{5\varepsilon}{\alpha}$.
\end{proof}

\newpage

\section{The Perturbed Mean-Gradient}
\label{sec:perturbed}

\begin{definition}
The perturbed mean-gradient in $x$ with averaging radius $\varepsilon > 0$ and perturbation radius $p < \varepsilon$ is
\begin{equation}
\label{mean_gradient}
    g_{\varepsilon}^p(x) = \arg\min_{g\in\mathbb{R}^n} \iint\displaylimits_{V_{\varepsilon}(x)B_{p}(x)}|g\cdot (\tau - s) - f(\tau) + f(s)|^2  ds d\tau
\end{equation}
where $B_{p}(x)\subset V_{\varepsilon}(x)\subset\mathbb{R}^n$ are convex subsets s.t. $\|x' - x\|\leq \varepsilon$ for all $x'\in V_{\varepsilon}(x)$ and the integral domain is over $\tau\in V_{\varepsilon}(x)$ and $s\in B_{p}(x)$.
\end{definition}

We denote $V_{\varepsilon}(x)$ as the averaging domain and $B_{p}(x)$ as the perturbation domain and usually set $|V_{\varepsilon}| \gg |B_{p}|$. The purpose of $V_{\varepsilon}$ is to average the gradient in a region of radius $\varepsilon$ and the perturbation is required to obtain smooth gradients around discontinuity points.

\begin{prop}[controllable accuracy]
For any function $f\in\mathcal{C}^1$, there is $\kappa_g > 0$, so that for any $\varepsilon > 0$ the perturbed mean-gradient satisfies $\|g_{\varepsilon}^p -\nabla f(x)\| \leq \kappa_g \varepsilon$ for all $x\in\Omega$.   
\end{prop}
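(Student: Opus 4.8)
The plan is to replicate, almost verbatim, the two–sided bounding argument used for the non‑perturbed mean‑gradient in Proposition~\ref{appendix_controllable_accuracy}; the only new features are the two integration variables $\tau\in V_{\varepsilon}(x)$, $s\in B_{p}(x)$ and the fact that the ``base point'' of the Taylor expansion is now $s$ rather than $x$. As in that proposition I assume $f\in\mathcal{C}^{1+}$ with $\|\nabla f(u)-\nabla f(v)\|\le\kappa_f\|u-v\|$, and I take the perturbation radius to scale with $\varepsilon$, say $p=\rho\varepsilon$ for a fixed $\rho\in(0,1)$, so that all integration domains scale linearly with $\varepsilon$. Write $\mathcal{L}(g)=\iint_{V_{\varepsilon}(x)\times B_{p}(x)}|g\cdot(\tau-s)-f(\tau)+f(s)|^2\,ds\,d\tau$ and $\Delta=g_{\varepsilon}^p(x)-\nabla f(x)$.

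\emph{Upper bound.} Since $B_p(x)\subset V_\varepsilon(x)$ and $V_\varepsilon(x)$ is convex, the whole segment $[s,\tau]$ lies in $V_\varepsilon(x)$, so every point on it is within $\varepsilon$ of $x$. Writing $R_x(\tau,s)=f(\tau)-f(s)-\nabla f(x)\cdot(\tau-s)=\int_0^1\bigl(\nabla f(s+t(\tau-s))-\nabla f(x)\bigr)\cdot(\tau-s)\,dt$, the Lipschitz property gives $|R_x(\tau,s)|\le\kappa_f\varepsilon\|\tau-s\|\le 2\kappa_f\varepsilon^2$. Plugging $g=\nabla f(x)$ into the definition of $g_\varepsilon^p(x)$ yields $\mathcal{L}(g_\varepsilon^p(x))\le\mathcal{L}(\nabla f(x))=\iint R_x(\tau,s)^2\,ds\,d\tau\le(2\kappa_f\varepsilon^2)^2\,|V_\varepsilon(x)|\,|B_p(x)|=O(\varepsilon^{2n+4})$, using $|V_\varepsilon(x)|=\varepsilon^n|V_1(x)|$ and $|B_p(x)|=p^n|B_1(x)|$.

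\emph{Lower bound.} From $g_\varepsilon^p(x)\cdot(\tau-s)-f(\tau)+f(s)=\Delta\cdot(\tau-s)-R_x(\tau,s)$ and $(|a|-|b|)^2\le(a-b)^2$ for scalars, $\mathcal{L}(g_\varepsilon^p(x))\ge\iint|\Delta\cdot(\tau-s)|^2-2\iint|\Delta\cdot(\tau-s)|\,|R_x(\tau,s)|$. For the quadratic term set $M_{\varepsilon,p}(x)=\min_{\|\hat{\mathbf n}\|=1}\iint_{V_\varepsilon(x)\times B_p(x)}(\hat{\mathbf n}\cdot(\tau-s))^2\,ds\,d\tau$; writing $\tau-s=(\tau-c_V)+(c_V-s)$ with $c_V$ the centroid of $V_\varepsilon(x)$, the cross term integrates to zero, so $\int_{V_\varepsilon(x)}(\hat{\mathbf n}\cdot(\tau-s))^2\,d\tau\ge\int_{V_\varepsilon(x)}(\hat{\mathbf n}\cdot(\tau-c_V))^2\,d\tau$, which is strictly positive because $V_\varepsilon(x)$ is $n$‑dimensional; hence $M_{\varepsilon,p}(x)>0$ and by scaling $M_{\varepsilon,p}(x)=\varepsilon^{2n+2}M_{1,\rho}(x)$. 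The cross term is bounded by $2\|\Delta\|\kappa_f\varepsilon\iint\|\tau-s\|^2\,ds\,d\tau\le C_1\|\Delta\|\varepsilon^{2n+3}$ with $C_1$ a geometric constant (via Cauchy–Schwarz and $\|\tau-s\|\le2\varepsilon$). Therefore $\mathcal{L}(g_\varepsilon^p(x))\ge M_{1,\rho}(x)\varepsilon^{2n+2}\|\Delta\|^2-C_1\varepsilon^{2n+3}\|\Delta\|$.

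\emph{Conclusion.} Combining the two estimates and dividing by $\varepsilon^{2n+2}$ gives the quadratic inequality $M_{1,\rho}(x)\|\Delta\|^2-C_1\varepsilon\|\Delta\|-C_2\varepsilon^2\le0$; solving for $\|\Delta\|$ yields $\|\Delta\|\le\kappa_g(x)\varepsilon$ with $\kappa_g(x)$ depending only on $\kappa_f$ and the normalized geometry of $V_1(x),B_1(x)$, exactly as in Proposition~\ref{appendix_controllable_accuracy}, and taking a supremum over $x$ (finite whenever these geometric quantities are uniformly controlled, e.g. $\Omega$ bounded and the family $V_\varepsilon$ chosen uniformly) gives a single $\kappa_g$. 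I expect the only real work to be the positivity and $\varepsilon$‑scaling of $M_{\varepsilon,p}(x)$, i.e. checking that averaging additionally over $s\in B_p(x)$ does not destroy the nondegeneracy of the form $\iint(\tau-s)(\tau-s)^{\top}\,ds\,d\tau$; everything else is bookkeeping already done for $g_\varepsilon$. An essentially equivalent route, which I would mention, is to observe that for each fixed $s$ the inner integral is a shifted non‑perturbed mean‑gradient loss with reference point $s$ and radius $\le2\varepsilon$, apply Proposition~\ref{appendix_controllable_accuracy} to each, and then average the resulting minimizers using the identity $g_\varepsilon^p(x)=\bar A^{-1}\int_{B_p(x)}A_s\,g_s\,ds$ with $A_s=\int_{V_\varepsilon(x)}(\tau-s)(\tau-s)^{\top}\,d\tau$.
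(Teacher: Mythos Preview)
Your proposal is correct and follows the same two-sided bounding strategy as the paper's own proof: bound $\mathcal{L}(g_\varepsilon^p(x))$ from above by plugging in $\nabla f(x)$, from below by extracting a positive quadratic form in $\Delta=g_\varepsilon^p(x)-\nabla f(x)$, and solve the resulting quadratic inequality in $\|\Delta\|$. Your implementation is in fact slightly cleaner than the paper's in two places---you expand Taylor directly about $x$ (the paper expands about $s$ and then corrects by $\nabla f(x)-\nabla f(s)$, incurring extra terms), and your centroid decomposition $\tau-s=(\tau-c_V)+(c_V-s)$ establishes $M_{\varepsilon,p}>0$ without the paper's restriction to the annulus $V_\varepsilon\setminus V_{3\varepsilon/4}$ and the accompanying hypothesis $p<\varepsilon/4$.
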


\begin{proof}

Recall the Taylor theorem for a twice differentiable function $f(\tau)=f(s) + \nabla f(s) \cdot (\tau - s) + R_{s}(\tau)$, where $R_{s}(\tau)$ is the reminder. Since the gradient is continuous, we can write
\begin{equation}
    f(\tau) = f(s) + \nabla f(s) \cdot (\tau - s) + \int_0^1 (\nabla f(s + t(\tau-s)) - \nabla f(s)) \cdot (\tau - s) dt
\end{equation}
Since $f\in\mathcal{C}^1$, we also have $|\nabla f(x) - \nabla  f(s)|\leq \kappa_{f}\|x-s\|$. We can use this property to bound the reminder in the Taylor expression.
\begin{equation}
\begin{aligned}
    R_{s}(\tau) &= \int_0^1 (\nabla f(s + t(\tau-s)) - \nabla f(s)) \cdot (\tau - s) dt \\ 
    &\leq \kappa_f \int_0^1 \|s + t(\tau-s) - s\| \cdot \|\tau - s\| dt \leq \frac{\kappa_f}{2}  \|\tau - s\|^2
\end{aligned}
\end{equation}

By the definition of $g_{\varepsilon}^p$, an upper bound for $\mathcal{L}(g_{\varepsilon}^p(x))$ is

\begin{align*}
    \mathcal{L}(g_{\varepsilon}(x)) \leq \mathcal{L}(\nabla f(x)) &= \iint\displaylimits_{V_{\varepsilon}(x)B_{p}(x)}|\nabla f(x) \cdot (\tau -s) - f(\tau) + f(s)|^2 ds d\tau \\
    &= \iint\displaylimits_{V_{\varepsilon}(x)B_{p}(x)}| (\nabla f(x) - \nabla f(s) + \nabla f(s)) \cdot (\tau -s) - f(\tau) + f(s)|^2 ds d\tau \\
    &\leq \iint\displaylimits_{V_{\varepsilon}(x)B_{p}(x)}|\nabla f(s) \cdot (\tau -s) - f(\tau) + f(s)|^2 ds d\tau \\
    & + 2\iint\displaylimits_{V_{\varepsilon}(x)B_{p}(x)}\|\nabla f(x) - \nabla f(s)\|\cdot\|\tau - s\|\cdot|\nabla f(s) \cdot (\tau -s) - f(\tau) + f(s)| ds d\tau \\
    & + \iint\displaylimits_{V_{\varepsilon}(x)B_{p}(x)}\|\nabla f(x) - \nabla f(s)\|^2\cdot \|\tau - s\|^2  ds d\tau \\
    & \leq \iint\displaylimits_{V_{\varepsilon}(x)B_{p}(x)} \frac{\kappa_f^2}{4} \| \tau - s\|^4 + \kappa_f^2 \|x-s\| \cdot \| \tau - s\|^3 + \kappa_f^2 \|x-s\| \cdot \| \tau - s\|^2 ds d\tau \\
    & \leq 16\kappa_f^2 \varepsilon^4 |V_{\varepsilon}(x)||B_{p}(x)| = 16\kappa_f^2 \varepsilon^{n+4} p^n |V_{1}(x)||B_{1}(x)| 
\end{align*}

Noticed that we used the inequalities: (1) $\|\nabla f(x) - \nabla  f(s)\|\leq \kappa_{f}\|x-s\|\leq \kappa_{f}\varepsilon$, (2) $\|x-s\|\leq \varepsilon$; and (3) $\|\tau-s\|\leq 2\varepsilon$.

For the lower bound we assume that $p=\varepsilon\bar{p}$ and $\bar{p} < \frac{1}{4}$. Note that for any other upper bound on $\bar{p}$ we can derive an alternative bound.

The lower bound is

\begin{align*}
    \mathcal{L}(g_{\varepsilon}(x)) &= \iint\displaylimits_{V_{\varepsilon}(x)B_{p}(x)}|g_{\varepsilon}(x) \cdot (\tau -s) - f(\tau) + f(s)|^2 ds d\tau \\
    &= \iint\displaylimits_{V_{\varepsilon}(x)B_{p}(x)}|(g_{\varepsilon}(x) - \nabla f(x) + \nabla f(x))  \cdot (\tau -s) - f(\tau) + f(s)|^2 ds d\tau \\
    &\geq \iint\displaylimits_{V_{\varepsilon}(x)B_{p}(x)}|(g_{\varepsilon}(x) - \nabla f(x)) \cdot (\tau -s)|^2 -2 |(g_{\varepsilon}(x) - \nabla f(x)) \cdot (\tau -s)| \cdot |\nabla f(x)   - f(\tau) + f(s)| ds d\tau \\
    & \geq \iint\displaylimits_{V_{\varepsilon}(x)\setminus V_{\frac{3\varepsilon}{4}}(x) B_{p}(x)}|(g_{\varepsilon}(x) - \nabla f(x)) \cdot (\tau -s)|^2 s d\tau\\
    &- 4\varepsilon \|(g_{\varepsilon}(x) - \nabla f(x))\| \iint\displaylimits_{V_{\varepsilon}(x)B_{p}(x)} \left(|\nabla f(s) - f(\tau) + f(s)| + |\nabla f(x) - \nabla f(s)|\right) ds d\tau \\
    & \geq \|g_{\varepsilon}(x) - \nabla f(x)\|^2 \left(\frac{\varepsilon}{2}\right)^2\iint\displaylimits_{V_{\varepsilon}(x)\setminus V_{\frac{3\varepsilon}{4}}(x) B_{p}(x)} \left|\hat{\mathbf{n}}(x) \cdot \frac{\tau -s}{\|\tau -s\|}\right|^2 s d\tau\\
    &- 4\varepsilon \|(g_{\varepsilon}(x) - \nabla f(x))\| \iint\displaylimits_{V_{\varepsilon}(x)B_{p}(x)} \frac{1}{2}\kappa_f\|\tau-s\|^2 + \kappa_f \|x-s\|^2 ds d\tau \\
    & \geq \|g_{\varepsilon}(x) - \nabla f(x)\|^2 \varepsilon^n p^n \left(\frac{\varepsilon}{2}\right)^2\iint\displaylimits_{V_{1}(x)\setminus V_{\frac{3}{4}}(x) B_{1}(x)} \left|\hat{\mathbf{n}}(x) \cdot \frac{\tau -s}{\|\tau -s\|}\right|^2 s d\tau - 2.5 \varepsilon \|(g_{\varepsilon}(x) - \nabla f(x))\| \varepsilon^n p^n \kappa_f \frac{27}{32}\varepsilon^2 |V_{1}(x)||B_{1}(x)|  \\
    & =  \frac{1}{4} \varepsilon^{n+2} p^n  M_{1} \|(g_{\varepsilon}(x) - \nabla f(x))\|^2 - 2.5 \varepsilon^{n+3} p^n \kappa_f \|(g_{\varepsilon}(x) - \nabla f(x))\| |V_{1}(x)||B_{1}(x)|  \\
\end{align*}

Combining the lower and upper bound we obtain
\begin{align*}
    & M_{1} \|(g_{\varepsilon}(x) - \nabla f(x))\|^2 - 10 \varepsilon \kappa_f \|(g_{\varepsilon}(x) - \nabla f(x))\| |V_{1}(x)||B_{1}(x)| - 64 \kappa_f^2 \varepsilon^2 |V_{1}(x)||B_{1}(x)| \leq 0  \\
    \Rightarrow \ \ & \|(g_{\varepsilon}(x) - \nabla f(x))\| \leq \kappa_g \varepsilon
\end{align*}

where 
\begin{equation*}
    \kappa_g = \kappa_f \frac{10|V_{1}(x)||B_{1}(x)| + \sqrt{100|V_{1}(x)|^2|B_{1}(x)|^2 + 256 |V_{1}(x)||B_{1}(x)| M_1(x)}}{M_1(x)}
\end{equation*}

\end{proof}

\begin{prop}[continuity]
If $f(x)$ is Riemann integrable in  $V_{\varepsilon}(x)\subset V$ then the perturbed mean-gradient is a continuous function at $x$.
\end{prop}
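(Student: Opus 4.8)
The plan is to follow the structure of the proof of Proposition~\ref{prop_continuity}, exploiting a simplification specific to the perturbed functional: the integrand $|g\cdot(\tau-s)-f(\tau)+f(s)|^{2}$ does not depend on $x$, so the $x$-dependence of $g_{\varepsilon}^{p}$ lives entirely in the integration windows $V_{\varepsilon}(x)$, $B_{p}(x)$, which are translates of fixed convex sets $V_{\varepsilon}$, $B_{p}$ (the same convention as in Proposition~\ref{prop_continuity}). First I would expand the objective $\mathcal{L}(x,g)=\iint_{V_{\varepsilon}(x)B_{p}(x)}|g\cdot(\tau-s)-f(\tau)+f(s)|^{2}\,ds\,d\tau$ as a quadratic in $g$; after the substitution $\tau\mapsto\tau+x$, $s\mapsto s+x$ (which leaves $\tau-s$ unchanged and turns the domain into the fixed $V_{\varepsilon}\times B_{p}$),
\[
\mathcal{L}(x,g)=g^{T}\mathbf{A}\,g+g\cdot b(x)+c(x),
\]
with $\mathbf{A}=\iint_{V_{\varepsilon}B_{p}}(\tau-s)(\tau-s)^{T}\,ds\,d\tau$ \emph{independent of $x$}, $b(x)=-2\iint_{V_{\varepsilon}B_{p}}(\tau-s)\bigl(f(x+\tau)-f(x+s)\bigr)\,ds\,d\tau$, and $c(x)$ irrelevant to the minimizer. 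Under the standing assumption $\dim\spn(V_{\varepsilon})=n$ the matrix $\mathbf{A}$ is positive definite (for a unit vector $v$, $v^{T}\mathbf{A}v=\iint|(\tau-s)\cdot v|^{2}\,ds\,d\tau>0$, since $\{\tau-s\}$ has nonempty interior and cannot lie in $v^{\perp}$), so $\mathcal{L}(x,\cdot)$ has the unique global minimizer $g_{\varepsilon}^{p}(x)=-\tfrac12\mathbf{A}^{-1}b(x)$. Since $\mathbf{A}^{-1}$ is a fixed bounded linear map, continuity of $g_{\varepsilon}^{p}$ at $x$ is equivalent to continuity of $b$ at $x$; note that this \emph{bypasses the implicit function theorem} invoked in Proposition~\ref{prop_continuity}, which was needed there only because the analogous matrix varied with $x$.

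It remains to show $b$ is continuous. Returning to the original variables, $b(x)=-2\iint_{V_{\varepsilon}(x)B_{p}(x)}(\sigma-\rho)\bigl(f(\sigma)-f(\rho)\bigr)\,d\rho\,d\sigma$; since the integrand carries no $x$, expanding the product and integrating out one of the two variables displays $b(x)$ as a finite sum of products of (i) polynomials in $x$ such as $|V_{\varepsilon}|,\ |B_{p}|,\ \int_{V_{\varepsilon}(x)}\sigma\,d\sigma,\ \int_{B_{p}(x)}\rho\,d\rho$, which are manifestly continuous, and (ii) moving-window integrals $\int_{V_{\varepsilon}(x)}\phi(\sigma)f(\sigma)\,d\sigma$ or $\int_{B_{p}(x)}\phi(\rho)f(\rho)\,d\rho$ with a fixed polynomial weight $\phi$ (the constant $1$ or a coordinate function). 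For terms of type (ii) I would use that a Riemann-integrable $f$ is bounded on $V$, say $|f|\le M$, and estimate, for $x$ near $x_{0}$,
\begin{align*}
\left\|\int_{V_{\varepsilon}(x)}\phi\,f-\int_{V_{\varepsilon}(x_{0})}\phi\,f\right\|
&\le\Bigl(\sup_{V}|\phi|\Bigr)M\,\bigl|V_{\varepsilon}(x)\triangle V_{\varepsilon}(x_{0})\bigr|\\
&\le 2\Bigl(\sup_{V}|\phi|\Bigr)M\,|A_{\varepsilon}|\,\|x-x_{0}\|\ \xrightarrow[x\to x_{0}]{}\ 0,
\end{align*}
where the second inequality is Lemma~\ref{ap_a_eps}, applied to both $V_{\varepsilon}(x)\setminus V_{\varepsilon}(x_{0})$ and $V_{\varepsilon}(x_{0})\setminus V_{\varepsilon}(x)$; Lemma~\ref{ap_a_eps} holds verbatim for the convex window $B_{p}$, so the $B_{p}(x)$ terms are handled identically. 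Hence $b(x)\to b(x_{0})$ as $x\to x_{0}$, and therefore $g_{\varepsilon}^{p}$ is continuous at $x$.

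The only substantive ingredient is the window-measure bound $|V_{\varepsilon}(x)\triangle V_{\varepsilon}(x_{0})|=O(\|x-x_{0}\|)$ (Lemma~\ref{ap_a_eps}, already proved) together with boundedness of $f$ (automatic from Riemann integrability); everything else is bookkeeping. So the main obstacle is more conceptual than technical: recognizing that, because the perturbed integrand carries no $x$, the whole $x$-dependence can be routed through integration windows that translate continuously, which both eliminates the implicit function theorem used in Proposition~\ref{prop_continuity} and reduces the problem to Lemma~\ref{ap_a_eps}. Should one wish to avoid Lemma~\ref{ap_a_eps} altogether, an equivalent route is to approximate $f$ in $L^{1}(V)$ by a step function — available from the Darboux sums defining the Riemann integral — for which continuity of the windowed integral under translation is immediate, and then close the bound with the two $L^{1}$ approximation errors at $x$ and $x_{0}$.
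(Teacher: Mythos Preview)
Your proof is correct and takes a somewhat different route from the paper. Two points of contrast. First, the paper bounds the $x$-dependence of $\mathcal{L}(x,g)$ directly on the \emph{product} domain: it estimates $|\mathcal{L}(x,g)-\mathcal{L}(x',g)|$ by $M\|g\|\cdot\bigl|V_{\varepsilon}(x)\times B_{p}(x)\ \triangle\ V_{\varepsilon}(x')\times B_{p}(x')\bigr|$ and then asserts a product-domain analogue of Lemma~\ref{ap_a_eps}. You instead expand the integrand $(\sigma-\rho)(f(\sigma)-f(\rho))$ and factor the double integral in $b(x)$ into products of single-variable moving-window integrals, applying Lemma~\ref{ap_a_eps} to each factor separately; this is cleaner because it reuses the lemma verbatim rather than requiring its extension to $\mathbb{R}^{2n}$. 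Second, after establishing continuity of $\mathcal{L}(x,g)$ in $x$, the paper still funnels the conclusion through the implicit function theorem (``the rest of the proof is identical to Proposition~\ref{prop_continuity}''), whereas you note that with $\mathbf{A}$ constant and positive definite the minimizer is the explicit formula $g_{\varepsilon}^{p}(x)=-\tfrac12\mathbf{A}^{-1}b(x)$, so continuity of $b$ alone suffices. Your route is more elementary and transparent; the paper's has the minor advantage of not needing the four-term expansion. One small inaccuracy in your commentary: the implicit function theorem was not actually \emph{needed} in Proposition~\ref{prop_continuity} either, since the paper explicitly observes that $\mathbf{A}$ is constant there as well; the paper simply chose the IFT route in both places.
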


\begin{proof}

We follow the same line of arguments as in Proposition \ref{prop_continuity}, yet here we need to show that $\mathcal{L}(x,g)$ is continuous for any interable function $f$.

\begin{equation*}
\begin{aligned}
        \left|\mathcal{L}(x,g) - \mathcal{L}(x',g)\right| &= \left|g\cdot \mathbf{A}(x) g + g\cdot b(x) - g\cdot \mathbf{A}(x') g - g\cdot b(x)\right| = \left|g\cdot b(x) -  g\cdot b(x)\right| \\
        & \leq \|g\| \left\| \quad \iint\displaylimits_{V_{\varepsilon}(x)B_{p}(x)}|g\cdot (\tau - s) - f(\tau) + f(s)|^2  ds d\tau - \iint\displaylimits_{V_{\varepsilon}(x')B_{p}(x')}|g\cdot (\tau - s) - f(\tau) + f(s)|^2  ds d\tau \right\| \\
        & \leq \|g\| \iint\displaylimits_{V_{\varepsilon}(x)B_{p}(x)\setminus V_{\varepsilon}(x')B_{p}(x')}|g\cdot (\tau - s) - f(\tau) + f(s)|^2  ds d\tau \\
        & + \|g\| \iint\displaylimits_{V_{\varepsilon}(x')B_{p}(x') \setminus V_{\varepsilon}(x)B_{p}(x)}|g\cdot (\tau - s) - f(\tau) + f(s)|^2  ds d\tau \\
        & \leq M \|g\| \cdot |V_{\varepsilon}(x)B_{p}(x)\setminus V_{\varepsilon}(x')B_{p}(x')| + M \|g\| \cdot  |V_{\varepsilon}(x')B_{p}(x')\setminus V_{\varepsilon}(x)B_{p}(x)|
\end{aligned}
\end{equation*}
Applying the same arguments in Lemma \ref{ap_a_eps}, we have $|V_{\varepsilon}(x)B_{p}(x)\setminus V_{\varepsilon}(x')B_{p}(x')|\leq |A_{V_{\varepsilon}}(x)| \cdot |A_{B_p}(x)| \|x-x'\|^2$, where $A_{V_{\varepsilon}}(x)$ is the surface of $V_{\varepsilon}(x)$ and $A_{B_p}(x)$ is the surface of $B_{p}(x)$. Therefore, $\mathcal{L}(x,g)$ is continuous.

The rest of the proof, again, is identical to Proposition \ref{prop_continuity}.

\end{proof}

\subsection{Monte-Carlo approximation of the perturb mean-gradient}

For a parameterization $g_{\theta}$, we may learn the perturb mean-gradient by sampling a reference point $x_r$ and then uniformly sampling two evaluation points $x_i\sim U(B_{p}(x_r))$ $x_j\sim U(V_{\varepsilon}(x_r))$. With the tuples $(x_r,x_i,x_j)$ we minimize the following loss
\begin{equation*}
    \mathcal{L}_{\varepsilon,p}(\theta)=\sum_{x_r}\sum_{x_i}\sum_{x_j} |g_{\theta}(x_r)\cdot (x_j-x_i) - f(x_j) + f(x_i)|^2
\end{equation*}
Since $x_i\sim U(B_{p}(x_r))$, we can write $x_i = x_r + n_i$ where $n_i$ is uniformly sampled in an $n$-ball with $p$ radius. To reduce the number of evaluation points, we may choose to fix $x_i$ and sample $x_r = x_i + n_r$. If we assume that $\varepsilon \gg p$ then for a sample  $x_j\sim U(V_{\varepsilon}(x_i))$ with very high probability we have that $\|x_r - x_j\| \leq \varepsilon$. So we can approximate $\mathcal{L}_{\varepsilon,p}$ with
\begin{equation*}
    \mathcal{L}_{\varepsilon,p}(\theta)=\sum_{n_r}\sum_{x_i}\sum_{x_j} |g_{\theta}(x_i + n_r)\cdot (x_j-x_i) - f(x_j) + f(x_i)|^2
\end{equation*}

\newpage

\section{Spline Embedding}
\label{sec:SplineEmb}

When fitting $f$ with a NN, we found out that feeding the input vector $x$ directly into a Fully Connected NN provides unsatisfactory results when the dimension of the data is too small or when the target function is too complex. Specifically, gradient descent (with Adam optimizer \cite{kingma2014adam}) falls short in finding the global optimum. We did not investigate theoretically into this phenomena, but we designed an alternative architecture  that significantly improves the learning process. This method adds a preceding embedding layer \cite{zhang2016deep} before the NN. These embeddings represent a set of learnable Spline functions \cite{reinsch1967smoothing}.

Categorical Feature embedding \cite{howard2020fastai} is a strong, common practice, method to learn representations of multi-categorical information. It is equivalent to replacing the features  with their corresponding one-hot vector representation and concatenating the one-hot vectors into a single vector which is then fed to the input of a NN. An important advantage of categorical embedding is the ability to expand the input dimension into an arbitrary large vector size. In practice, this expansion can help in representing complex non-linear problems. 

For ordinal data, however, embedding may be viewed as an unnecessary step as one can feed the data directly into a NN input layer. Moreover, categorical feature embeddings do not preserve ordinality within each categorical variable as each class is assigned a different independent set of learnable embeddings. Nevertheless, motivated by the ability to expand the input dimension into an arbitrary large number, we designed an ordinal variable embedding that is Lipschitz continuous s.t. for two relatively close inputs $x_1$ and $x_2$ the embedding layer outputs $s(x_1)$ and $s(x_2)$ s.t. $\|s(x_1)- s(x_2)\|\leq \kappa_s \|x_1 - x_2\|$. To that end, for a given input vector $x\in\mathbb{R}^n$, we define the representation as $s_{\theta}:\mathbb{R}^n\to\mathbb{R}^{n_s}$, $y=s_{\theta}(x)$, where each entry $s^{j}_{\theta}(x^l)$ is a one-dimensional learnable Spline transformation. A Spline \cite{reinsch1967smoothing} is a piecewise polynomial with some degree of smoothness in the connection points. Spline is usually used to approximate smooth functions but here we use it to represent a learnable function.

To define a learnable spline, we need to determine the intersection points and the spline degree. Specifically, for a domain $x^i\in[a,b]$ we equally divided the domain into $k$ intersection points, where each point is also termed as knot (in this work $[a,b]=[-1,1]$ and $k=21$, s.t. each segment is 0.1 long). Our next step is to define the spline degree and smoothness. We experimented with three options: (1) continuous piecewise linear splines (2) \nth{3} degree polynomials with continuous second derivative, termed $C^2$ Cubic spline and; (3) continuous $C^0$ Cubic splines. We found out that for the purpose of EGL, continuous piecewise linear splines yield the best performance and requires less computational effort. The explicit definition of a piecewise linear spline is
\begin{equation}
    s(x,\theta) = \frac{\theta_i}{h_i}(x-t_{i-1}) + \frac{\theta_i}{h_i}(t_i-x)
\end{equation}
where $\theta$ is a $k$ elements ($k$ is the number of knots), $t_i$ is the location of the $i$-th knot and $h_i=t_i-t_{i-1}$.

We can learn more than a single spline for each element in the $x$ vector. In this work we learned $e$ different splines for each entry in $x$ s.t. the output shape of the embedding block is $n\times e$. It is also possible to learn two or more dimensional splines but the number of free parameters grows to the power of the splines dimensions. Therefore, it is non practical to calculate these high degree splines. To calculate interactions between different entries of $x$ we tested two different methods: (1) aggregation functions and; (2) attention aggregation after a non-local blocks \cite{wang2018non}. 

In the first option, given a spline representation $s(x)\in\mathbb{R}^{n\times e}$ an average pooling aggregation is executed along the \nth{1} dimension s.t. we end up with a $\bar{s}(x)\in\mathbb{R}^e$ representation vector. In the second option, the aggregation takes place after a non-local blocks which calculates interactions between each pairs of entries in the \nth{1} dimension of $s(x)$ (i.e. the input dimension). To preserve the information of the input data, we concatenated $x$ to the output of the aggregation layer. After the concatenation, stacks of Residual blocks \cite{he2016deep} (Res-Blocks) layers have been applied to calculate the output vector (size of 1 in IGL and size of $n$ in EGL). The complete Spline Embedding architecture that includes both average pooling aggregation and non-local blocks is presented in Fig. \ref{fig:spline_net}.

\newpage

\begin{figure}[ht!]
  \centering
  \includegraphics[width=1\linewidth]{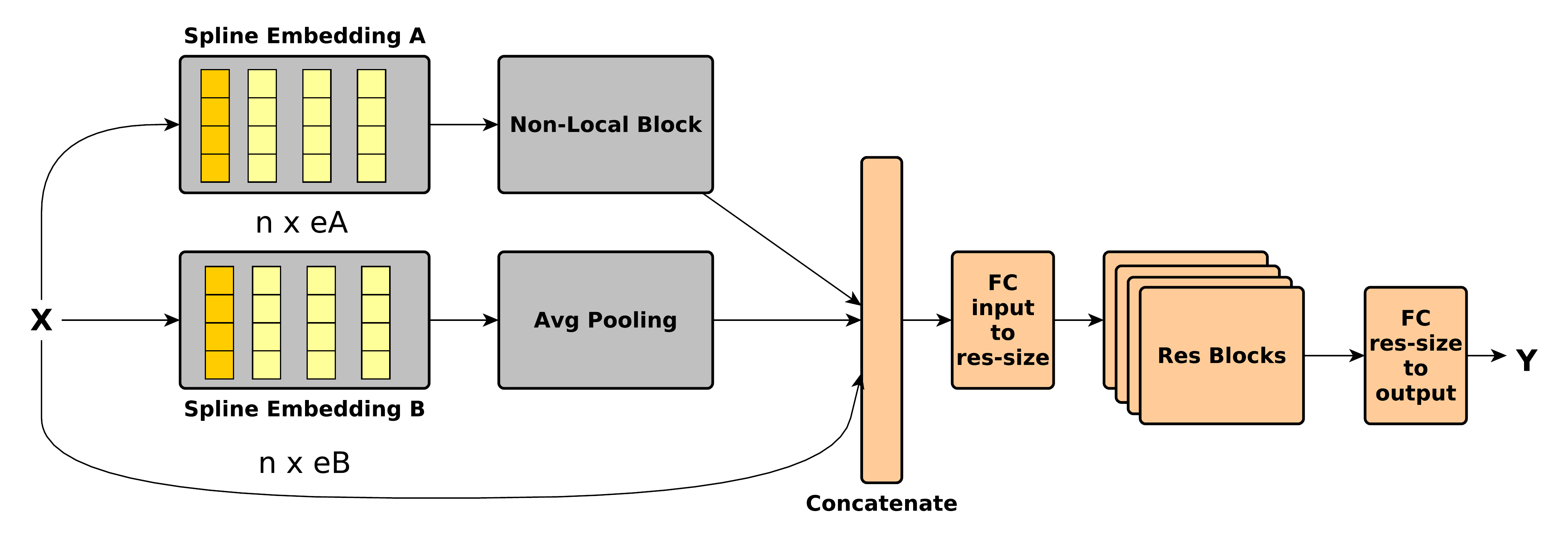}
  \caption{The Spline Architecture}
\label{fig:spline_net}
\end{figure}

\begin{figure}[ht!]
  \centering
  \includegraphics[width=1\linewidth]{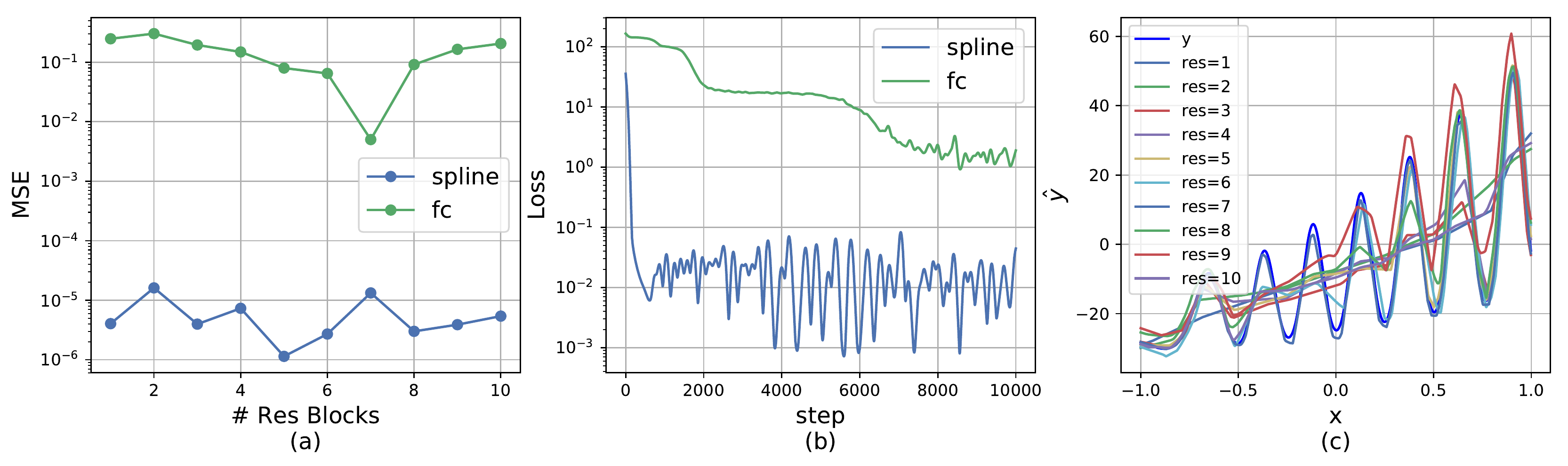}
  \caption{Comparing Spline fitting vs standard FC fitting.}
\label{fig:spline_vs_fc_p246}
\end{figure}

\begin{figure}[ht!]
  \centering
  \includegraphics[width=1\linewidth]{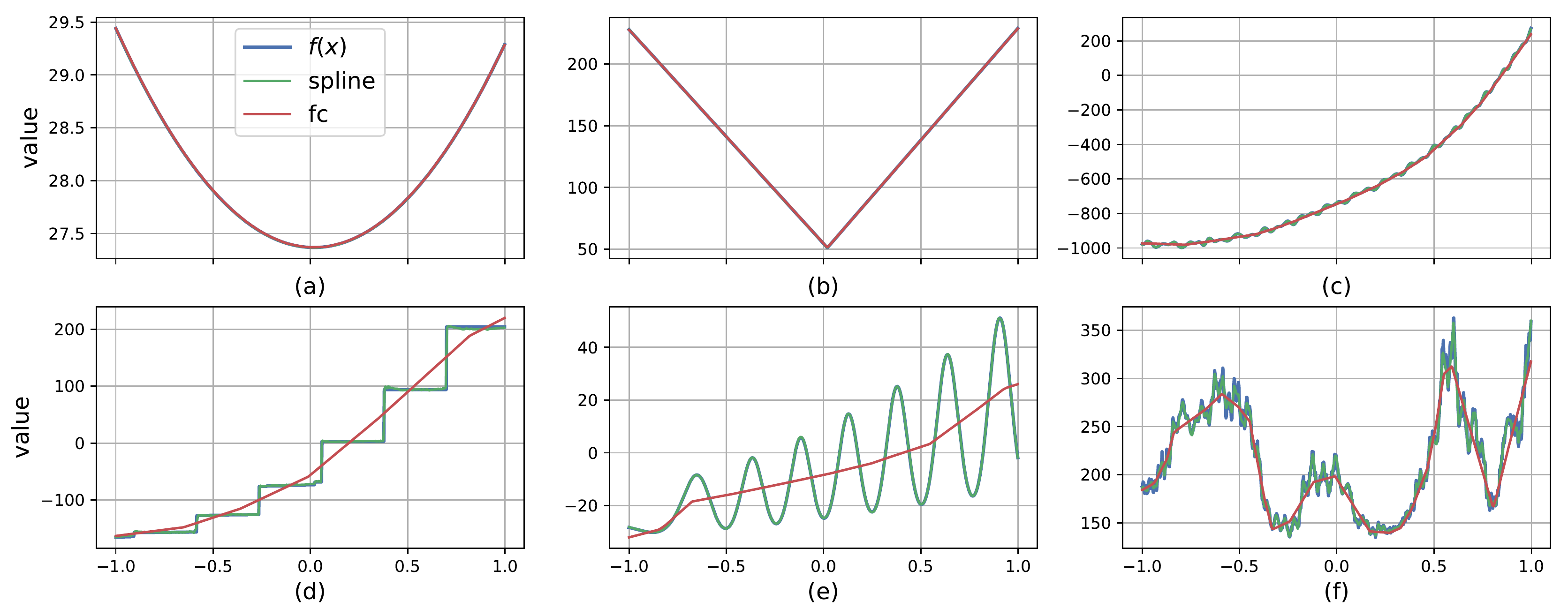}
  \caption{Comparing Spline fitting vs standard FC fitting.}
\label{fig:spline_vs_fc}
\end{figure}

\newpage

In the next set of experiments, we evaluate the benefit of Spline embedding in 1D COCO problems. We compared the Spline architecture in Fig. \ref{fig:spline_net} to the same architecture without the spline embedding branches ($x$ is directly fed to the FC layer input). We used only $e=8$ splines and a Res-Block layer size of 64. In Fig. \ref{fig:spline_vs_fc_p246}(a), we evaluate the learning of a single problem (246, harmonic decaying function) with different number of Res-Blocks. Here, we used a mini-batch size of 1024 and a total of 1024 mini-batches iteration to learn the function (i.e. a total of $10^6$ samples). We see that Spline embedding obtains much better MSE even for a single Res-Block and maintains its advantage for all the Res-Block sizes which we evaluated. Note that each Res-Block comprises two Fully Connected layers, thus with the additional input and output layers we have $2n+2$ FC layers for $n$ Res-Blocks.

In Fig. \ref{fig:spline_vs_fc_p246}(b) we evaluated the learning process with 2 Res-Blocks for 10240 mini-batches ($10^7$ samples). We see that Spline embedding converges after roughly 500 minibatches while the FC layer learns very slowly. Interestingly, each significant drop in the loss function of the FC net corresponds to a fit of a different ripple in the harmonic decaying function. It seems like the FC architecture converges to local minima that prevent the network from fitting the entire harmonic function. This can be seen in Fig. \ref{fig:spline_vs_fc_p246}(c) where we print the results of the learned FC models for different Res-Block sizes after 1024 mini-batches. The results show that all FC networks fail to fit the harmonic function completely.

To demonstrate expressiveness of Spline embedding, we fit the 4 functions in the 1-D illustrative examples in Sec. \ref{sec:Motivation} and two additional functions: the harmonic decaying function and a noise like function. The results are presented in Fig. \ref{fig:spline_vs_fc}. Remarkably, while we use only $e=8$ splines which sums up to only 680 additional weights ($8\times21$ spline parameters and additional $8\times64$ input weights), we obtain significantly better results than the corresponding FC architecture.\footnote{In 1D problems there is no aggregation step.}

\newpage

\section{Mappings}
\label{sec:Mappings}

By applying the chain rule and the inverse function theorem, we can express the gradient of the original problem $\nabla f$ with the gradient of the scaled problem $\nabla \tilde{f}$:
\begin{equation}
    \frac{\partial f(x)}{\partial x^l}  =  \left(\frac{\partial r_k}{\partial y}\right)^{ -1}  \frac{\partial h_j(x)}{\partial x^l} \frac{\partial\tilde{f}_{jk}(\tilde{x})}{\partial \tilde{x}^l} 
\end{equation}
Here, $\partial x^l$ is the partial derivative with respect to the $l$-th entry of $x$ (we assume that $h$ maps each element independently s.t. the Jacobian of $h$ is diagonal). For strictly linear mappings, it is easy to show that this property also holds for the mean-gradients.

\begin{prop}
Let $h_j:\mathbb{R}^n\to\mathbb{R}^n$ and $r_k:\mathbb{R}\to\mathbb{R}$ be two linear mapping functions s.t., $r_k(y) = \frac{y-\mu_k}{\sigma_k}$ and $h_j^l(x)=a_j^l x + b_j^l$, then the mean-gradient $g_{\varepsilon}$ of $f$ can be recovered from the mean-gradient $\tilde{g}_{\tilde{\varepsilon}}$ of $\tilde{f}$ with
\begin{equation}
\label{linear_mapping}
    g_{\varepsilon}^l(x) =  \frac{a_j^l}{\sigma_k} \tilde{g}_{\tilde{\varepsilon}}^l(\tilde{x})
\end{equation}
where $V_{\varepsilon}$ is the projection $h_j^{-1}(V_{\tilde{\varepsilon}})$ which is bounded by an $n$-ball at $x$ with radius $\varepsilon=\max_{l}\frac{1}{a_j^l}\tilde{\varepsilon}$, i.e. for all $x'\in V_{\varepsilon}(x), \|x'-x\|\leq \max_{l}\frac{1}{a_j^l}\tilde{\varepsilon}$.
\end{prop}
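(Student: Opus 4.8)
The plan is to reduce the statement to a single change of variables inside the integral that defines the mean-gradient, after which the two scale factors $a_j^l$ and $\sigma_k$ factor out of the objective and the $\arg\min$ is preserved. Concretely, I would start from
\[
\tilde g_{\tilde\varepsilon}(\tilde x)=\arg\min_{\tilde g\in\mathbb{R}^n}\int_{V_{\tilde\varepsilon}(\tilde x)}\bigl|\tilde g\cdot\tilde\tau-\tilde f(\tilde x+\tilde\tau)+\tilde f(\tilde x)\bigr|^2\,d\tilde\tau ,
\]
with $\tilde f=r_k\circ f\circ h_j^{-1}$, and substitute $\tilde\tau^l=a_j^l\tau^l$. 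Since $h_j$ is affine and acts coordinatewise, $h_j^{-1}(\tilde x+\tilde\tau)=x+\tau$ precisely under this correspondence, so the substitution carries $V_{\tilde\varepsilon}(\tilde x)$ bijectively onto $h_j^{-1}(V_{\tilde\varepsilon}(\tilde x))=:V_\varepsilon(x)$ with constant positive Jacobian $\prod_l a_j^l$. Linearity of $r_k$ gives $\tilde f(\tilde x+\tilde\tau)-\tilde f(\tilde x)=\sigma_k^{-1}(f(x+\tau)-f(x))$, while $\tilde g\cdot\tilde\tau=\sum_l \tilde g^l a_j^l\tau^l$. Pulling the constants $\prod_l a_j^l$ and $\sigma_k^{-2}$ out of the integral, the objective becomes a positive multiple of $\int_{V_\varepsilon(x)}|g\cdot\tau-f(x+\tau)+f(x)|^2\,d\tau$ under the invertible linear reparametrization $g^l=\sigma_k\,a_j^l\,\tilde g^l$.

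Next I would argue that minimizers map to minimizers. The objective is a strictly convex quadratic in the decision vector as soon as $A=\int_{V_\varepsilon(x)}\tau\tau^T\,d\tau\succ0$, which holds because $V_\varepsilon(x)$ is the image of a full-dimensional convex body under the affine bijection $h_j^{-1}$, hence itself a full-dimensional convex body; this is exactly the nondegeneracy condition used in the proof of Proposition~\ref{prop_continuity}. Both optimization problems therefore have unique solutions, and since they differ only by a positive multiplicative constant and an invertible linear change of the decision variable, the minimizer $\tilde g_{\tilde\varepsilon}(\tilde x)$ is carried by $g^l=\sigma_k a_j^l\tilde g^l$ exactly onto the minimizer of the $f$-objective over $V_\varepsilon(x)$, which by Definition~\ref{mg_definition} is $g_\varepsilon(x)$ with averaging domain $V_\varepsilon(x)$. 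This yields $g_\varepsilon^l(x)=\sigma_k a_j^l\,\tilde g_{\tilde\varepsilon}^l(\tilde x)=(\partial r_k/\partial y)^{-1}a_j^l\,\tilde g_{\tilde\varepsilon}^l(\tilde x)$, in agreement with Eq.~(\ref{lin_map}).

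Finally I would check that $V_\varepsilon(x)$ is an admissible averaging domain and pin down its radius: with $V_{\tilde\varepsilon}(\tilde x)$ the ball of radius $\tilde\varepsilon$, its preimage is the ellipsoid $\{x+\tau:\sum_l(a_j^l\tau^l)^2\le\tilde\varepsilon^2\}$, which is convex and contained in the ball of radius $\max_l\tilde\varepsilon/a_j^l$, so $\varepsilon=\max_l\tilde\varepsilon/a_j^l$ satisfies the constraint in Definition~\ref{mg_definition}. The step that needs real care — and the one I expect to be the main obstacle — is establishing \emph{uniqueness} of the minimizer on both sides, so that equality of the objectives up to a positive constant and a linear reparametrization upgrades to equality of minimizers rather than merely of optimal values; the rest is bookkeeping with a coordinatewise affine map and its constant Jacobian. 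A minor point worth stating explicitly is that all $a_j^l$ and $\sigma_k$ are positive scale factors, which is what makes the change of variables and the reparametrization invertible.
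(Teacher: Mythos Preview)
Your argument is correct and follows essentially the same route as the paper: a change of variables $\tilde\tau\mapsto\tau$ in the defining integral, dropping the constant Jacobian, factoring out the output-scale, and then an invertible linear reparametrization of the decision vector to identify the two $\arg\min$'s; the radius bound is likewise obtained by bounding the preimage of the $\tilde\varepsilon$-ball. You are in fact more careful than the paper in justifying that minimizers map to minimizers (invoking $A=\int\tau\tau^T d\tau\succ0$), which the paper uses only implicitly via the bijectivity of $g\mapsto \sigma^{-1}a_j\odot g$.

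One point worth flagging: your computation gives $g_\varepsilon^l(x)=\sigma_k\,a_j^l\,\tilde g_{\tilde\varepsilon}^l(\tilde x)$, which agrees with the chain rule and with Eq.~(\ref{lin_map}) in the main text (since $(\partial r_k/\partial y)^{-1}=\sigma_k$). The Appendix statement as written has $a_j^l/\sigma_k$, and the paper's proof reaches that by ``multiplying by the inverse slope $1/\sigma_r$'' where it should multiply by $\sigma_r$; this is a typo in the paper, not an error in your proposal.
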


\begin{proof}
Let us write the definition of $g_{\tilde{\varepsilon}}$ with a variable $\tilde{\tau}$ s.t. $\tilde{\tau}\in V_{\tilde{\varepsilon}}(\tilde{x})$ (contrary to the original definition where $\tau$ denoted the difference s.t. $x+\tau\in V_{\varepsilon}(x)$)
\begin{equation}
    g_{\tilde{\varepsilon}}(\tilde{x}) = \arg\min_{g}\int_{\tilde{\tau}\in V_{\tilde{\varepsilon}(\tilde{x})}} |g\cdot(\tilde{\tau} - \tilde{x}) - \tilde{f}(\tilde{\tau}) + \tilde{f}(\tilde{x})|^2 d\tilde{\tau}
\end{equation}
Recall the mapping $\tilde{x}=h(x)$, since it is invertable mapping, there exist $\tau$ s.t. $\tilde{\tau}=h(\tau)$. Substituting $\tilde{\tau}$ with $\tau$, the integral becomes
\begin{equation}
    g_{\tilde{\varepsilon}}(\tilde{x}) = \arg\min_{g}\int_{\tau\in h^{-1}(V_{\tilde{\varepsilon}(\tilde{x})})} |g\cdot(h(\tau) - h(x)) - \tilde{f}(h^{-1}(\tau)) + \tilde{f}(h^{-1}(x))|^2 |\det(Dh(\tau))| d\tau
\end{equation}
where $\det(Dh(\tau))$ denotes the determinant of the Jacobian matrix of the mapping $h$. This determinant is constant for linear mapping so we can ignore it as we search for the arg-min value. We can also multiply the integral by the inverse slope $\frac{1}{\sigma_r}$ and get
\begin{equation}
\begin{aligned}
    g_{\tilde{\varepsilon}}(\tilde{x}) & = \arg\min_{g}\int_{\tau\in h^{-1}(V_{\tilde{\varepsilon}(\tilde{x})})} |\frac{1}{\sigma_r}g\cdot(h(\tau - x)) - \frac{1}{\sigma_r} \tilde{f}(h^{-1}(\tau)) + \frac{1}{\sigma_r} \tilde{f}(h^{-1}(x))|^2 d\tau \\
    & =  \arg\min_{g}\int_{\tau\in h^{-1}(V_{\tilde{\varepsilon}(\tilde{x})})} |\frac{1}{\sigma_r}a_j \odot g \cdot(\tau - x) - r^{-1}(\tilde{f}(h^{-1}(\tau))) + r^{-1}(\tilde{f}(h^{-1}(x)))|^2 d\tau \\
    &= \arg\min_{g}\int_{\tau\in h^{-1}(V_{\tilde{\varepsilon}(\tilde{x})})} |\frac{1}{\sigma_r}a_j \odot g \cdot(\tau - x) - f(\tau) + f(x)|^2 d\tau
\end{aligned}
\end{equation}
Where the last equality holds since $r^{-1}\circ \tilde{f} \circ h = r^{-1}\circ r \circ f \circ h^{-1} \circ h = f$. Since the mapping $g\to\frac{1}{\sigma_r}a_j\odot g$ is bijective, the arg-min can be rephrased as
\begin{equation}
    \frac{1}{\sigma_r}a_j \odot g_{\tilde{\varepsilon}}(\tilde{x}) = \arg\min_{g}\int_{\tau\in h^{-1}(V_{\tilde{\varepsilon}(\tilde{x})})} | g \cdot(\tau - x) - f(\tau) + f(x)|^2 d\tau
\end{equation}
which is exactly the definition for $g_{\varepsilon}$ so we get that $ g_{\varepsilon} = \frac{1}{\sigma_r}a_j \odot g_{\tilde{\varepsilon}}(\tilde{x})$, as requested. Finally, we need to show that for all $\tau\in V_{\varepsilon}(x)$, $\|\tau-x\|\leq \max_{l}\frac{1}{a^l}\tilde{\varepsilon}$.
\begin{equation}
    \|\tau - x\| = \|h^{-1}(\tilde{\tau}) - h^{-1}(\tilde{x})\| = \|h^{-1}(\tilde{\tau} - \tilde{x})\| \\
    \leq  \|h^{-1}\| \|\tilde{\tau} - \tilde{x}\| \leq \max_{l}\frac{1}{a^l}\tilde{\varepsilon}
\end{equation}

\end{proof}

As discussed in Sec. \ref{subsec:DynamicScaling}, the design goals for mappings are twofold: (1) Fix the statistics of the input and output data and; (2) maintain the linearity as much as possible. Following these two goals we explored mappings of the form $y=q(l_{\mathbf{a}}(x))$, where $q$ is an expansion non-linear mapping $\Omega\to\mathbb{R}^n$ for the input mapping and a squash mapping $\mathbb{R}\to\mathbb{R}$ for the output mapping. $l_{\mathbf{a}}$ is a linear mapping that is defined by the $\mathbf{a}$ parameters. For example in the scalar case we can uniquely define the linear function by mapping $x_1$ to $y_1$ and $x_2$ to $y_2$, in this case we denote $\mathbf{a}=[(x_1, y_1), (x_2, y_2)]$.

\subsection{Input Mapping}

Given a candidate solution $x_{j-1}$, we first construct a bounding-box $\Omega_j$ by squeezing the previous region by a factor of $\gamma_{\alpha}$ and placing it s.t. $x_{j-1}$ is in the bounding-box center. For a region $\Omega_j$ such that the upper and lower bounds are found in $[b_{l}, b_{u}]$, we, first, construct a linear mapping of the form $\mathbf{a}=[(b_{l}, -1), (b_{u}, 1)]$. Then, our expansion function is the inverse hyperbolic tangent $\arctanh(x)=\frac{1}{2}\log\left(\frac{1+x}{1-x}\right)$. This function expands $[-1,1]\to\mathbb{R}$ but maintains linearity at the origin. Given that the solution is approximately found in the center of the bounding-box we obtain high linearity except when the solution is found on the edges.   

\subsection{Output Mapping}

For the output mapping we first fix the statistics with a linear mapping $\mathbf{a}=[(Q_{0.1}, -1), (Q_{0.9}, 1)]$ where $Q_{0.1}$ is the $0.1$ quantile in the data and $Q_{0.9}$ is the $0.9$ quantile. This mapping is also termed as robust-scaling as unlike $z$-score $\frac{x - \mu}{\sigma}$, it is resilient to outliers. On the downside it does not necessarily fix the first and second order statistics, but these are at least practically, bounded. The next step, i.e. squash mapping, makes sure that even outliers does not get too high values. For that purpose, we use the squash mapping
\begin{align}
        q(x) = \begin{cases}
            -\log(-x) - 1,   & \qquad x < -1\\
            x, & -1 \leq x < 1\\
            \log(x) + 1, & \qquad\  \ x \geq 1
            \end{cases}
\end{align}


\newpage

\section{The Practical EGL Algorithm}
\label{sec:egl_alg}

\begin{algorithm}[]
\small{
\DontPrintSemicolon
\KwIn{$x_0$, $\Omega$, $\tilde{\alpha}$, $\tilde{\varepsilon}$, $\gamma_{\alpha} < 1$, $\gamma_{\varepsilon} < 1$, $n_{\max}$}
\ \ $k=0$ \\
$j=0$ \\
$\Omega_j\leftarrow \Omega$ \\
Map $h_0:\Omega\to\mathbb{R}^n$

\While{budget $C > 0$}{

\SetKwProg{Fn}{Explore:}{}{end}
\Fn{}{
\ \ Generate samples $\mathcal{D}_k = \{\tilde{x}_i\}_{1}^m$, $\tilde{x}_i\in V_{\tilde{\varepsilon}}(\tilde{x}_k)$ \\
Evaluate samples $y_i=f(h_0^{-1}(\tilde{x}_i))$, \qquad $i=1,...,m$ \\
Add samples to the replay buffer $\overline{\mathcal{D}}=\overline{\mathcal{D}}\cup\mathcal{D}_k$
}

\SetKwProg{Fn}{Output Map:}{}{end}
\Fn{}{
\ \ $r_k = squash \circ l_{[Q_{0.1},Q_{0.9}]}$ \\
$\tilde{y}_i = r_k(y_i)$ , \qquad $i=1,...,m$ \\ 
}

\SetKwProg{Fn}{Mean-Gradient learning:}{}{end}
\Fn{}{
\ \ $\theta_k = \arg\min_{\theta}  \sum_{q=0}^{l-1} \sum_{i,j \in \mathcal{D}_{k-q}} |(\tilde{x}_j - \tilde{x}_i) \cdot g_{\theta}(\tilde{x}_i) - \tilde{x}_j + \tilde{x}_i |^2$ \\
}

\SetKwProg{Fn}{Gradient Descent:}{}{end}
\Fn{}{
$x_{k+1} \leftarrow x_k - \tilde{\alpha} g_{\theta_k}(x_k)$ \\
\If{$f(h_j^{-1}(\tilde{x}_{k+1})) > f(h_j^{-1}(\tilde{x}_{k}))$ for $n_{\max}$ times in a row}{
\ \ Generate new trust-region s.t. $|\Omega_{j+1}| = \gamma_{\alpha}|\Omega_j|$ and its center at $x_{best}$\\
Map $h_j:\Omega\to\mathbb{R}^n$ \\
$j \leftarrow j + 1$ \\
$\tilde{\varepsilon} \leftarrow \gamma_{\varepsilon}\tilde{\varepsilon}$ \\
}
\If{$f(h_j^{-1}(\tilde{x}_{k+1})) < f(h_j^{-1}(\tilde{x}_{k}))$ }{
 \ \ $x_{best} = h_j^{-1}(\tilde{x}_k)$ \\
}
}

$k \leftarrow k + 1$ \\
}
}
\KwRet{$x_{best}$}
\caption{Explicit Gradient Learning}
\label{alg:practical_egl}
\end{algorithm}

\newpage

\section{Supplementary details: The COCO experiment} 
\label{sec:coco_exp}

The COCO test suite provides many Black-Box optimization problems on several dimensions (2,3,5,10,20,40). For each dimension, there are 360 distinct problems. The problems are divided into 24 different classes, each contains 15 problems.
To visualize all problem classes, we iterate over the 2D problem set and for each class we present (Fig. \ref{fig:2D_P0}-\ref{fig:2D_P5}) a contour plot, 3D plot and the equivalent 1D problem ($f_{1D}(x)=f_{2D}(x,x)$) combined with the log view of the normalized problem ($\frac{f_{1D}(x) - f_{1D}^min}{f_{1D}^{\max}- f_{1D}^{\min}}$).

To visualize the average convergence rate of each method, we first calculate a scaled distance between the best value at time $t$ and the optimal value $\Delta y_{best}^t = \frac{ \min_{k\leq t} y_k - y^*}{y_0 - y^*}$ where $y^*$ is the minimal value obtained from all the baselines' test-runs. We then average this number, for each $t$, over all runs in the same dimension problem set. This distance is now scaled from zero to one and the results are presented on a log-log scale. The first-column in Fig. \ref{fig:2D_P0}-\ref{fig:2D_P5} presents $\overline{\Delta y}_{best}^t$ on each problem type of the 2D problem set and Fig. \ref{fig:40D} show $\overline{\Delta y}_{best}^t$ in the 40D problem set. 
In table \ref{table:coco_hyperparams}, we present the hyperparameters used for EGL and IGL in all our experiments (besides the ablation tests).

In future work, we will need to design a better mechanism for the $\varepsilon$ scheduling. In problem 19 (Griewank-Rosenbrock F8F2), the $\varepsilon$ scheduling was too slow, and only when we used a smaller initial $\varepsilon$, EGL started to converge to the global minimum (see Fig. \ref{fig:epsilon_40D}(a) where we used initial $\varepsilon = 0.001\times \sqrt{n}$, $\gamma_{\alpha}=0.7$ and $L=1$). On the other hand, in problems, 21 (Gallagher 101 peaks) and 22 (Gallagher 21 peaks) using small $\varepsilon$ ends up in falling to local minima, and the choice of a larger $\varepsilon$ could smooth the gradient which pushes $x_k$ over the local minima (see \ref{fig:epsilon_40D}(b-c) respectively where we used initial $\varepsilon = 0.5\times \sqrt{n}$ and $L=4$). 

In Fig. \ref{fig:hist_P1}-\ref{fig:hist_P6} we present a histogram of the raw and scaled cost value (after the output-mapping) of a 200 samples snapshot from the replay buffer at different periods during the learning process ($t=1K$, $t=10K$, $t=100K$). Typically, we expect that problems with Normal or Uniform distributions should be easier to learn with a NN (e.g. problems 15 (Rastrigin), 18 (Schaffer F7, cond1000), 23 (ats ras)), while problems with skewed distribution or multimodal distribution are much harder (e.g. problems 2 (Ellipsoid separable), 10 (Ellipsoid) and 11 (Discus)). However, simply, mapping from a hard distribution into a Normal distribution is not necessarily a good choice since we lose the mapping linearity s.t. the scaled mean-gradient may not correspond to the true mean-gradient. Thus, the output-mapping must balance between linearity and normalization. In future work, we would like to find better, more robust output-mappings that overcome this problem. Understanding the way that the values are distributed at run-time could also help us define a better mechanism for deciding on $\varepsilon$ and the RB size $L$. If the function outputs are close to each other, large RB could be beneficial, but if the values have high variance, large RB could add unnecessary noise.

\begin{table}
\centering
\caption{The COCO experiment Hyperparameters}
\label{table:coco_hyperparams}
\begin{tabular*}{16cm}[t]{lrl}
\toprule
Parameter & Value & Description \\
\midrule
$n$ & [2,3,5,10,20,40,784] & coco space dimension \\
$m$ & 64 & Exploration points \\
$m_{warmup\_factor}$ & 5 & $m\times m_{warmup\_factor}$ to adjust the network parameters \\
& & around the TR initial point\\
\texttt{batch} & 1024 &  Minibatch of EGL/IGL training\\
$L$ & $32$ &  Number of exploration steps that constitute the replay buffer for EGL/IGL \\ 
& & The replay memory size is: $RB=L\times m $\\
$C$ & $15\times10^4$ & Budget \\
$\alpha$ & $10^{-2}$ & Optimization steps' size \\
\texttt{g\_lr} & $10^{-3}$ & $g_{\theta}$ learning rate \\
$\gamma_{\alpha}$ & $0.9$ & Trust region squeezing factor \\
$\gamma_{\varepsilon}$ & $0.97$ & $\varepsilon$ squeezing factor \\
$\varepsilon$ & $0.1 \times \sqrt{n}$ & Initial exploration size \\
$n_{\max}$ & $10$ & The number of times in a row that \\  
& & $f(h_j^{-1}(\tilde{x}_{k+1})) > f(h_j^{-1}(\tilde{x}_{k}))$ \\
$n_{\min}$ & $40$ & Minimum gradient descent iterations \\
$p$ & $0$ & Perturbation radius \\
$g_{\theta}$ & Spline & Network architecture (SPLINE/FC) \\
\texttt{OM} & log & Output Mapping \\
\texttt{OM\_lr} & 0.1 & Moving average learning rate for the Output Mapping \\
\texttt{N\_minibatches} & 60 & \# of mini-batches for the mean-gradient learning in each $k$ step \\
$V_{\varepsilon}(x)$ & ball-explore & $V_{\varepsilon}(x) = x + \varepsilon \times U[-1,1]$  (see Sec. \ref{sec:GradientGuidedExploration} for details)\\

\bottomrule
\end{tabular*}
\end{table}%

\newpage

\begin{figure*}
   \centering
   \includegraphics[width=1\textwidth]{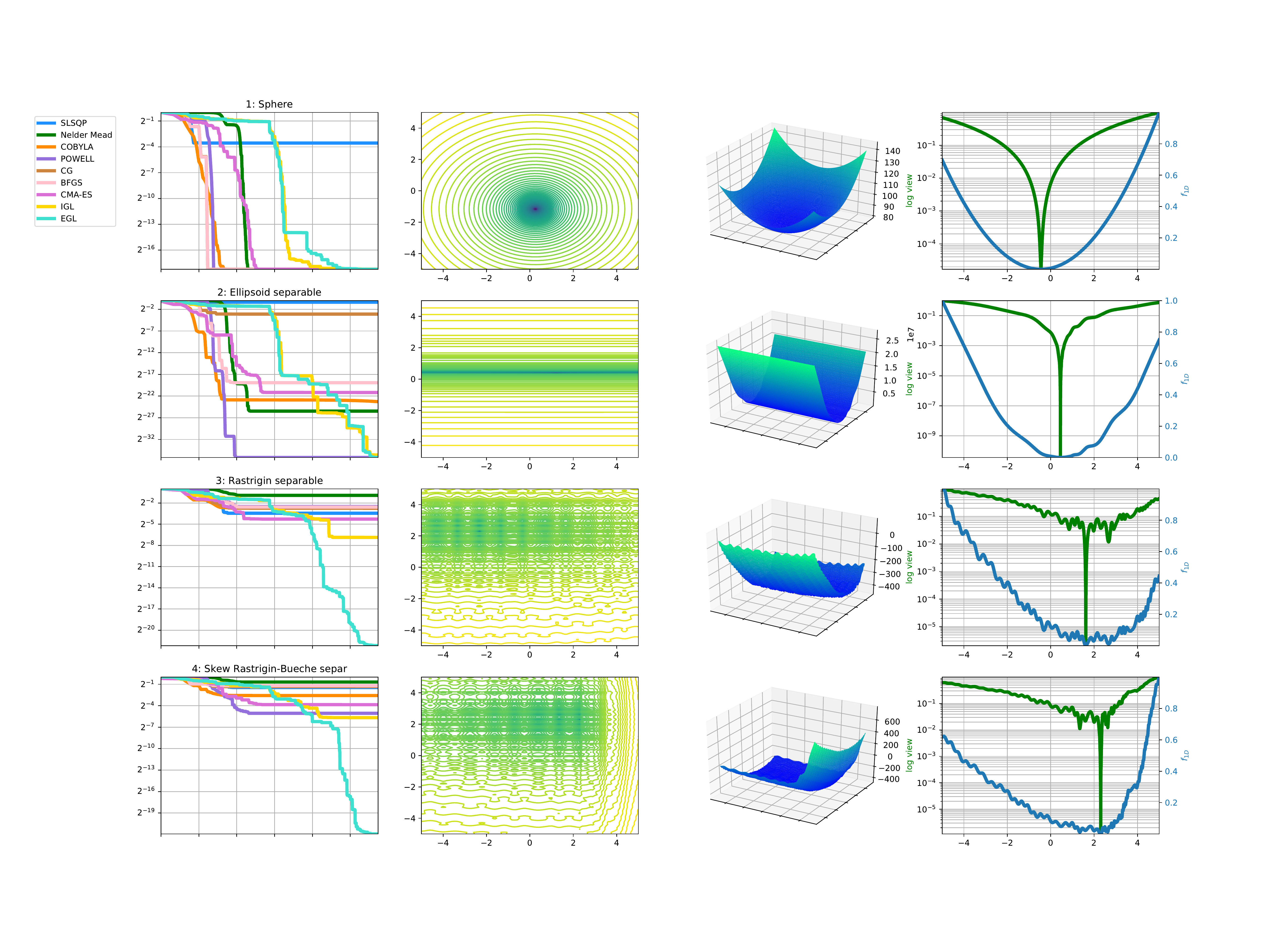}
   \caption{Visualization problems type 1-4 of 2D problems. First column: The scaled distance $\overline{\Delta y}_{best}^t$. Second column: Counter plot. Third column: 3D plot. Forth column: equivalent 1D problem with log view. }
   \label{fig:2D_P0}
\end{figure*}

\begin{figure*}
	   \centering
	   \includegraphics[width=1\textwidth]{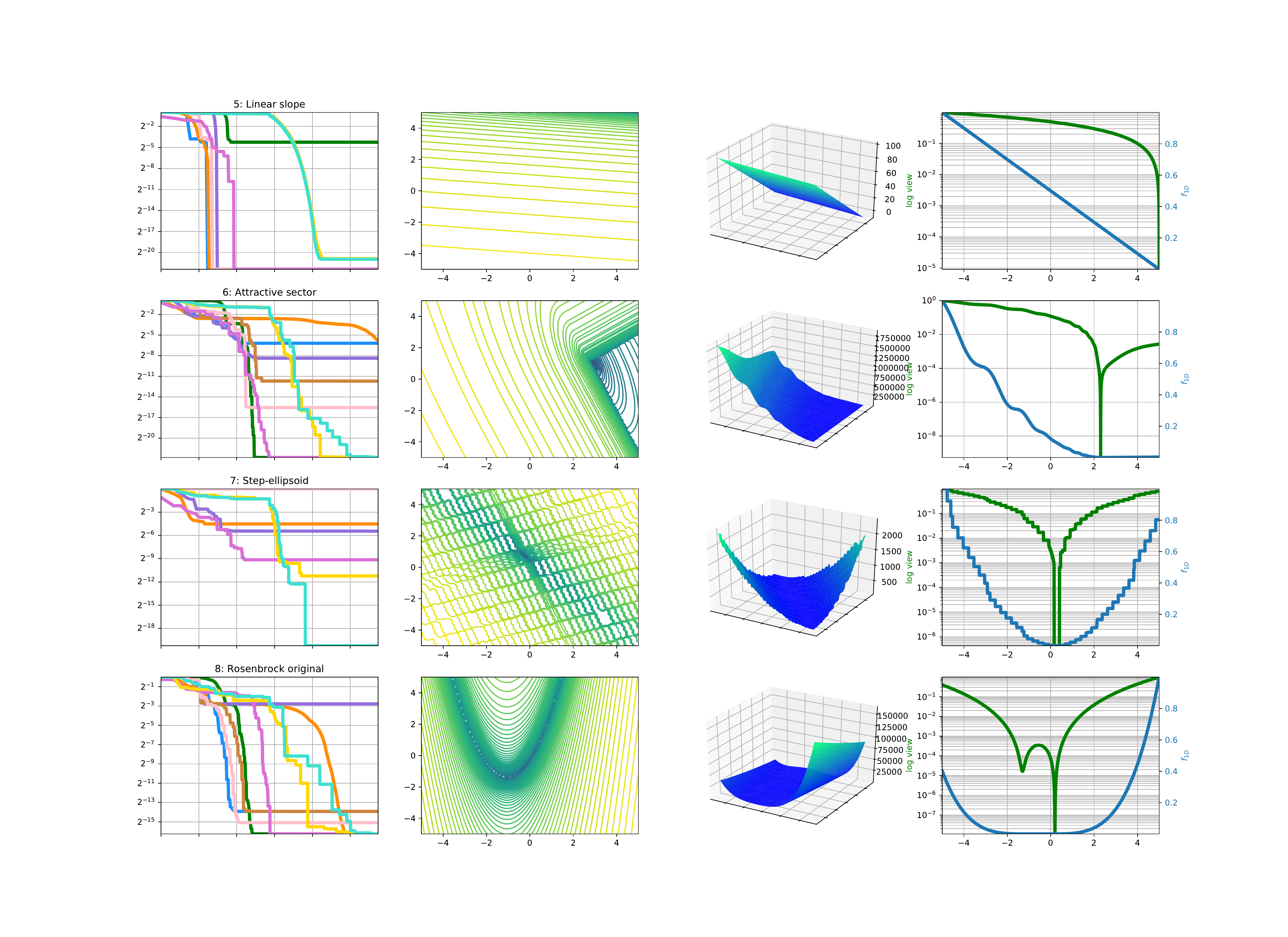}
	   \caption{Visualization problems type 5-8 of 2D problems. First column: The scaled distance $\overline{\Delta y}_{best}^t$. Second column: Counter plot. Third column: 3D plot. Forth column: equivalent 1D problem with log view. }
	   \label{fig:2D_P1}
\end{figure*}

\begin{figure*}
	   \includegraphics[width=1\textwidth]{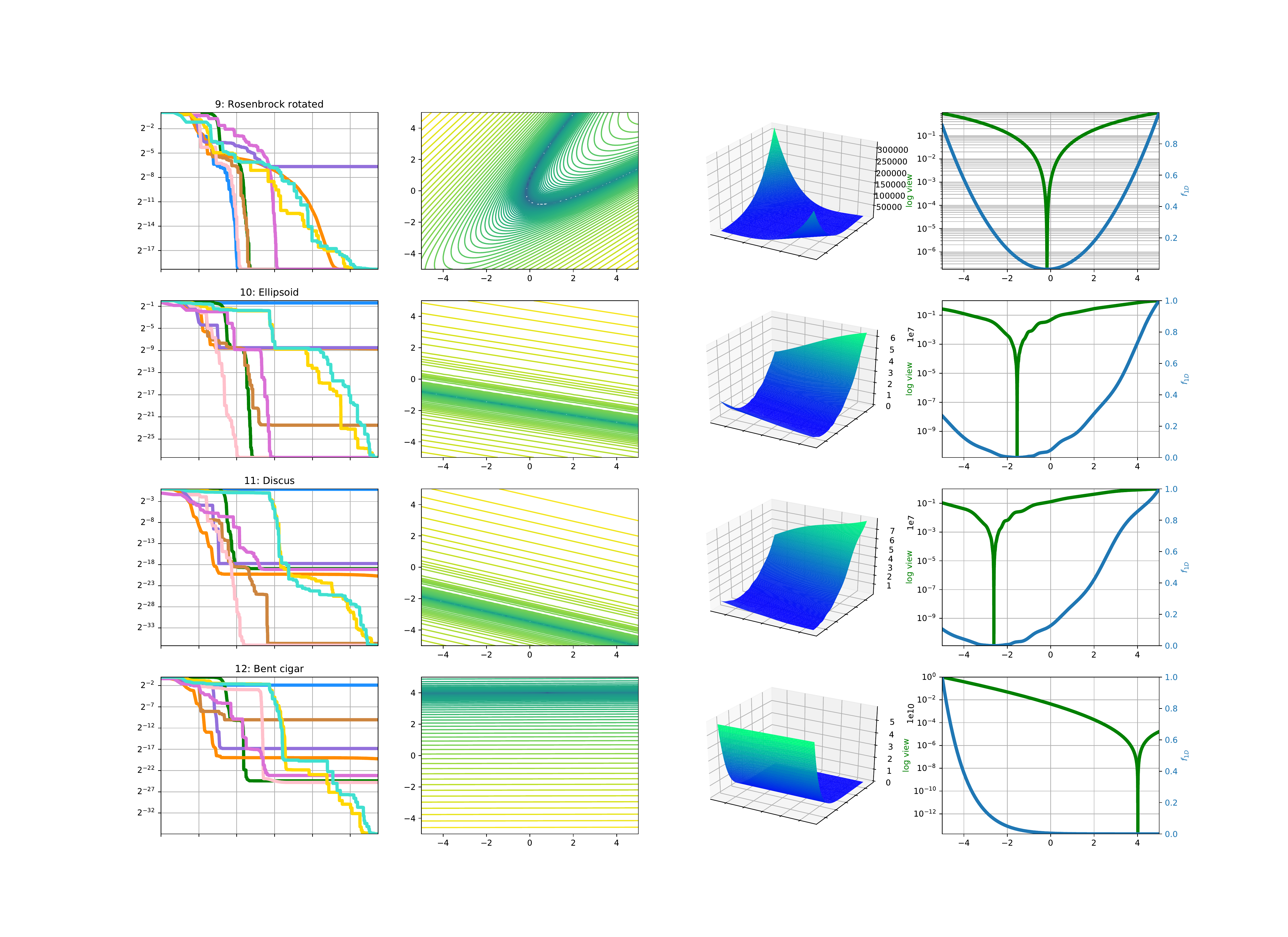}
	   \caption{Visualization problems type 9-12 of 2D problems. First column: The scaled distance $\overline{\Delta y}_{best}^t$. Second column: Counter plot. Third column: 3D plot. Forth column: equivalent 1D problem with log view. }
	   \label{fig:2D_P2}
\end{figure*}

\begin{figure*}
	   \includegraphics[width=1\textwidth]{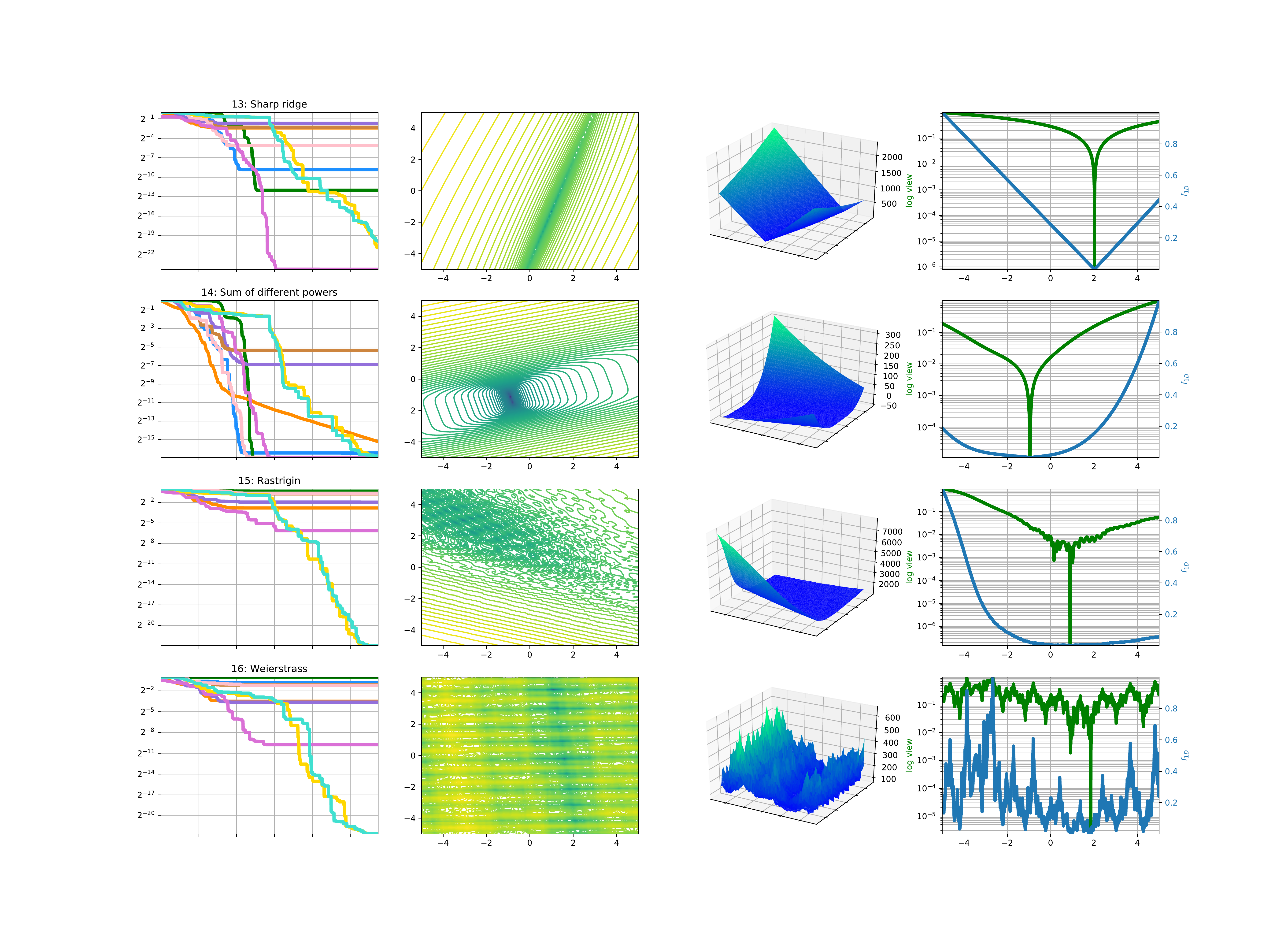}
	   \caption{Visualization problems type 13-16 of 2D problems. First column: The scaled distance $\overline{\Delta y}_{best}^t$. Second column: Counter plot. Third column: 3D plot. Forth column: equivalent 1D problem with log view. }
	   \label{fig:2D_P3}
\end{figure*}

\begin{figure*}
	  \includegraphics[width=1\textwidth]{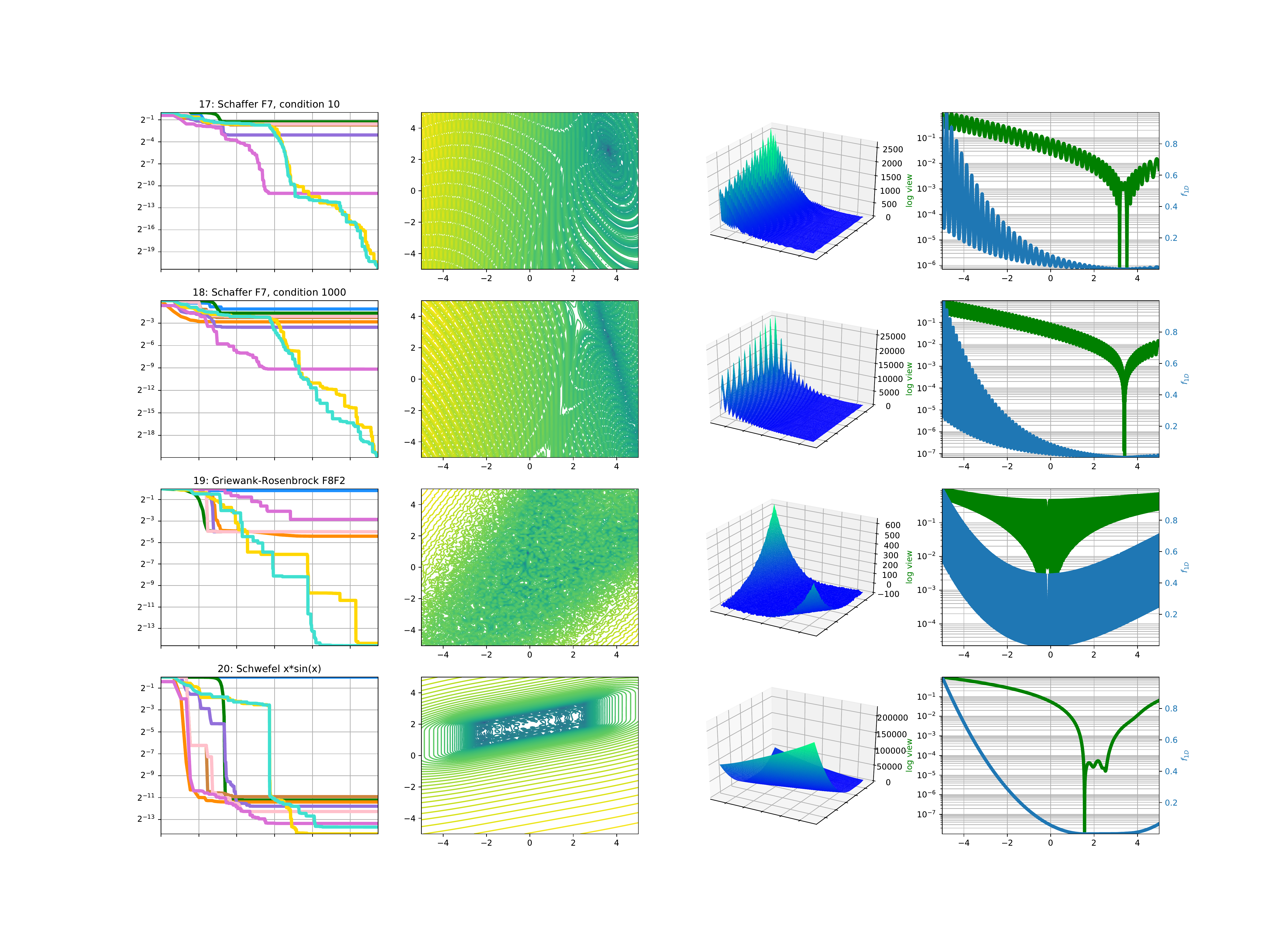}
	  \caption{Visualization problems type 17-20 of 2D problems. First column: The scaled distance $\overline{\Delta y}_{best}^t$. Second column: Counter plot. Third column: 3D plot. Forth column: equivalent 1D problem with log view. }
	  \label{fig:2D_P4}
\end{figure*}

\begin{figure*}
	   \includegraphics[width=1\textwidth]{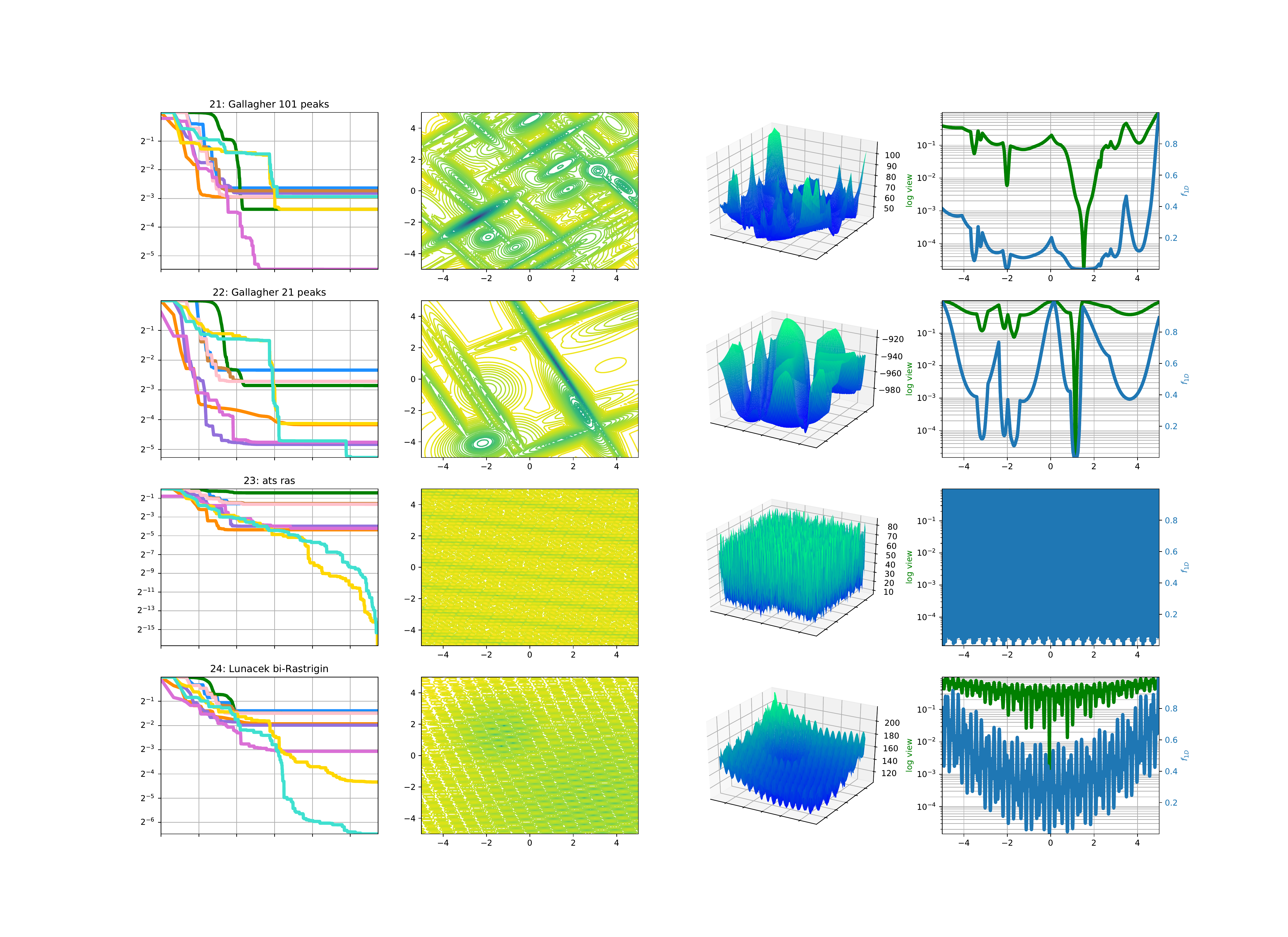}
	   \caption{Visualization problems type 21-24 of 2D problems. First column: The scaled distance $\overline{\Delta y}_{best}^t$. Second column: Counter plot. Third column: 3D plot. Forth column: equivalent 1D problem with log view. }
	   \label{fig:2D_P5}
\end{figure*}

\begin{figure*}
  \centering
  \includegraphics[width=1\linewidth]{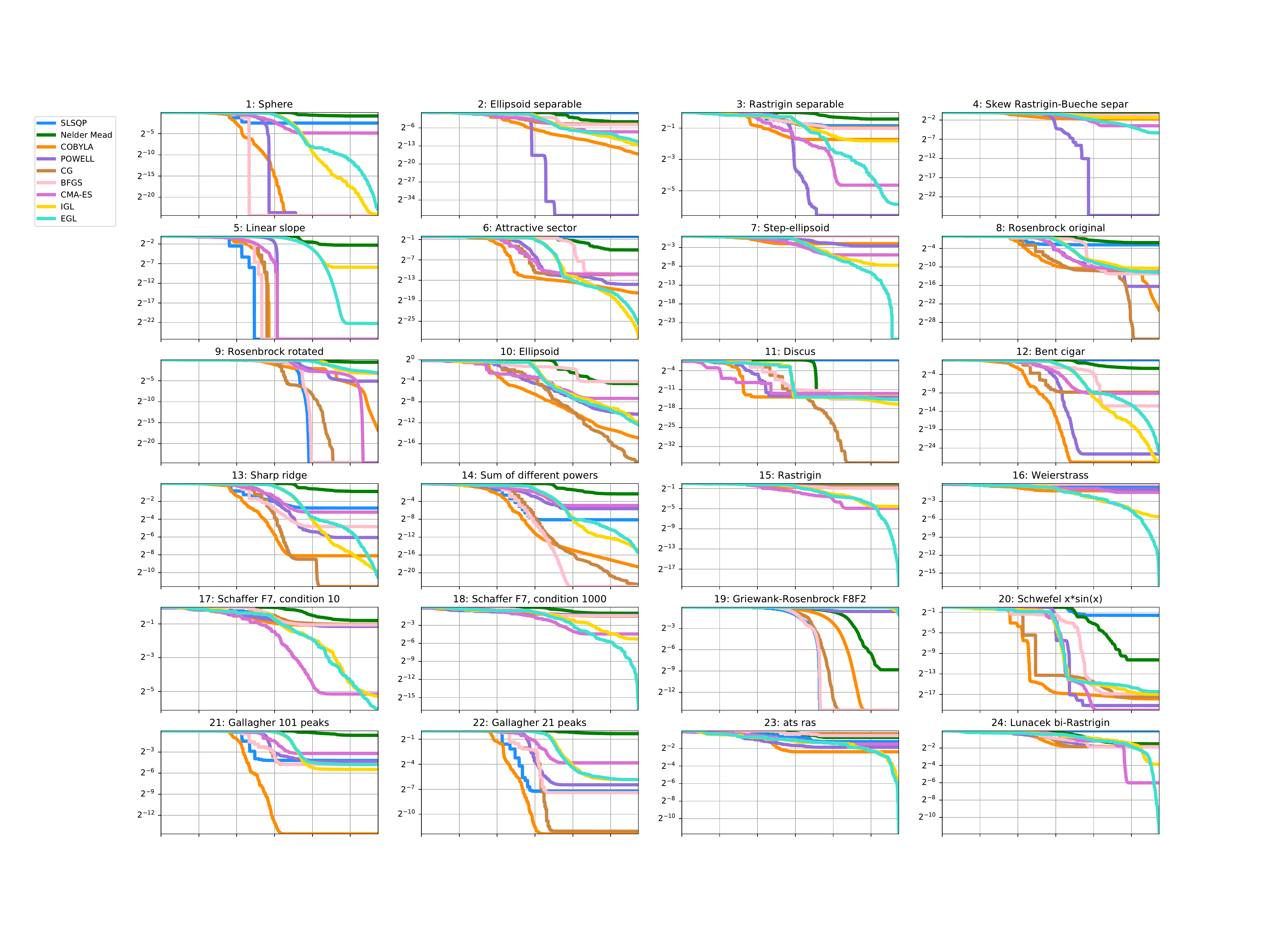}
  \caption{The scaled distance $\overline{\Delta y}_{best}^t$ per problem type on 40D}
\label{fig:40D}
\end{figure*}

\begin{figure*}
  \centering
  \includegraphics[width=1\linewidth]{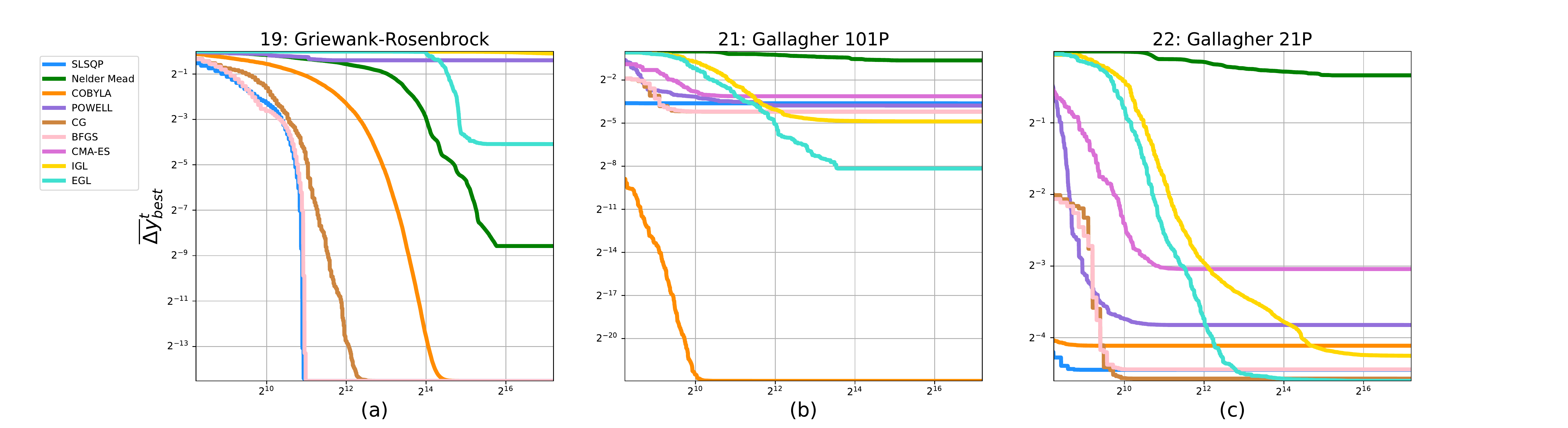}
  \caption{The scaled distance $\overline{\Delta y}_{best}^t$ with different $\varepsilon$ on 40D. (a) problem type 19, (b) problem type 21, (c) problem type 22}
\label{fig:epsilon_40D}
\end{figure*}

\newpage

\begin{figure*}
	   \centering
	   \includegraphics[width=1\textwidth]{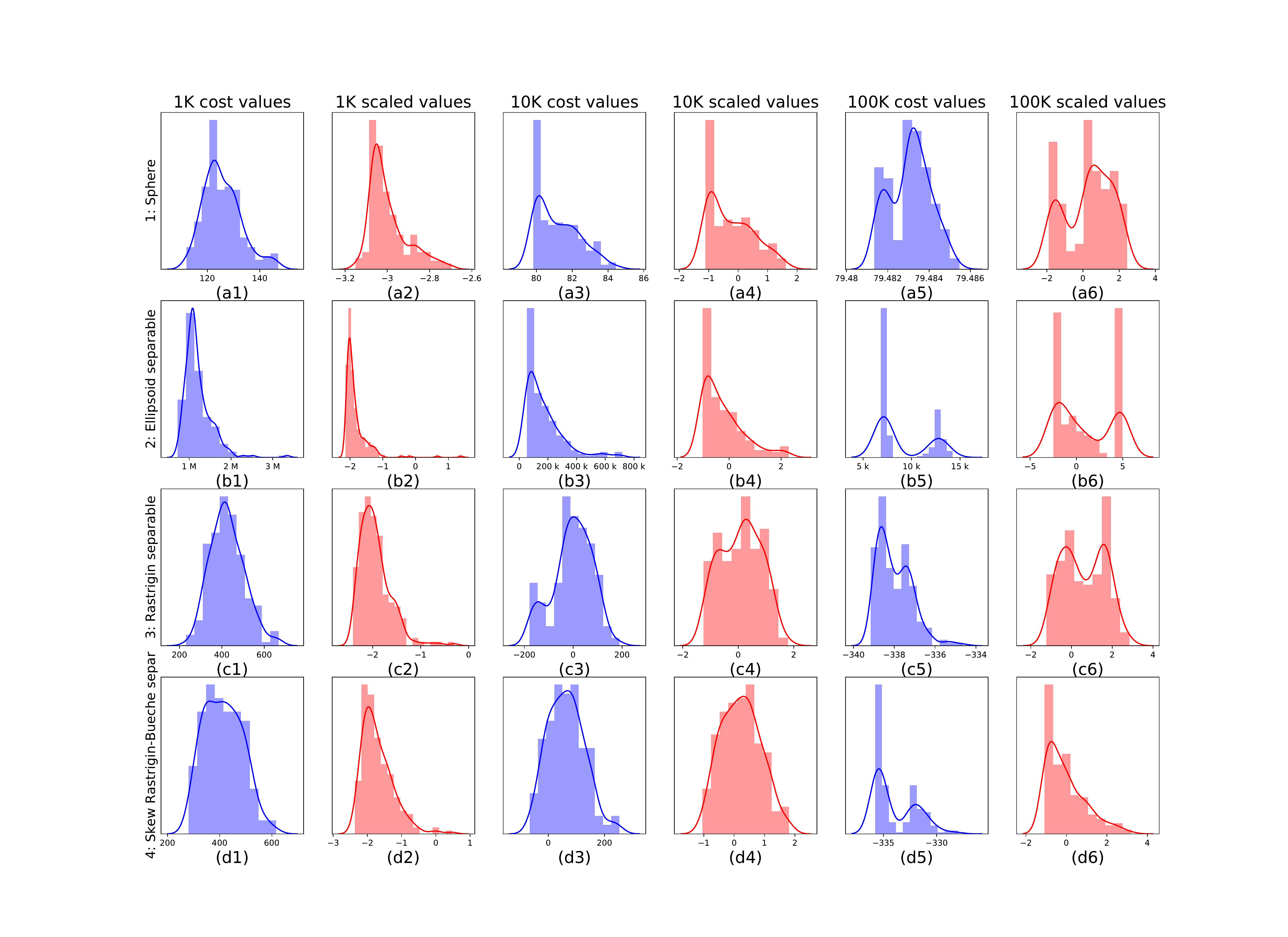}
	   \caption{Histogram plot of a snapshot of 200 samples from the RB around different times ($t=1K$, $t=10K$, $t=100K$) with and without OM for problems 1-4}
	   \label{fig:hist_P1}
\end{figure*}

\begin{figure*}
	   \includegraphics[width=1\textwidth]{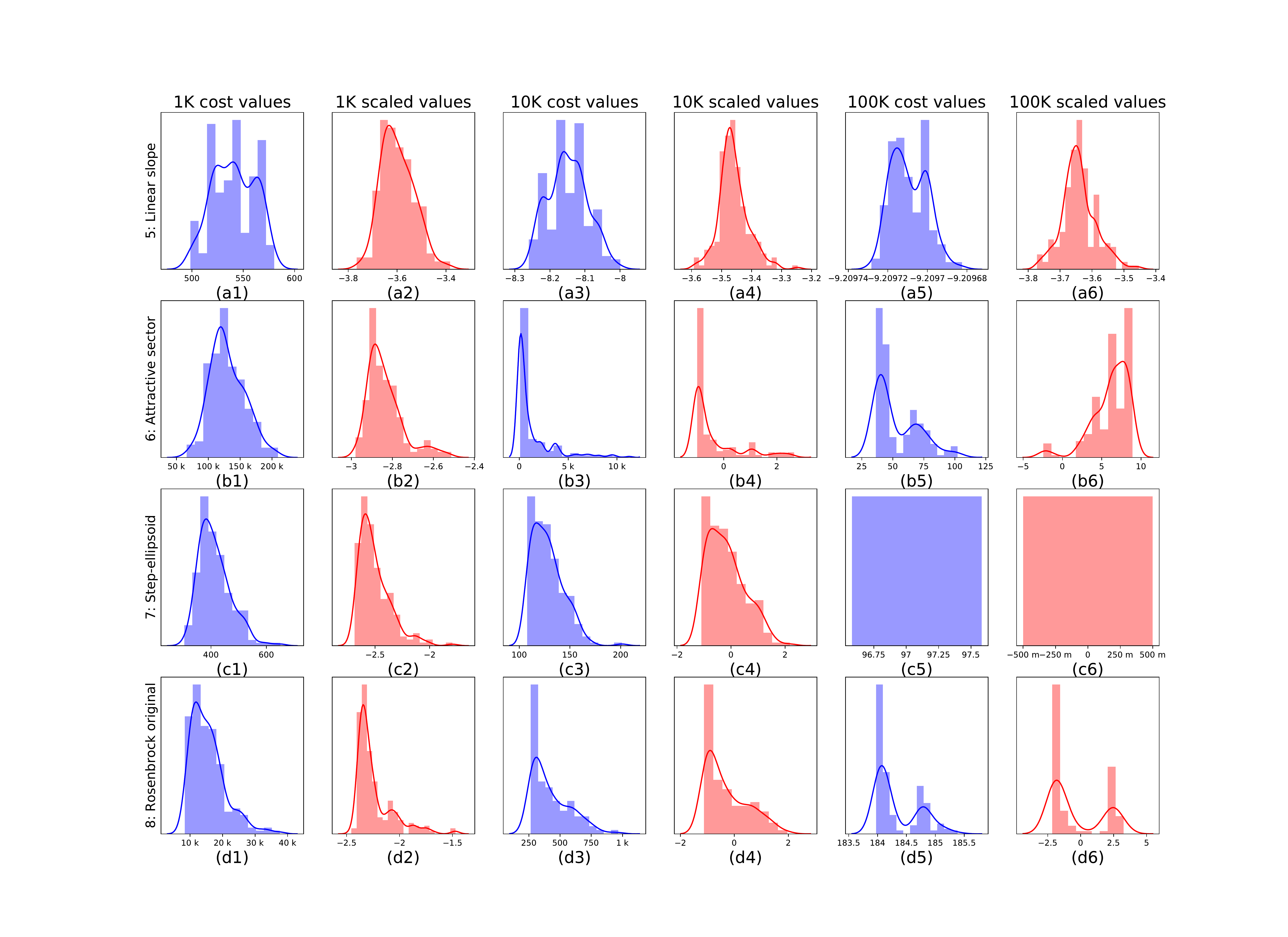}
	   \caption{Histogram plot of a snapshot of 200 samples from the RB around different times ($t=1K$, $t=10K$, $t=100K$) with and without OM for problems 5-8}
	   \label{fig:hist_P2}
\end{figure*}

\begin{figure*}
	   \includegraphics[width=1\textwidth]{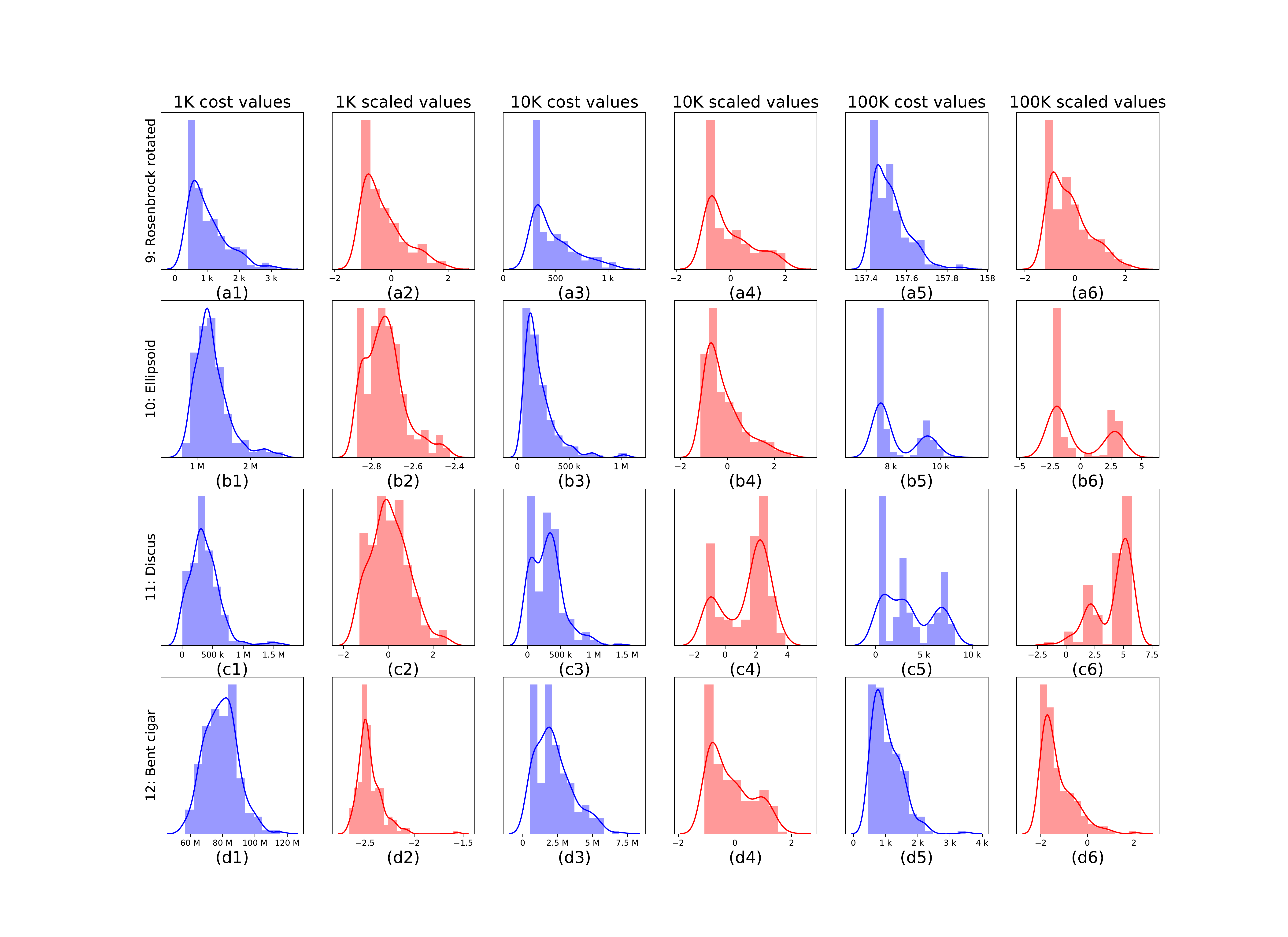}
	   \caption{Histogram plot of a snapshot of 200 samples from the RB around different times ($t=1K$, $t=10K$, $t=100K$) with and without OM for problems 9-12}
	   \label{fig:hist_P3}
\end{figure*}

\begin{figure*}
	  \includegraphics[width=1\textwidth]{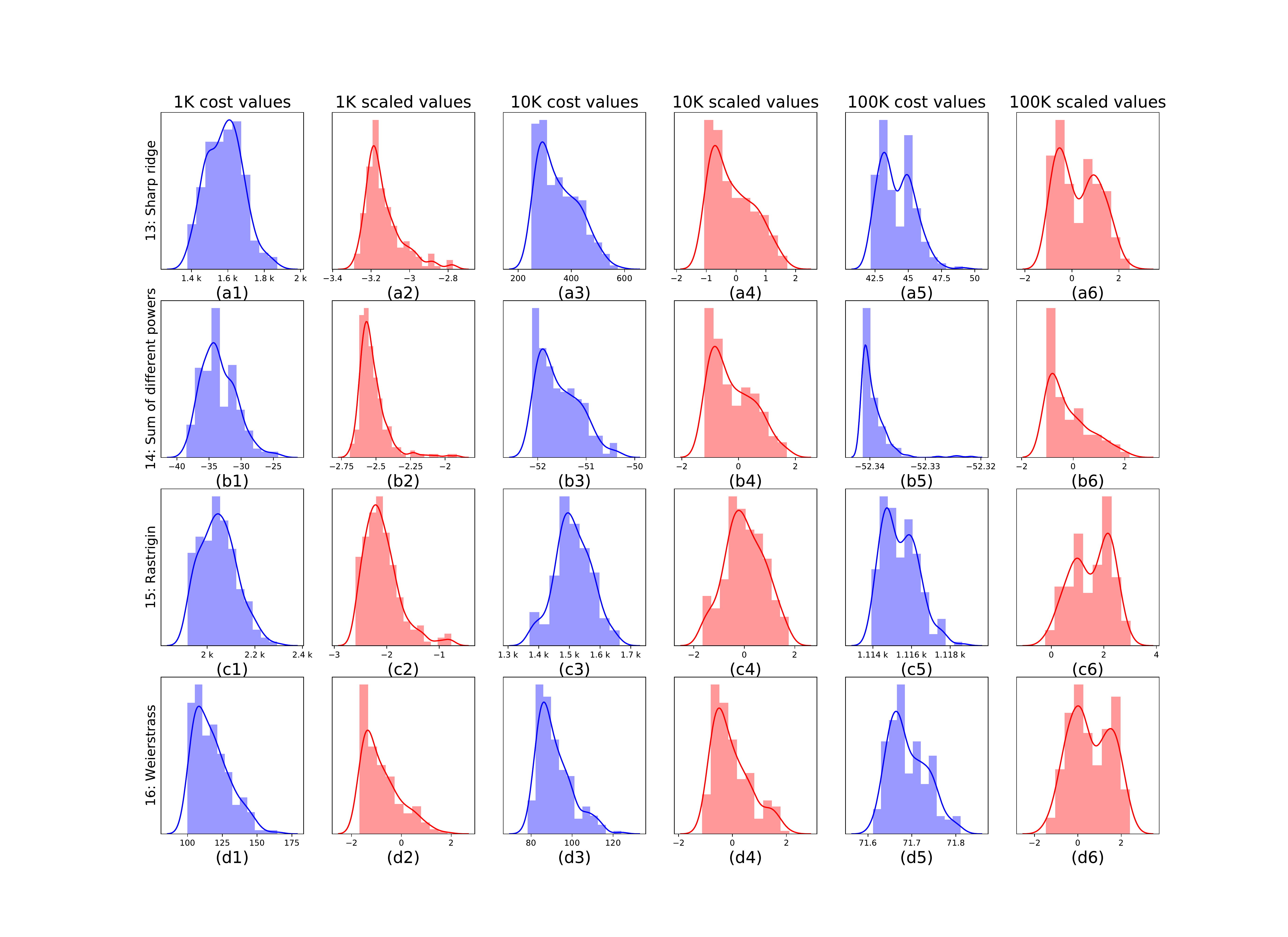}
	  \caption{Histogram plot of a snapshot of 200 samples from the RB around different times ($t=1K$, $t=10K$, $t=100K$) with and without OM for problems 13-16}
	  \label{fig:hist_P4}
\end{figure*}

\begin{figure*}
	   \includegraphics[width=1\textwidth]{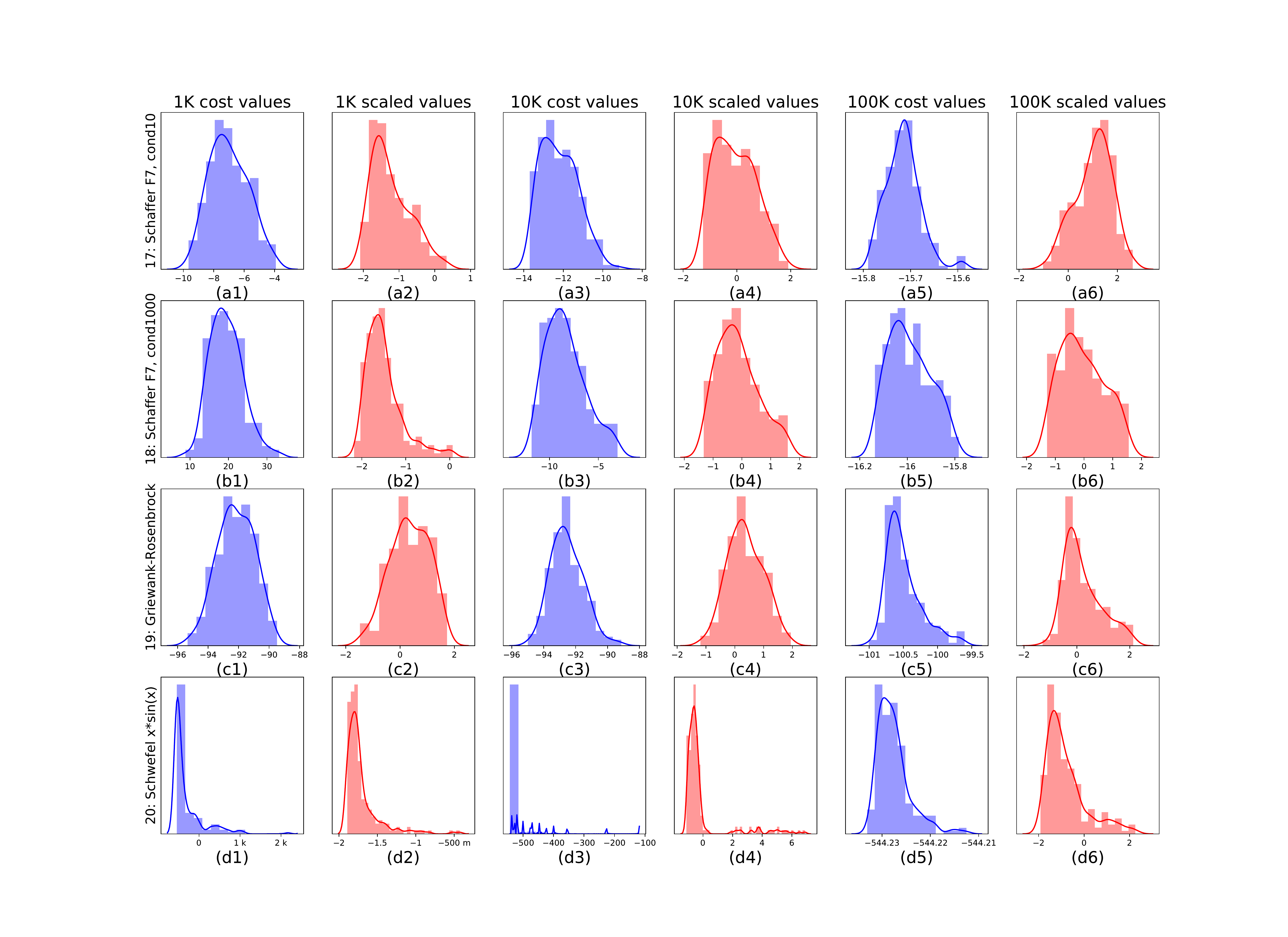}
	   \caption{Histogram plot of a snapshot of 200 samples from the RB around different times ($t=1K$, $t=10K$, $t=100K$) with and without OM for problems 17-20}
	   \label{fig:hist_P5}
\end{figure*}

\begin{figure*}
	   \includegraphics[width=1\textwidth]{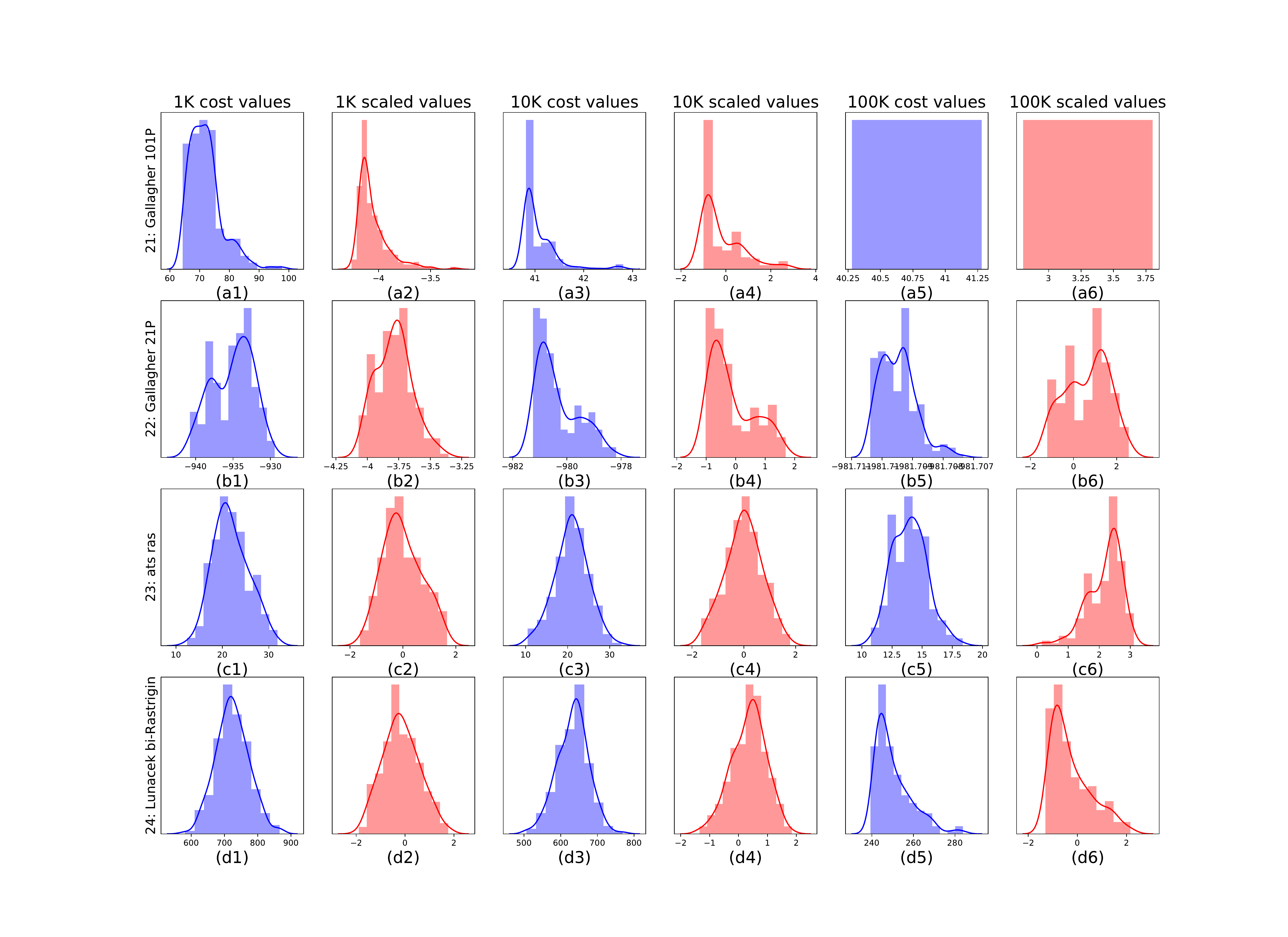}
	   \caption{Histogram plot of a snapshot of 200 samples from the RB around different times ($t=1K$, $t=10K$, $t=100K$) with and without OM for problems 21-24}
	   \label{fig:hist_P6}
\end{figure*}
\clearpage
\newpage

\section{Supplementary details: latent space search}
\label{sec:latent_space_search}

\begin{figure*}[ht!]
  \centering
  \includegraphics[width=.7\linewidth]{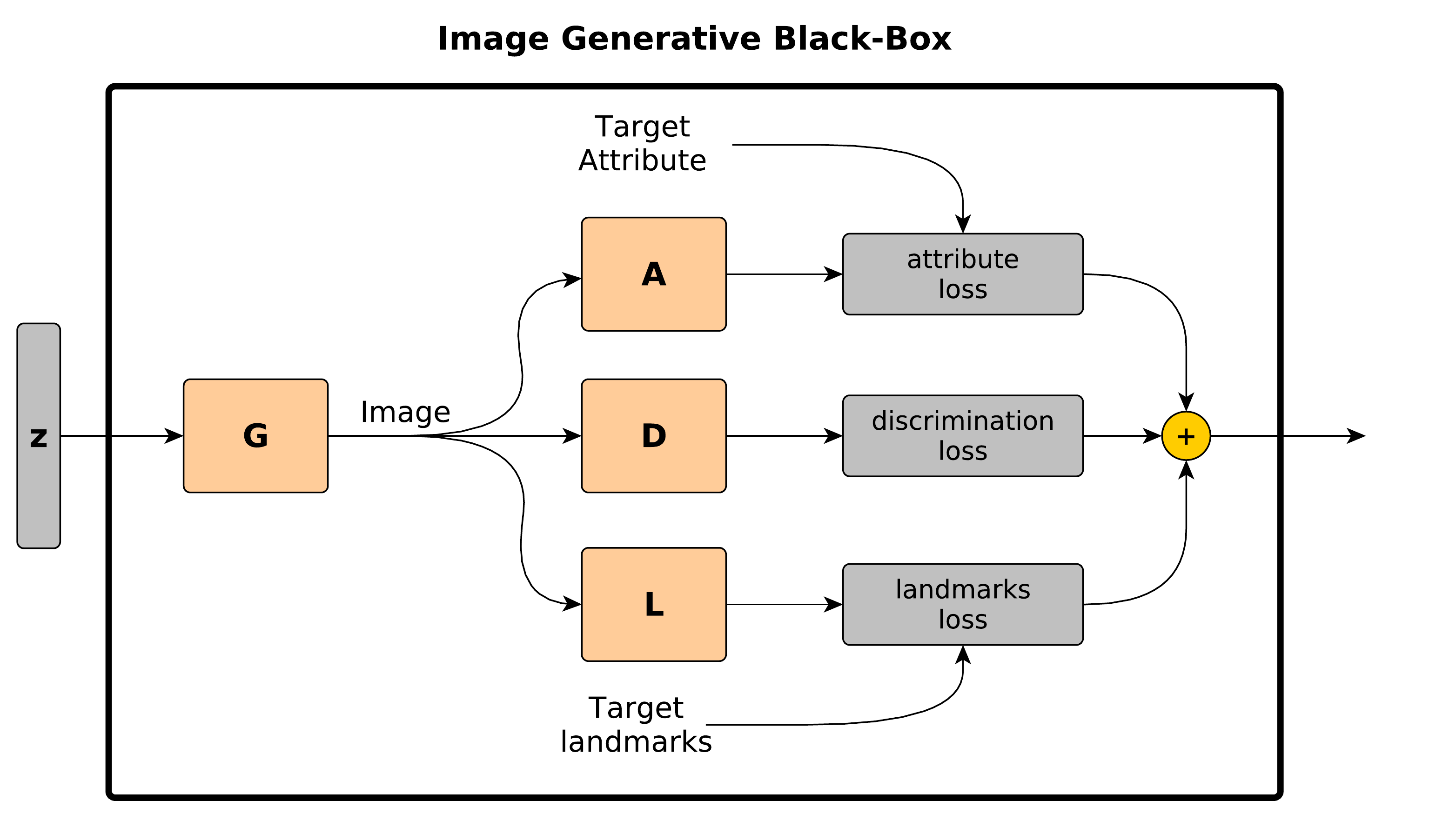}
  \caption{The image-generative BBO task}
\label{fig:latent_graph}
\end{figure*}

In this experiment, the task is to utilize a pre-trained Black-Box face image generator and generate a realistic face image with target face attributes and face landmark points. Formally, we have 4 Black-Box networks that constitute our BBO problem:
\begin{enumerate}
    \item Generator $G:z\to x$, where $z\sim \mathcal{N}(0, \mathbf{I}_{n})$ and $x$ is an RGB image with $H=218$, $W=178$.
    \item Discriminator $D:x\to \mathbb{R}$ s.t. positive $D(x)$ indicate poor fake images while negative $D(x)$ indicates real or a good fake image.
    \item Attribute Classifier $A:x\to \mathbb{R}^{40}$ where each element in $A(x)$ is the probability of a single attribute (out of 40 different attributes).
    \item Landmark points Estimator $L:x\to \mathbb{R}^{68}$ predicts the location of 68 different landmark points.
\end{enumerate}

In addition, every BBO problem is characterized by two external parameters
\begin{enumerate}
    \item Target attributes $a$ a vector of 40 Booleans.
    \item Target landmark points $l\in\mathbb{R}^{68}$ a vector of 68 landmark points locations.
\end{enumerate}

The overall cost function is defined as 
\begin{equation*}
    f_{al}(z) = \lambda_a \mathcal{L}_a(G(z)) + \lambda_l \mathcal{L}_l(G(z)) + \lambda_g  \tanh(D(G(z)))
\end{equation*}
Where: 
\begin{enumerate}
    \item  $\mathcal{L}_a$ is the Cross-Entropy loss between the generated face attributes as measured by the classifier and the desired set of attributes $a$.
    \item $\mathcal{L}_l$ is the MSE between the generated landmark points and the desired set of landmarks $l$.
    \item $D(G(z))$ is the discriminator output, positive for low-quality images and negative for high-quality images.
\end{enumerate}

The objective is to find $z^*$ that minimizes $f_{al}$. A graphical description of the BBO problem is given in Fig.
\ref{fig:latent_graph}. We used a constant starting point $z_0$, sampled from $\mathcal{N}(0, \mathbf{I}_{n})$ for all our runs. To generate different problems, we sampled a target image $x_T$ from the CelebA dataset. We used its attributes as the target attribute and used its landmark points estimation $l=L(x_T)$ as the target landmarks. Note that the target image $x_T$ was never revealed to the EGL optimizer, only its attributes and landmark points.

We applied the same EGL algorithm as in the COCO experiment with the Spline Embedding network architecture and with two notable changes: (1) a set of slightly modified hyperparameters (See Table \ref{table:latent_hyperparams}); and (2) a modified exploration domain $V_{\varepsilon}$. The main reason for these adjustments was the high computational cost (comparing to the COCO experiment) of each different $z$ vector. This led us to squeeze the budget $C$ to only $10^4$ evaluations and the number of exploration points to only $m=32$. For such a low number of exploration points around each candidate (in a $n=512$ dimension space), we designed an exploration domain $V_{\varepsilon}$, termed {\em cone-explore} which we found out to be more efficient than the uniform exploration inside an $n$-ball with $\varepsilon$ radius that was executed in the COCO experiment. A description of the cone-explore method is given in Sec. \ref{sec:GradientGuidedExploration}.

Additional results of EGL and IGL are given in Fig. \ref{fig:latent_res_2} and Fig. \ref{fig:latent_res_3}. We also evaluated two classical algorithms: GC and CMA-ES, both provided unsatisfying results (see Fig. \ref{fig:latent_cg_cma}). We observed that CG converged to local minima around the initial point $z_0$ while CMA-ES converged to points that have close landmark point but poor discrimination score. We did not investigate into this phenomena but we postulate that it is the result of different landscapes statistics of the two factors in the cost function, i.e. $\mathcal{L}_l(G(z))$ and $\tanh(D(G(z)))$ which fool the CMA-ES algorithm.

\begin{table}
\centering
\caption{Latent-Space Search Hyperparameters}
\label{table:latent_hyperparams}
\begin{tabular*}{16cm}[t]{lrl}
\toprule
Parameter & Value & Description \\
\midrule
$n$ & 512 & Latent space dimension \\
$\lambda_a$ & 1 & Attributes score weight \\
$\lambda_d$ & 2 & Discriminator score weight \\
$\lambda_l$ & 100 & Landmarks score weight \\
$m$ & 32 & Exploration points \\
\texttt{batch} & 1024 &  Minibatch of EGL/IGL training\\
$L$ & $64$ &  Number of exploration steps that constitute the replay buffer for EGL/IGL \\ 
& & The replay memory size is: $RB=L\times m $\\
$C$ & $10^4$ & Budget \\
$\phi$ & $\frac{2}{3}\pi$ & Cone exploration angle \\
$\alpha$ & $0.02$ & Optimization steps' size \\
\texttt{g\_lr} & $10^{-3}$ & $g_{\theta}$ learning rate \\
$\gamma_{\alpha}$ & $0.9$ & Trust region squeezing factor \\
$\gamma_{\varepsilon}$ & $0.95$ & $\varepsilon$ squeezing factor \\
\bottomrule
\end{tabular*}
\end{table}%

\begin{figure*}
  \centering
  \includegraphics[width=.7\linewidth]{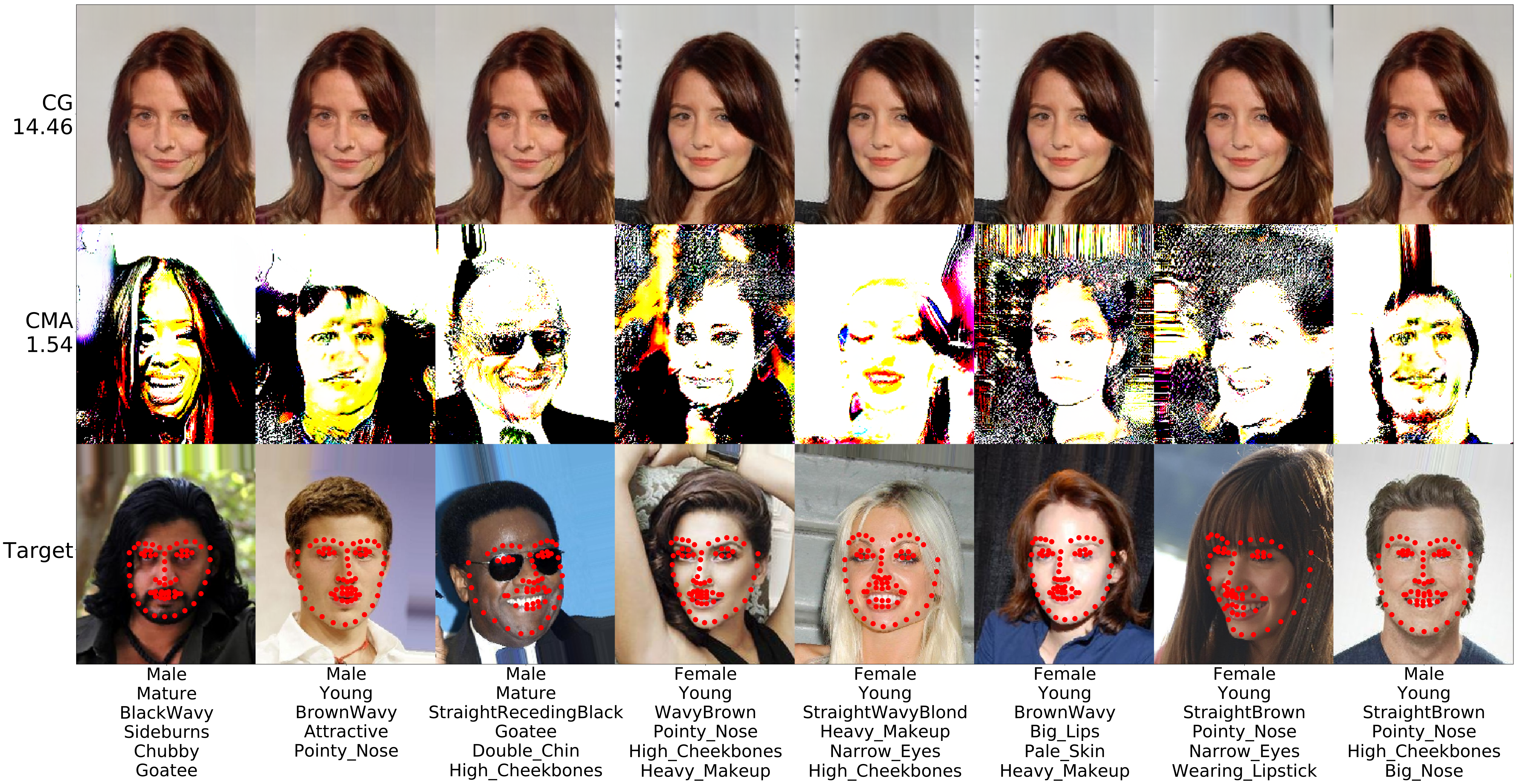}
  \caption{Baselines results: CG and CMA}
\label{fig:latent_cg_cma}
\end{figure*}

\ifx\hidepics\undefined

\begin{figure*}
  \centering
  \includegraphics[width=1\linewidth]{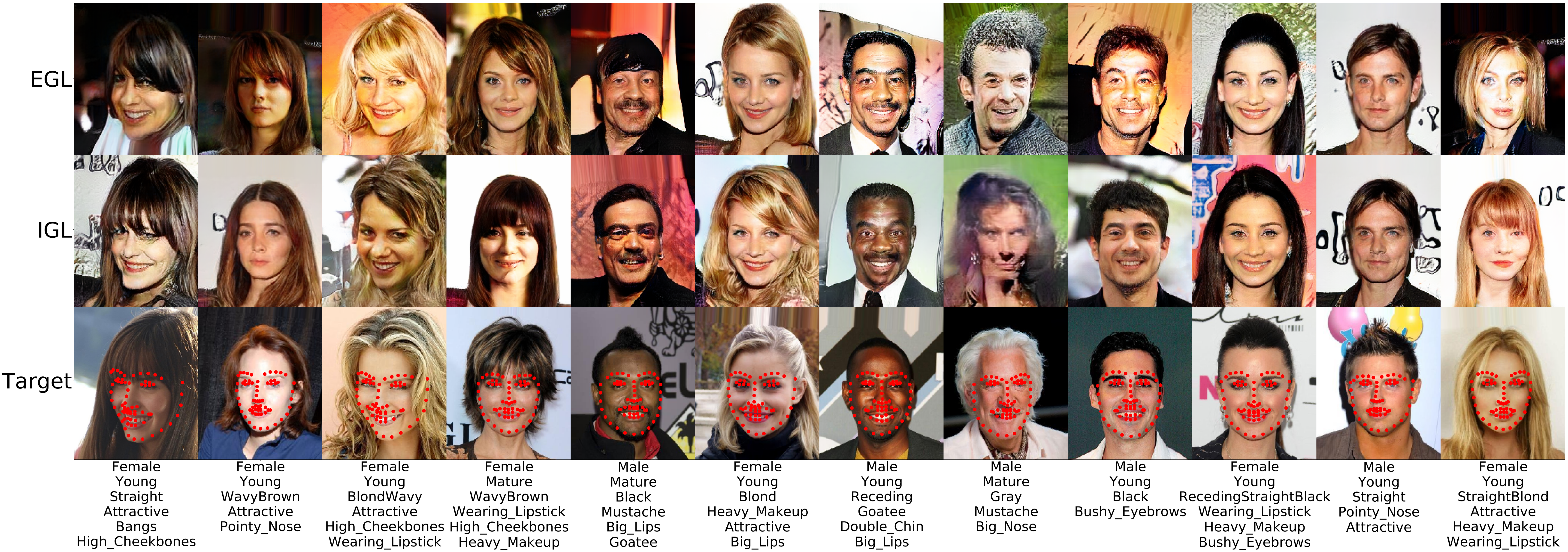}
  \caption{Searching latent space of generative models with EGL and IGL}
\label{fig:latent_res_2}
\end{figure*}

\else

\begin{figure*}
  \centering
  \includegraphics[width=1\linewidth]{example-image-a}
  \caption{Searching latent space of generative models with EGL and IGL}
\end{figure*}

\fi

\ifx\hidepics\undefined

\begin{figure*}
  \centering
  \includegraphics[width=1\linewidth]{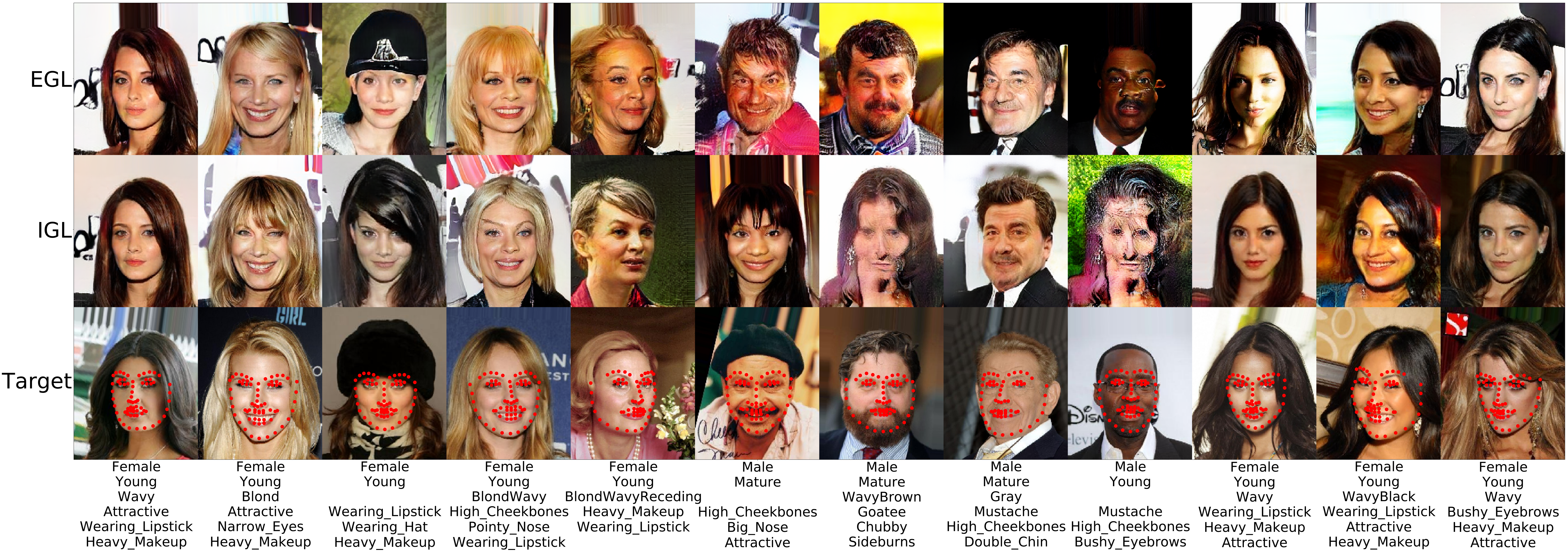}
  \caption{Searching latent space of generative models with EGL and IGL}
\label{fig:latent_res_3}
\end{figure*}

\else

\begin{figure*}
  \centering
  \includegraphics[width=1\linewidth]{example-image-a}
  \caption{Searching latent space of generative models with EGL and IGL}
\end{figure*}

\fi

\newpage

\section{Gradient Guided Exploration}\label{sec:GradientGuidedExploration}

In high-dimensional problems and low budgets, sampling $n+1$ exploration points for each new candidate $x_k$ may consume the entire budget too fast without being able to take enough optimization steps. In practice,  EGL works even with $m \ll n+1$ exploration points, however, we observed that one can improve the efficiency when $m \ll n+1$ by sampling the exploration points non uniformly around the candidate $x_k$. Specifically, using the previous estimation of the gradient to determine the search direction. Hence, we term this approach as {\em gradient guided exploration}.

In the COCO experiment (Sec. \ref{sec:Experiment}) we sampled the exploration points uniformly around each candidate. In other words, our $V_{\varepsilon}$ domain was an $n$-ball with $\varepsilon$ radius, we term this method as {\em ball-explore}. Ball-explore does not make any assumptions on the gradient direction at $x_k$ and does not use any a-priori information. However, for continuous gradients, we do have a-priori information on the gradient direction as we have our previous estimator $g_{\theta_{k-1}}$. Since $x_k$ is relatively close to $x_{k-1}$, the learned model $g_{\theta_{k-1}}$ can be used as a first-order approximation for the gradient in $x_k$, i.e. $g_{\theta_{k-1}}(x_k)$. 

If $g_{\theta_{k-1}}(x_k)$ is a good approximation for $g_{\varepsilon}(x_k)$, then sampling points in a perpendicular direction to the mean-gradient, i.e. $x$ s.t. $(x-x_k) \cdot g_{\theta_{k-1}}(x_k) = 0$, adds little information since $f(x) - f(x_k) \approx 0$ so the loss $|(x-x_k) \cdot g_{\theta_{k-1}}(x_k) - f(x) + f(x_k)|^2$ is very small. Therefore, we experimented with sampling points inside the intersection of a $n$-ball $B_{\varepsilon}(x_k)$ and a cone with apex at $x_k$, direction $-g_{\theta_{k-1}}(x_k)$ and some hyperparameter cone-angle of $\phi$. The distance vector $x-x_k$ of a point inside such a cone has high cosine similarity with the mean-gradient and we postulate that this should improve the efficiency of the learning process. We term this alternative exploration method as {\em cone-explore} and denote the cone domain as $C_{\varepsilon}^{\phi}(x,g_{\theta_{k-1}}(x_k))$.

For high dimensions, cone-explore significantly reduces the exploration volume. A simple upper bound for the cone-to-ball volume ratio show that it decays exponentially in $n$
\begin{equation}
    \frac{|C_{\varepsilon}^{\phi}|}{|B_{\varepsilon}|} \leq \frac{\sqrt{\pi}\Gamma(\frac{n+1}{2})}{n\Gamma(\frac{n}{2}+1)} \left(\sin\phi\right)^{n-1}
\end{equation}
Unfortunately, cone-explore is not suitable for non-continuous gradients or too large optimization steps. To take into account gradient discontinuities, we suggest to sample half of the points inside the cone and half inside an $n$-ball. 

In Fig. \ref{fig:cone_vs_ball} we present an ablation test in the 784D COCO problem set of cone-explore and $\frac{1}{2}$-cone-$\frac{1}{2}$-ball explore with respect to the standard ball-explore. In this experiment, we used only $m=32$ exploration points around each candidate. The results show that sampling half of the exploration points inside a cone improved the results by 18\%. We found out that the strategy also improves the results in the latent space search experiment, yet we did not conduct a full ablation test.

On the downside, we found out that if $m\approx n$ then cone-explore hurts the performance. We hypothesize that near local minima, where the exact direction of the gradient is important, the mean-gradient learned with cone-explore has lower accuracy and therefore, ball-explore with sufficient sampling points converges to better solutions.

\begin{figure*}[hb!]
  \centering
  \includegraphics[width=0.5\linewidth]{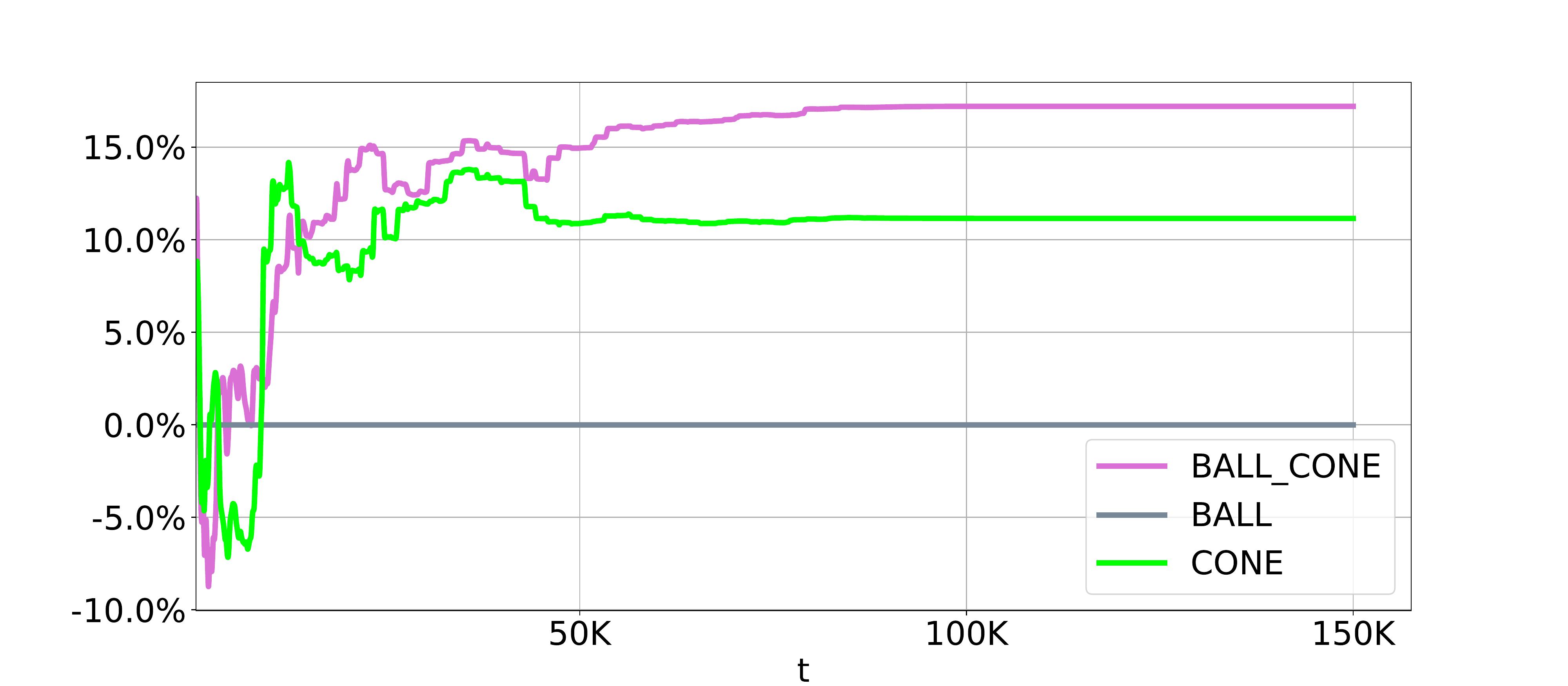}
  \caption{Ablation test on the 784D COCO problem set: Cone-explore vs Ball-explore}
\label{fig:cone_vs_ball}
\end{figure*}

\newpage

\bibliography{mybib}
\bibliographystyle{icml2020}

\end{document}